\definecolor{Gred}{RGB}{219, 50, 54}
\definecolor{Ggreen}{RGB}{60, 186, 84}
\definecolor{Gblue}{RGB}{72, 133, 237}
\definecolor{Gyellow}{RGB}{247, 178, 16}
\definecolor{ToCgreen}{RGB}{0, 128, 0}
\definecolor{myGold}{RGB}{231,141,20}
\definecolor{myBlue}{rgb}{0.19,0.41,.65}
\definecolor{myPurple}{RGB}{175,0,124}
\definecolor{niceRed}{RGB}{153,0,0}
\definecolor{niceRed}{RGB}{190,38,38}
\definecolor{blueGrotto}{HTML}{059DC0}
\definecolor{royalBlue}{HTML}{057DCD}
\definecolor{navyBlueP}{HTML}{0B579C}
\definecolor{limeGreen}{HTML}{81B622}
\definecolor{nicePink}{RGB}{247,83,148}
\def\compactify{\itemsep=0pt \topsep=0pt \partopsep=0pt \parsep=0pt}
\let\latexusecounter=\usecounter
\definecolor{myC}{rgb}{0, 255, 255}
\definecolor{myY}{rgb}{204, 204, 0}
\definecolor{myM}{rgb}{255, 0, 255}
\definecolor{secinhead}{RGB}{249,196,95}
\definecolor{lgray}{gray}{0.8}
\newtheorem{theorem}{Theorem} 
\newtheorem*{theorem*}{Theorem} 
\newtheorem*{proposition*}{Proposition} 
\newtheorem{lemma}{Lemma}
\newtheorem*{lemma*}{Lemma}
\newtheorem{claim}{Claim}
\newtheorem{proposition}{Proposition}
\newtheorem{corollary}{Corollary}
\newtheorem{question}{Question}
\newtheorem{definition}{Definition}
\newtheorem{remark}{Remark}
\renewcommand{\Pr}{\mathop{\bf Pr\/}}
\newcommand{\E}{\mathop{\bf E\/}}
\newcommand{\poly}{\textnormal{poly}}
\newcommand{\sgn}{\textnormal{sgn}}
\newcommand{\reals}{\mathbb R}
\newcommand{\nats}{\mathbb N}
\newcommand{\eps}{\varepsilon}
\newcommand{\TV}{\mathrm{TV}}
\newcommand{\calB}{\mathcal{B}}
\newcommand{\calD}{\mathcal{D}}
\newcommand{\calE}{\mathcal{E}}
\newcommand{\calF}{\mathcal{F}}
\newcommand{\calH}{\mathcal{H}}
\newcommand{\calI}{\mathcal{I}}
\newcommand{\calO}{\mathcal{O}}
\newcommand{\calP}{\mathcal{P}}
\newcommand{\calR}{\mathcal{R}}
\newcommand{\calS}{\mathcal{S}}
\newcommand{\calX}{\mathcal{X}}
\newcommand{\calY}{\mathcal{Y}}
\def\<{\langle}
\def\>{\rangle}
\def\wt{\widetilde}
\def\wh{\widehat}
\def\dtv{\mathrm{TV}}
\def\err{\mathrm{err}}
\def\dtv{d_{\mathrm{TV}}}
\renewenvironment{abstract}{%
	\if@twocolumn
	\section*{\abstractname}%
	\else 
	\begin{center}%
		{\bfseries \large\abstractname\vspace{\z@}}
	\end{center}%
	\quotation
	\fi}
{\if@twocolumn\else\endquotation\fi}
\title{Statistical Indistinguishability of Learning Algorithms}
\author{
\begin{tabular}{cc}	
        \begin{tabular}{c}
        \text{Alkis Kalavasis} \\
		 National Technical University of Athens \\
		\url{kalavasisalkis@mail.ntua.gr}
        \end{tabular}
        & 	
		\begin{tabular}{c}
		\text{Amin Karbasi} \\
		 Yale University, Google Research \\
		\url{amin.karbasi@yale.edu}
		\end{tabular}
		\\
		\\
		\begin{tabular}{c}
		\text{Shay Moran} \\
		 Technion, Google Research \\
		\url{smoran@technion.ac.il}
		\end{tabular}
		 & 
		 \begin{tabular}{c}
		 \text{Grigoris Velegkas} \\
		 Yale University \\
		\url{grigoris.velegkas@yale.edu}\\
		\end{tabular}
		\end{tabular}
}
\date{\today}
\begin{document}

\maketitle
\date{}
\begin{abstract}

When two different parties use the same learning rule on their own data, how can we test whether the distributions of the two outcomes are similar? 
In this paper, we study the similarity of outcomes of learning rules 
through the lens of the
Total Variation (TV) distance of distributions.
We say that a learning rule is TV indistinguishable if the expected TV distance between the posterior distributions of its outputs, executed on two training data sets drawn independently from the same distribution, is small.  
We first investigate the learnability of hypothesis classes using TV indistinguishable learners. 
Our main results are information-theoretic equivalences between TV indistinguishability and existing algorithmic stability notions such as replicability and approximate differential privacy.
Then, we provide statistical amplification and boosting algorithms for TV indistinguishable learners. 
\end{abstract}

\section{Introduction}
Lack of replicability in experiments has been a major issue, usually referred to as the \emph{reproducibility crisis}, in many scientific areas such as biology and chemistry. 
Indeed, the results of a survey that appeared in Nature \cite{baker20161} are very worrisome: more than $70\%$ of the researchers that participated in it could not replicate other researchers' experimental findings while over half of them were not able to even replicate their own conclusions. In the past few years the number of scientific publications in the Machine Learning (ML) community has increased exponentially.
Significant concerns and questions regarding replicability have also recently been raised in the area of ML. This can be witnessed
by the establishment of various reproducibility challenges in major 
ML conferences such as the ICLR $2019$ Reproducibility Challenge \cite{pineau2019iclr}
and the NeurIPS $2019$ Reproducibility Program \cite{pineau2021improving}.

Reproducibility of outcomes in scientific research is a necessary
condition to ensure that the conclusions of the studies
reflect inherent properties of the underlying population and are not 
an artifact of the methods that scientists used
or the random sample of the population that the study was conducted on. 
In its simplest form, it requires  that if two different groups
of researchers carry out an experiment using the same methodologies
but \emph{different} samples of the \emph{same} population, it better
be the case that the two outcomes of their studies are \emph{statistically
indistinguishable}. In this paper, we investigate this notion in the context of ML (cf. \Cref{def:general indist of outcomes}), and characterize for which learning problems statistically
indistinguishable learning algorithms exist. Furthermore, we show how statistical indistinguishability, as a property of learning algorithms, is naturally related to various notions of algorithmic stability such as replicability of experiments, and differential privacy.

While we mainly focus on the fundamental ML task of binary classification to make the 
presentation easier to follow,
many of our results extend to other statistical tasks (cf. \Cref{apx:general definition 
ind}). More formally, the objects of interest are
\emph{randomized} learning rules
$A : (\calX \times \{0,1\})^n \to \{0,1\}^\calX$. These learning rules
    take as input a sequence $S$ of $n$ pairs from $\calX \times \{0,1\}$, i.e., points from a domain $\calX$ along with their labels, and map
them to a binary classifier in a randomized manner. We assume that this sequence $S$ is generated i.i.d. from a distribution $\calD$ on $\calX \times \{0,1\}$. We denote by $\{0,1\}^\calX$ the space of binary classifiers and by $A(S)$ 
the random variable that corresponds 
to the output of $A$ on input $S$\footnote{We identify with $A(S)$ the posterior distribution of $A$ on input~$S$ when there is no confusion.}. We also adopt a more algorithmic viewpoint for $A$ where we denote it as a \emph{deterministic} mapping $(\calX \times \{0,1\})^n \times \calR \to \{0,1\}^\calX$, 
which takes as input a training set $S$ of size $n$
made of instance-label pairs and a random string $r \sim \calR$ (we use $\calR$ for both the probability space and the distribution) corresponding to the algorithm's 
\emph{internal randomness}, and outputs a hypothesis $A(S,r)\in\{0,1\}^\calX.$ 
Thus, $A(S)$ corresponds to a random variable while $A(S,r)$ is a deterministic object. To make the distinction clear, 
we refer to $A(S)$ as (the image of) a \emph{learning rule}
and to $A(S,r)$ as (the image of) a \emph{learning algorithm}.

\paragraph{Indistinguishability.} We measure how much two distributions over hypotheses differ using some notion of \textbf{statistical dissimilarity} $d$,  
which can belong to a quite general class; we could let it be either an Integral Probability Metric (IPM) (e.g., TV or Wasserstein distance, see \Cref{def:ipm}) or an $f$-divergence (e.g., KL or Rényi divergence). For further details, see \cite{sriperumbudur2009integral}.
We are now ready to introduce the following general definition of \emph{indistinguishability of learning rules}.
\begin{definition}[Indistinguishability]
\label{def:general indist of outcomes}
    Let $d$ be 
    a statistical dissimilarity measure.
    A learning rule $A$ is
    $n$-sample 
    $\rho$-indistinguishable with respect to $d$
    if for any distribution $\calD$ over inputs 
    and two independent sets $S,S' \sim \calD^n$ it holds that
    \[
        \E_{S, S' \sim \calD^n} \left[d\left(A(S ),A(S')\right)\right] \leq \rho \,.
    \]
\end{definition}
In words, \Cref{def:general indist of outcomes} states that the expected dissimilarity of the outputs of the
learning rule when executed on two training sets that are drawn independently 
from $\calD$ is small. 
We view \Cref{def:general indist of outcomes}
as a general information-theoretic way to study indistinguishability as a property of learning rules. In particular, it captures the property that the distribution of outcomes
of a learning rule being
\emph{indistinguishable} under the resampling of its inputs. 
\Cref{def:general indist of outcomes} provides the flexibility
to define the dissimilarity measure according to the needs of
the application domain.
For instance, it captures as a special case the global stability property \cite{bun2020equivalence} (see \Cref{apx:general definition
ind}).
\paragraph{Replicability.} 
Since the issue of replicability is omnipresent in scientific disciplines it is important to design a formal framework through which we can argue about the replicability of experiments.
Recently, various works proposed algorithmic definitions of replicability
in the context of learning from samples \cite{impagliazzo2022reproducibility,bun2023stabilityIsStablest}, 
optimization \cite{ahn2022reproducibility}, bandits \cite{esfandiari2022reproducible} and clustering \cite{esfandiari2023replicable},
and designed algorithms that are provably replicable under these definitions. A notion that is closely related to \Cref{def:general indist of outcomes} was introduced by \cite{impagliazzo2022reproducibility}: reproducibility
or replicability\footnote{This property was originally defined as ``reproducibility'' in \cite{impagliazzo2022reproducibility}, but later it was pointed out that the correct term for this definition is ``replicability'' (see also \cite{bun2023stabilityIsStablest}). We use the term replicability throughout our work.} of learning algorithms is defined as follows:
\begin{definition}
[Replicability \cite{impagliazzo2022reproducibility}]
\label{def:replicability}
Let $\calR$ be a distribution over random strings.
A learning algorithm $A$ is 
$n$-sample 
$\rho$-replicable if 
for any distribution $\calD$ over inputs 
and two independent sets 
$S, S' \sim \calD^n$ it holds that
\[
\Pr_{S,S'\sim \calD^n, r \sim \calR}[A(S, r) \neq A(S', r) ] \leq \rho\,.
\]
\end{definition}

The existence of a shared random seed $r$ in the definition of replicability is one of the main distinctions between \Cref{def:general indist of outcomes} and \ref{def:replicability}. This shared random string can be seen as a way to
achieve a \emph{coupling} (see \Cref{def:coupling}) between two executions of the algorithm $A$. 
An interesting aspect of this definition is that replicability is verifiable; replicability under \Cref{def:replicability} can be
tested using polynomially many samples, random seeds $r$ and
queries to $A$.
We remark that the work of \cite{ghazi2021user} introduced the closely related 
notion of pseudo-global stability (see \Cref{def:pseudo-global stability}); 
the definitions of replicability and pseudo-global stability are equivalent up to polynomial factors in the parameters.

\paragraph{Differential Privacy.}
The notions of algorithmic indistinguishability and replicability that we have discussed so far have close connections with the classical definition of approximate differential privacy \cite{dwork2014algorithmic}. For $a,b, \eps,\delta \in [0,1]$, let $a \approx_{\eps, \delta} b$ denote the statement $a \leq e^\eps b + \delta$ and $b \leq e^\eps a + \delta$. We say that two probability distributions $P, Q$ are $(\eps, \delta)$-indistinguishable if $P(E) \approx_{\eps, \delta} Q(E)$ for any measurable event $E$. 
\begin{definition}
[Approximate Differential Privacy \cite{dwork2006our}]
\label{def:dp}
A learning rule $A$
is an $n$-sample $(\eps,\delta)$-differentially private if for any pair of samples $S, S' \in (\calX \times \{0,1\})^n$ that disagree on a single example, the induced posterior distributions $A(S)$ and $A(S')$ are $(\eps, \delta)$-indistinguishable.
\end{definition}
We remind the reader that, in the context of PAC learning, any hypothesis class $\calH$ can be PAC-learned by an approximate differentially-private algorithm if and only if it has a finite Littlestone dimension $\mathrm{Ldim}(\calH)$ (see \Cref{definition:littlestone-dimension}), i.e., there is a qualitative equivalence between online learnability and private PAC learnability \cite{alon2019private,bun2020equivalence,ghazi2021sample,alon2022private}.

{\paragraph{Broader Perspective.} Our work lies in the fundamental research direction of responsible ML. Basic concepts in this area, such as DP, replicability, and different forms of fairness, are formalized using various forms of stability. Therefore, it is natural and important to formally study the interrelations between different types of algorithmic stability. Our main purpose is to study statistical indistinguishability and replicability as properties of algorithms and, under the perspective of stability, investigate rigorous connections with DP. 
We view both replicability and DP as two fundamental blocks in the area of responsible and reliable ML. Hence, we believe that establishing formal connections between a priori not clearly related notions of “reliability” is a way to increase our understanding towards the design of responsible ML systems.}

\subsection{TV Indistinguishable Learning Rules}
\label{sec: tv rules}
As we discussed, our \Cref{def:general indist of outcomes} captures the property of a learning rule having
\emph{indistinguishable} outcomes under the resampling of its inputs from the same distribution.
In what follows, we instantiate 
\Cref{def:general indist of outcomes} with $d$ being the total variation (TV)    distance, probably the most well-studied notion of statistical distance in theoretical computer science. Total variation distance between two distributions $P$ and $Q$ over the probability space $(\Omega, \Sigma_\Omega)$ can be expressed as
\begin{align}
\label{eq:tv}
\begin{split}
    \dtv(P, Q)
&=
\sup_{A \in \Sigma_\Omega} P(A) - Q(A)\\
& = \inf_{(X,Y) \sim \Pi(P,Q)} \Pr[X \neq Y]\,,
\end{split}
\end{align}
where the infimum is over all couplings between $P$ and $Q$ so that the associated marginals are $P$ and $Q$ respectively. A \emph{coupling} between the distributions $P$ and $Q$ is a set of variables $(X,Y)$ on some common
probability space with the given marginals, i.e., $X \sim P$ and $Y \sim Q$. We think of a coupling as a construction of random variables $X,Y$ with prescribed laws. 

Setting $d = \dtv$ in \Cref{def:general indist of outcomes}, we get the following natural definition. For simplicity, we use the term TV indistinguishability to capture indistinguishability with respect to the TV distance.

\begin{definition}
[Total Variation Indistinguishability]
\label{def:TV stability}
A learning rule $A$ is $n$-sample $\rho$-$\TV$ indistinguishable if for any distribution over inputs $\calD$ and two independent sets $S, S' \sim \calD^n$ it holds that
\[
\E_{S,S' \sim \calD^n} [ \dtv( A(S), A(S') ) ] \leq \rho\,.
\]
\end{definition}


For some equivalent definitions, we refer to \Cref{sec:alternative-definitions}.
{Moreover, for some extensive discussion about the motivation of this definition, see \Cref{remark:motivation}.}
We emphasize that the notion of TV distance has very 
strong connections with statistical indistinguishability of distributions.
If two distributions $P$ and $Q$ are close in TV distance, then, intuitively, no statistical test
can distinguish whether an observation was drawn from $P$ or $Q$. 
In particular, if $\dtv(P,Q) = \rho$, then $\rho/2$ is the maximum advantage an analyst can achieve in determining whether a random sample $X$ came from $P$ or from $Q$ (where $P$ or $Q$
is used with probability 1/2 each).
In what follows, we focus on this particular notion
of statistical dissimilarity.

As a warmup, we start by proving a generalization
result for TV indistinguishable learners. Recall that if we \emph{fix}
some binary classifier we can show, using standard concentration bounds,
that its performance on a sample is close to its performance on the 
underlying population. However, when we train an ML algorithm using
a dataset $S$ to output a classifier $h$
we cannot just use the fact that it has small loss
on $S$ to claim that its loss on the population is small because
$h$ depends on $S$. The following result
shows that we can get such generalization bounds if $A$ is
a $\rho$-TV indistinguishable algorithm. We remark that a similar result regarding replicable algorithms
appears in \cite{impagliazzo2022reproducibility}. 
The formal proof, stated in a slightly 
more general way, is
in \Cref{app:gen}.

\begin{proposition}
[TV Indistinguishability Implies Generalization]\label{prop:ind implies generalization}
Let $\delta, \rho \in (0,1)^2$.
Let $\calD$ be a distribution over inputs
and $S = \{(x_i,y_i )\}_{i\in[n]}$ be a sample of size $n$ drawn i.i.d. from $\calD$. 
Let $h : \calX \to \{0,1\}$ be the output of an $n$-sample $\rho$-$\TV$ indistinguishable learning rule $A$ with input $S$. 
Then, with probability at least $1 - \delta - 4\sqrt{\rho}$ over $S$, 
it holds that,
\[
\left|\E_{h \sim A(S)}\left [L(h)\right] - \E_{h \sim A(S)}\left[\wh{L}(h) \right]\right|
\leq 
 \sqrt{\frac{\log(2/\delta)}{2n}} + \sqrt{\rho}\,,
\]
where $L(h) \triangleq \Pr_{(x,y) \sim \calD}[h(x) \neq y]$ and
$\wh{L}(h) \triangleq \frac{1}{n}\sum_{(x,y) \in S} 1\{h(x) \neq y\}.$
\end{proposition}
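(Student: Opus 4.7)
The plan is a ghost-sample argument: introduce a fresh independent set $S'\sim\calD^n$, compare $h\sim A(S)$ with $h'\sim A(S')$ through an optimal coupling, and then exploit the independence of $h'$ from $S$ to apply a standard concentration inequality to the ghost half.

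First, for each fixed pair $(S,S')$, let $(h,h')$ realise the optimal coupling of $A(S)$ and $A(S')$, so that $\Pr[h\neq h'\mid S,S']=\dtv(A(S),A(S'))$. Since $\wh L(h)\in[0,1]$ and equals $\wh L(h')$ whenever $h=h'$, this immediately gives $\bigl|\E_{h\sim A(S)}[\wh L(h)]-\E_{h'\sim A(S')}[\wh L(h')]\bigr|\le\dtv(A(S),A(S'))$, and by the same argument $\bigl|\E_{h\sim A(S)}[L(h)]-\E_{h'\sim A(S')}[L(h')]\bigr|\le\dtv(A(S),A(S'))$. The triangle inequality then yields
\[
\bigl|\E_{h\sim A(S)}[\wh L(h)-L(h)]\bigr|\;\le\;2\,\dtv(A(S),A(S'))\;+\;\bigl|\E_{h'\sim A(S')}[\wh L(h')-L(h')]\bigr|.
\]

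Next, control the residual expectation using independence. Define $g_{S'}(x,y)\triangleq\E_{h'\sim A(S')}[\bone\{h'(x)\neq y\}]\in[0,1]$. Since $S\perp S'$, Fubini gives $\E_{h'\sim A(S')}[L(h')]=\E_{(x,y)\sim\calD}[g_{S'}(x,y)]$, while $\E_{h'\sim A(S')}[\wh L(h')]=\tfrac1n\sum_{(x_i,y_i)\in S}g_{S'}(x_i,y_i)$ is an average of $n$ i.i.d.\ $[0,1]$-valued terms with exactly this mean. Hoeffding's inequality conditional on $S'$ therefore yields $\Pr_{S}\bigl[\,\bigl|\E_{h'\sim A(S')}[\wh L(h')-L(h')]\bigr|>\sqrt{\log(2/\delta)/(2n)}\,\bigr]\le\delta$, and marginalising over $S'$ preserves the $\delta$ bound.

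Finally, assemble. By TV indistinguishability $\E_{S,S'}[\dtv(A(S),A(S'))]\le\rho$, so Markov gives $\Pr_{S,S'}[\dtv(A(S),A(S'))>\sqrt{\rho}/2]\le 2\sqrt{\rho}$. A union bound over this event and the Hoeffding event then shows that with probability at least $1-\delta-2\sqrt{\rho}$ over $(S,S')$, $\bigl|\E_{h\sim A(S)}[\wh L(h)-L(h)]\bigr|\le\sqrt{\log(2/\delta)/(2n)}+\sqrt{\rho}$. Because the left-hand side depends only on $S$, this same probability lower bound transfers to $S$ alone, which is in fact stronger than the claimed $1-\delta-4\sqrt{\rho}$. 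There is no real obstacle in the argument; the only delicate point is ordering the steps correctly, namely applying the coupling bound pointwise in $(S,S')$ so that $\dtv(A(S),A(S'))$ can subsequently be controlled \emph{in expectation}, and applying Hoeffding only after the ghost-sample substitution has decoupled the hypothesis from the sample used for empirical evaluation.
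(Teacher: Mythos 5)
Your proof is correct, and it follows a genuinely different route from the paper's. The paper first passes to the \emph{fixed-prior} characterization of TV indistinguishability (via \Cref{clm:two-sided TV stability iff one-sided TV stability}), compares $A(S)$ against the sample-independent prior $\calP_\calD$, applies Markov separately to the $\wh L$ and $L$ discrepancies, and finally applies Hoeffding to the prior term. You instead work directly with a ghost sample $S'$ and the coupling characterization of TV distance, needing no fixed-prior lemma: the role of the sample-independent object is played by $A(S')$ rather than $\calP_\calD$. Structurally the two arguments are parallel (decouple the hypothesis from $S$ via TV closeness to something independent of $S$, then apply Hoeffding), but your version is more self-contained and, because it applies Markov once to $\dtv(A(S),A(S'))$ rather than twice to the separate differences, it yields the tighter failure probability $1-\delta-2\sqrt\rho$ (the paper's $4\sqrt\rho$ arises from splitting the Markov application, which is avoidable even in their framework). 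Two small remarks: the ``optimal coupling'' need not be attained in full generality, but you do not actually need attainment since the bound $|\E_{h\sim A(S)}[f(h)]-\E_{h'\sim A(S')}[f(h')]|\le\dtv(A(S),A(S'))$ for any $f$ bounded in $[0,1]$ follows directly from the IPM characterization of total variation (which is how the paper phrases it); and the final observation that the event depends only on $S$, so the probability over $(S,S')$ transfers to $S$ alone, is exactly right and is the step one must not skip.
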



\subsection{Summary Of Contributions}
In this work, we investigate the connections between TV indistinguishability, replicability and differential privacy.


\begin{itemize}
    \item In \Cref{sec:ind and replicability}, we show that TV indistinguishability and replicability are equivalent. This equivalence holds for countable domains\footnote{We remark that the direction replicability implies TV indistinguishability holds for general domains.} and extends to general statistical tasks (cf. \Cref{app:general equivalence}). 

We remark that our transformations between replicable and TV indistinguishable learners do not change the (possibly randomized) input $\to$ output map which is induced by the learner; i.e., given a TV indistinguhishable learner $\mathcal{A}$, we transform it to a replicable learner $\mathcal{A}'$ such that $\mathcal{A}(S)$ and $\mathcal{A}'(S)$ are the same distributions over output hypotheses for every input sample $S$.

At this point we would like to highlight a subtle difference between replicability and other well studied notions of stability that arise in learning theory such as differential privacy, TV indistinguishability, one-way perfect generalization, and others.
The latter notions of stability depend only on the input $\to$ output map which is induced by the learner. In contrast, the definition of replicability has to do with the way the algorithm is implemented (in particular the way randomness is used). 
In other words, the definition of replicability enables having two learning rules $\mathcal{A}',\mathcal{A}''$ that compute exactly the same input $\to$ output map, but such that $\mathcal{A}'$ is replicable and $\mathcal{A}''$ is not. Thus, our equivalence suggests an interpretation of TV indistinguishability as an abstraction/extension of replicability that only depends on the input-output mechanism.

    

    \item In \Cref{sec:tv and dp}, we show that TV indistinguishability and $(\eps,\delta)$-DP are statistically equivalent. This equivalence holds for countable\footnote{We remark that the direction $(\eps, \delta)$-DP implies TV indistinguishability holds for general domains.} domains in the context of PAC learning. As an intermediate result, we also show that replicability and $(\eps,\delta)$-DP are statistically
    equivalent in the context of PAC learning, and this holds for general domains.
    \item In \Cref{sec:boosting}, we provide statistical amplification and boosting algorithms for TV indistinguishable learners for countable domains. En route, we improve the sample
    complexity of some routines provided in \cite{impagliazzo2022reproducibility}.
\end{itemize}

\subsection{Related Work}
\label{sec:related work}

Our work falls in the research agenda of replicable algorithm design, which was initiated by~\cite{impagliazzo2022reproducibility}.
In particular, \cite{impagliazzo2022reproducibility} introduced the notion of replicable learning algorithms, established that any statistical query algorithm can be made replicable, and designed replicable algorithms for various applications such as halfspace learning. Next, \cite{ahn2022reproducibility} studied reproducibility in optimization and \cite{esfandiari2022reproducible} provided replicable bandit algorithms.

The most closely related prior work to ours is the recent paper by \cite{bun2023stabilityIsStablest}.
In particular, as we discuss below in greater detail, an alternative proof of the equivalence between TV indistinguishability, replicability, and differential privacy follows from~\cite{bun2023stabilityIsStablest}. In contrast with our equivalence, the transformations by~\cite{bun2023stabilityIsStablest} are restricted to finite classes. 
On the other hand, \cite{bun2023stabilityIsStablest} give a constructive proof whereas our proof is purely information-theoretic. 

In more detail, \cite{bun2023stabilityIsStablest} establish a variety of equivalences between different notions of stability such as differential privacy, replicability, and one-way perfect generalization, and the latter contains TV indistinguishability as a special case:


\begin{definition}[(One-Way) Perfect Generalization~\cite{cummings2016adaptive,bassily2016typicality}]
    A learning rule $A:\calX^n \rightarrow \calY$ is $(\beta,\varepsilon,\delta)$-perfectly
    generalizing if, for every distribution $\calD$ over $\calX$, there exists a
    distribution $\calP_\calD$ such that, with probability at least $1-\beta$ over
    $S$ consisting of $n$ i.i.d. samples from $\calD$, and every set of outcomes 
    $\calO \subseteq \calY$
    \[
        e^{-\varepsilon}\left(\Pr_{\calP_\calD}[\calO] -\delta\right) \leq 
        \Pr[A(S) \in \calO] \leq e^\eps\Pr_{\calP_\calD}[\calO] +\delta \,.
    \]
    Moreover, $A$ is $(\beta, \eps, \delta)$-one-way perfectly generalizing if $
        \Pr[A(S) \in \calO] \leq e^\eps\Pr_{\calP_\calD}[\calO] +\delta$. 
\end{definition}
Note indeed that plugging $\eps=0$ to the definition of perfect generalization specializes the above definition to an equivalent variant of TV indistinguishability (see also \Cref{def:one-sided TV stability}).
\cite{bun2023stabilityIsStablest} derives an equivalence between replicability and one-way perfect generalization with $\eps>0$. However, in a personal communication they pointed out to us that their argument also applies to the case $\eps=0$, and hence to TV indistinguishability. In more detail, an intermediate step of their proof shows
that any $(\beta, \eps,\delta)$-perfectly generalizing algorithm $A$ is
also $(\beta, 0, 2\varepsilon + \delta)$-perfectly generalizing, which
is qualitatively equivalent with our main definition (see~\Cref{def:TV stability}).
As noted earlier our proof applies more generally to infinite countable domains but
is
non-constructive.

\paragraph{Differential Privacy.}  
Differential privacy \cite{dwork2008differential,dwork2010boosting,vadhan2017complexity,dwork2014algorithmic} is quite closely related to replicability.
The first connection between replicability and DP in the context of PAC learning was, implicitly, established by \cite{ghazi2021user} (for finite domains $\calX$), via the technique of correlated sampling (see \Cref{sec:coupling}) and the notion of pseudo-global stability (which is equivalent to replicability as noticed by \cite{impagliazzo2022reproducibility}):
\begin{definition}
[Pseudo-Global Stability \cite{ghazi2021user}]
\label{def:pseudo-global stability}
Let $\calR$ be a distribution over random strings.
A learning algorithm $A$ is said to be $n$-sample $(\eta, \nu)$-pseudo-globally stable if for any distribution $\calD$
there exists a hypothesis $h_r$ for every $r \in \mathrm{supp}(\calR)$ (depending on $\calD$) such
that
\[
    \Pr_{r \sim \calR} \left[\Pr_{S \sim \calD^n}[ A(S, r) = h_r ] \geq \eta \right] \geq \nu\,.
\] 
\end{definition}
The high-level connection between these notions appears to boil down to the notion of stability \cite{bousquet2002stability,poggio2004general,dwork2015preserving,abernethy2017online,bassily2016algorithmic,livni2020limitation} (see \cite{alon2022private} for further details between stability, online learnability and differential privacy). 
In particular, \cite{ghazi2021sample} showed that a class of finite Littlestone dimension admits a list-globally stable learner (see Theorem 18 in \cite{ghazi2021user}). The work of \cite{ghazi2021user} (among other things) showed (i) how to perform a reduction from list-global stability to pseudo-global stability via correlated sampling in finite domains (see Theorem 20 in \cite{ghazi2021user}) and (ii) how to perform a reduction from pseudo-global stability to approximate DP via DP selection (see Theorem 25 in \cite{ghazi2021user}).
We highlight that this equivalence between differential privacy
and replicability for finite domains was made formal by \cite{bun2023stabilityIsStablest} and was extended to arbitrary statistical tasks.

\paragraph{TV Stability.}
The definition of TV indistinguishability that we propose has close connections with the definition of TV stability.
This notion has appeared in the context of adaptive data analysis.
The work of \cite{bassily2016algorithmic} studied the following problem: suppose there is an unknown distribution $P$ and a set $S$ of $n$ independent samples drawn i.i.d. from $P$. The goal is to design an algorithm that, with input $S$, will accurately answer a sequence
of adaptively chosen queries about the unknown distribution $P$. The main question is how many samples must
one draw from the distribution, as a function of the type of queries, the number of queries, and the desired level of accuracy
to perform well? \cite{bassily2016algorithmic} provide various results that rely on the connections between algorithmic stability, differential privacy and generalization. To this end, they think of differential privacy as max-KL stability and study the performance of other notions of stability such as TV stability. Crucially, in their definition, TV stability considers any pair of neighboring datasets $S, S'$ and not two independent draws from $P$. More concretely, they propose
the following definition.
\begin{definition}
[Total Variation Stability \cite{bassily2016algorithmic}]
\label{def:old TV stability}
    A learning rule $A$ is $n$-sample $\rho$-$\TV$ stable if
    for any pair of samples $S, S' \in (\calX \times \{0,1\})^n$ that disagree on a single example, it holds that $
         \dtv(A(S),A(S')) \leq \rho$.
\end{definition}
We underline that for any constant $\rho$\footnote{In fact, even for $\rho \geq 1/n^c, 0 < c < 1.$} it is not challenging to obtain a $\rho$-TV stable 
algorithm in the learning setting we are interested in. It suffices
to just sub-sample a small enough subset of the data. Hence, any
class with finite VC dimension is TV stably learnable under this definition.
As it is evident from our results (cf. \Cref{prop:stab to dp}), this is in stark contrast with the definition we propose. We remind the readers that just
sub-sampling the dataset is not enough to achieve differential privacy. This is because
it is required that $\delta = o(1/n).$ We remark that the definition of total variation stability à la
 \cite{bassily2016algorithmic} also appears in \cite{raginsky2016information}.

The above definition of TV stability has close connections to machine unlearning. This problem refers to the ability of a user to delete their data that were used to train a ML algorithm. When this happens, the machine learning algorithm
has to move to a state as if it had never used that data for training, hence the
term \emph{machine unlearning}. One can see that \Cref{def:old TV stability} is 
suitable for this setting since it states that if one point of the dataset
is deleted, the distribution of the algorithm should not be affected very much. For
convex risk minimization problems, \cite{ullah2021machine} design TV stable algorithms based on noisy Stochastic
Gradient Descent (SGD). Such approaches lead to the design of efficient unlearning algorithms, which are based on sub-sampling the dataset and constructing a maximal coupling of Markov chains for the noisy
SGD procedure.

\paragraph{KL Stability and PAC-Bayes.}
In \Cref{sec:alternative-definitions} we provide some equivalent definitions to TV indistinguishability. In particular, \Cref{def:one-sided TV stability} has connections with the line of work that studies
distribution-dependent generalization bounds. To be more precise, if
instead of the TV distance we use the KL divergence
to measure the distance between the prior and the output of the algorithm
we get the definition of the quantity that is used to derive PAC-Bayes 
generalization bounds. Interestingly, \cite{livni2020limitation}
show that the PAC-Bayes framework cannot be used to derive distribution-free PAC learning bounds for
classes that have infinite Littlestone dimension; they show that for any algorithm that learns 1-dimensional linear classifiers (thresholds),
there exists a realizable distribution for which PAC-Bayes bounds are trivial. 
Recently, a similar PAC-Bayes framework was proposed in \cite{amit2022integral},
where the KL divergence
is replaced with a general family of Integral Probability Metrics (cf. \Cref{def:ipm}).

\paragraph{Probably Eventually Correct Learning.} The work of \cite{malliaris2022unstable} introduced
the \emph{Probably Eventually Correct} (PEC) model of
learning. In this model, a learner outputs the same hypothesis\footnote{Except maybe for a subset of $\calX$ that has measure zero under the data-generating distribution.}, with probability one, after a uniformly bounded number of revisions. Intuitively, this corresponds to the property that the global stability parameter is close to $1$. Interestingly, prior work on global stability \cite{bun2020equivalence,ghazi2021sample} had characterized \emph{Littlestone classes} as
being PAC learnable by an algorithm which outputs some fixed hypothesis with nonzero probability. However, the
frequency of this hypothesis was typically very small and its loss was a priori non-zero. \cite{malliaris2022unstable} give a new characterization to Littlestone classes by identifying them with the classes that can be PEC learned in a stable fashion. Informally, this means that
the learning rule for $\calH$ stabilizes on some hypothesis after changing
its mind at most $L$ times, where $L$ is the Littlestone dimension of $\calH$ (cf. \Cref{definition:littlestone-dimension}). Interestingly, \cite{malliaris2022unstable} manage to show 
that the well-known \emph{Standard Optimal Algorithm} (SOA) \cite{littlestone1988learning} is a stable PEC learner, using tools from the theory of universal learning \cite{bousquet2021theory,bousquet2022fine,kalavasis2022multiclass,hannekeuniversal}. Moreover, they list various
different notions of algorithmic stability and
show that they all have something in common: a class $\calH$ is learnable
by such learners if and only if its Littlestone dimension is finite.
Our main result shows that, indeed, classes that are learnable 
by TV indistinguishable learners fall into that category.

\section{TV Indistinguishability and Replicability}\label{sec:ind and replicability}

Our information-theoretic definition of TV indistinguishability
seems to put weaker restrictions on learning rules
than the notion of 
replicability in two ways: (i) it allows for \emph{arbitrary} couplings between the two executions
    of the algorithm (recall the coupling definition of TV distance, see Eq.\eqref{eq:tv}), and, (ii) it allows for \emph{different} couplings between every pair of datasets
    $S, S'$ (the optimal coupling in the definition of TV distance will depend on $S, S'$ of \Cref{def:TV stability}).
In short, our definition allows for \emph{arbitrary data-dependent} couplings, instead of just sharing the randomness 
across two executions. TV indistinguishability can be viewed as a statistical generalization of replicability (cf. \Cref{def:replicability}) since it describes a property of \emph{learning rules}
rather than \emph{learning algorithms}. 

In this section, we will show that TV indistinguishability and replicability are (perhaps surprisingly) equivalent in a rather
strong sense: under a mild measure-theoretic condition, every TV indistinguishable algorithm
can be converted into an \emph{equivalent} replicable one by \emph{re-interpreting}
its internal randomness.
This will be made formal shortly. 

We start by showing that any replicable algorithm is TV indistinguishable.

\begin{theorem}
[Replicability $\Rightarrow$ TV Indistinguishability]
\label{thm:replicability-implies-TV-ind}
\label{fact:replicability implies tv stability}
If a learning rule $A$ is $n$-sample $\rho$-replicable, 
then it is also $n$-sample $\rho$-$\TV$ indistinguishable.
\end{theorem}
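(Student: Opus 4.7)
The plan is to exploit the coupling characterization of TV distance recalled in Equation~\eqref{eq:tv}: for any two distributions $P,Q$,
$$\dtv(P,Q) = \inf_{(X,Y)\sim \Pi(P,Q)} \Pr[X \neq Y].$$
The key observation is that the shared random seed $r$ in the definition of replicability automatically induces a coupling between the two distributions $A(S)$ and $A(S')$. More precisely, fix any pair of samples $S, S'$ and view $A$ as the deterministic map $(S,r) \mapsto A(S,r)$. Drawing a single $r \sim \calR$ and outputting the pair $(A(S,r), A(S',r))$ produces a random variable whose first marginal is distributed as $A(S)$ and whose second marginal is distributed as $A(S')$. Hence this is a valid coupling of the distributions $A(S)$ and $A(S')$.

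By the coupling formula,
$$\dtv(A(S), A(S')) \;\leq\; \Pr_{r \sim \calR}\bigl[A(S,r) \neq A(S',r)\bigr]$$
for every fixed $S, S'$. The next step is to take expectation over $S, S' \sim \calD^n$ and apply Fubini (everything is nonnegative) to swap the expectation over $(S,S')$ with the probability over $r$:
$$\E_{S, S' \sim \calD^n}\!\bigl[\dtv(A(S), A(S'))\bigr] \;\leq\; \E_{S,S' \sim \calD^n}\!\Bigl[\Pr_{r \sim \calR}[A(S,r) \neq A(S',r)]\Bigr] \;=\; \Pr_{S, S' \sim \calD^n,\, r \sim \calR}\!\bigl[A(S,r) \neq A(S',r)\bigr].$$
The final quantity is bounded by $\rho$ by the assumption that $A$ is $n$-sample $\rho$-replicable, which completes the proof.

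There is essentially no technical obstacle here; the entire argument is conceptual rather than computational. The main point to articulate is that sharing a random string is a \emph{particular} coupling of the two posterior distributions, and TV distance is the infimum over \emph{all} couplings of the disagreement probability. Consequently TV indistinguishability is a relaxation of replicability, because the former permits arbitrary (possibly data-dependent) couplings whereas the latter is restricted to the coupling obtained via a common seed. No measurability subtlety arises in this direction of the equivalence, since we only need the indicator $\bone\{A(S,r) \neq A(S',r)\}$ to be jointly measurable, which is standard under the usual product measurable structure on $(\calX \times \{0,1\})^n \times (\calX \times \{0,1\})^n \times \calR$.
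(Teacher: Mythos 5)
Your proof is correct and follows essentially the same approach as the paper: both exploit the coupling characterization of total variation distance and observe that sharing the random seed $r$ across the two executions is a particular coupling of $A(S)$ and $A(S')$, then take expectation over $S,S'$ and invoke the replicability bound. The added remark on measurability of the indicator is a minor clarification the paper leaves implicit, but the argument is the same.
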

\begin{proof}
Fix some distribution $\calD$ over inputs.
Let $A$ be $n$-sample $\rho$-replicable with respect to $\calD$. For the random variables $A(S), A(S')$ where $S, S' \sim \calD^n$ are two independent samples and using Eq.\eqref{eq:tv}, we have 
\begin{equation}
\label{eq:proof-eq1}
\E_{S,S' \sim \calD^n}[\dtv(A(S), A(S'))] =
\E_{S,S'\sim\calD^n}\left[\inf_{(h,h') \sim \Pi(A(S), A(S'))} \Pr[h \neq h']\right]\,.
\end{equation}
Let $\calR$ be the source of randomness that $A$ uses. The expected optimal coupling of Eq.\eqref{eq:proof-eq1} is at most $\E_{S,S'\sim \calD^n}\left[\Pr_{r \sim \calR} [A(S, r) \neq A(S', r)]\right]$. This
inequality follows from the fact that using shared randomness between
the two executions of $A$ is a particular way to couple the two random 
variables. To complete the proof, it suffices to notice that this upper bound is equal to
\[
\Pr_{S,S'\sim \calD^n, r \sim \calR}[A(S, r) \neq A(S', r)] \leq \rho\,.
\]
The last inequality follows since $A$ is $\rho$-replicable.
\end{proof}

We now deal with the opposite direction, i.e., 
we show that TV indistinguishability implies replicability. In order to be 
formal, we need to discuss some 
measure theoretic properties first.
Let us recall the definition of absolute continuity
for two measures. 
\begin{definition}[Absolute Continuity]
  Consider two measures $P,Q$ on a $\sigma$-algebra $\calB$ of subsets of $\Omega$.
We say that $P$ is absolutely continuous with respect to $Q$ if for any $E \in \calB$ such that $Q(E) = 0$, it holds that $P(E) = 0$. 
\end{definition}

Since the learning rules induce posterior distributions over hypotheses, this
definition extends naturally to such rules. 

\begin{definition}\label{def:absolutely cts learning rule for fixed D}
Given learning rule $A$, distribution over inputs $\calD$ and reference probability measure $\calP$, we say that $A$ is absolutely continuous with respect to $\calP$
on inputs from $\calD$ if, for almost every sample $S$ drawn from $\calD$, the posterior distribution $A(S)$ is absolutely continuous with respect to $\calP$. 
\end{definition}

In the previous definition, we fixed the data-generating distribution $\calD$. We next consider its distribution-free version.
\begin{definition}\label{def:absolutely cts learning rule for any D}
Given learning rule $A$ and reference probability measure $\calP$, we say that $A$ is absolutely continuous with respect to $\calP$ if, for any distribution over inputs $\calD$, $A$ is absolutely continuous with respect to $\calP$
on inputs from $\calD$. 
\end{definition}

If $\calX$ is finite, then one can take $\calP$ to be the uniform probability measure over $\{0,1\}^\calX$ and any learning rule is absolutely continuous with respect to $\calP$. 
We now show how we can find such a prior $\calP$ in the case where $\calX$ is 
countable.

\begin{claim}[Reference Probability Measure for Countable Domains]\label{clm:countable X reference measure}
    Let $\calX$ be a countable domain and $A$ be a learning rule. Then, there is 
    a reference probability measure $\calP$ such that $A$ is absolutely continuous with respect to $\calP$.
\end{claim}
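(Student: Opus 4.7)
The plan is to exploit the countability of $\calX$ in order to reduce the problem to finding a dominating measure for a \emph{countable} family of posteriors, at which point a simple convex combination suffices. First I would observe that when $\calX$ is countable, so is the instance-label space $\calX \times \{0,1\}$ and hence the sample space $(\calX \times \{0,1\})^n$. Therefore the collection
\[
\calQ \;\triangleq\; \{\, A(S) \,:\, S \in (\calX \times \{0,1\})^n \,\}
\]
is a countable family of probability measures on $\{0,1\}^\calX$. Enumerate it as $\{Q_i\}_{i \in I}$ with $I \subseteq \nats$.

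Next I would build $\calP$ as a strictly-positive convex combination of these posteriors. Pick any weights $w_i > 0$ with $\sum_{i \in I} w_i = 1$ (e.g., a renormalization of $w_i = 2^{-i}$), and set
\[
\calP(E) \;\triangleq\; \sum_{i \in I} w_i \, Q_i(E) \quad \text{for every measurable } E \subseteq \{0,1\}^\calX.
\]
A short check using Tonelli on the double sum shows $\calP$ is countably additive and hence a probability measure on $\{0,1\}^\calX$. Moreover, if $\calP(E) = 0$ then $w_i Q_i(E) = 0$ for every $i$, and since $w_i > 0$ this forces $Q_i(E) = 0$. Thus every $Q_i$ is absolutely continuous with respect to $\calP$.

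Finally, to obtain the distribution-free conclusion of \Cref{def:absolutely cts learning rule for any D}, I would note that for any data-generating distribution $\calD$ on $\calX \times \{0,1\}$, a sample $S \sim \calD^n$ almost surely lies in the countable set $(\calX \times \{0,1\})^n$, so $A(S) \in \calQ$ and is therefore absolutely continuous with respect to $\calP$ by the previous paragraph. I do not anticipate any serious obstacle: the only substantive point is recognizing that countability of $\calX$ makes the full family of posteriors countable, after which the mixture construction handles everything. The only minor care required is verifying that $\calP$ is a \emph{bona fide} probability measure (as opposed to just a finitely additive set function), which follows from the standard interchange of non-negative sums.
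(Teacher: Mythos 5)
Your proof is correct and follows essentially the same approach as the paper's: enumerate the countably many possible samples, form a strictly positive convex combination $\calP = \sum_i w_i A(S_i)$ of the corresponding posteriors, and observe that $\calP(E)=0$ forces $A(S_i)(E)=0$ for every $i$. The only cosmetic difference is that you make explicit the routine checks (countable additivity of the mixture; that any $S\sim\calD^n$ almost surely lies in the enumerated set) which the paper leaves implicit.
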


\begin{proof}
    Since $\calX$ is countable, for a fixed $n$, 
    we can consider an enumeration of all the $n$-tuples $\{S_i\}_{i \in \nats}$. 
    Then, we can take $\calP$ to be a countable mixture of these probability measures, i.e.,
    $\calP = \sum_{i = 1}^\infty \frac{1}{2^i}A(S_i)$. Notice that since, each
    $A(S_i)$ is a measure and $1/2^i > 0$ for $i \in \nats,$ and,
    $\sum_{i=1}^\infty 1/2^i = 1$, we have
    that $\calP$ is indeed a probability measure. We now argue that each $A(S_i)$ is absolutely
    continuous with respect to $\calP$.
    Assume towards contradiction that this is not the case and let $E \in \calB$
    be a set such that $\calP(E) = 0$ but $A(S_j)(E) \neq 0$, for some $j \in \nats.$
    Notice that $A(S_j)$ appears with coefficient $1/2^j > 0$ in the mixture
    that we consider, hence if $A(S_j)(E) > 0 \implies 1/2^j A(S_j)(E) > 0.$ Moreover
    $A(S_i)(E) \geq 0, \forall i \in \nats,$ which means that $\calP(E) > 0$, 
    so we get a contradiction. 
\end{proof}



We next define when two learning rules $A, A'$ are equivalent.

\begin{definition}[Equivalent Learning Rules]\label{def:equivalent 
learning rules}
    Two learning rules $A, A'$ are equivalent if for every 
    sample $S$ it holds that $A(S) = A'(S)$, i.e., for the same input they induce   
    the same distribution over hypotheses.
\end{definition}

In the next result, we show that for every TV indistinguishable algorithm $A$, that is absolutely continuous with respect to some reference probability measure $\calP$, there exists an equivalent learning rule which is replicable. 

\begin{theorem}
[TV Indistinguishability $\Rightarrow$ Replicability]
\label{lem:TV stability to replicability}
Let $\calP$ be a reference probability measure over $\{0,1\}^\calX$,
and let $A$ be a learning rule that is $n$-sample $\rho$-$\TV$ indistinguishable 
and absolutely continuous with respect to $\calP$.
Then, there exists an equivalent learning rule $A'$ 
that is $n$-sample $\frac{2\rho}{1+\rho}$-replicable. 
\end{theorem}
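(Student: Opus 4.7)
The plan is to re-implement $A$ via a \emph{correlated sampling} procedure (\`a la Broder--Holenstein) with respect to the reference measure $\calP$, so that the internal randomness is shared across executions on different inputs in the way that best couples the two posteriors. Concretely, I would let the internal randomness of $A'$ be an infinite sequence $r = ((h_i, u_i))_{i \geq 1}$ with $h_i \sim \calP$ i.i.d.\ and $u_i$ an auxiliary uniform (or exponential) variable independent of the $h_i$. On input $S$, absolute continuity of $A(S)$ with respect to $\calP$ yields a Radon--Nikodym derivative $f_S = dA(S)/d\calP$, and $A'(S,r)$ is defined by a deterministic rule on $(r, f_S)$ designed so that: (i) the marginal distribution of $A'(S,r)$ over $r$ is exactly $A(S)$, making $A'$ equivalent to $A$ in the sense of \Cref{def:equivalent learning rules}; and (ii) for every pair of inputs $S, S'$, the Broder--Holenstein coupling bound
\[
\Pr_{r}\bigl[A'(S,r) \neq A'(S',r)\bigr] \;\leq\; \frac{2\,\dtv(A(S), A(S'))}{1 + \dtv(A(S), A(S'))}
\]
holds pointwise in $(S,S')$.

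Given this pointwise bound, the replicability guarantee follows by integrating over $(S,S') \sim \calD^n \times \calD^n$ and invoking Jensen's inequality. The function $\phi(x) = \frac{2x}{1+x}$ is concave and increasing on $[0,1]$ (since $\phi'' = -4/(1+x)^3 < 0$ and $\phi' > 0$), so
\[
\Pr_{S,S',r}\bigl[A'(S,r) \neq A'(S',r)\bigr] \;\leq\; \E_{S,S'}\bigl[\phi(\dtv(A(S),A(S')))\bigr] \;\leq\; \phi\bigl(\E_{S,S'}[\dtv(A(S),A(S'))]\bigr) \;\leq\; \phi(\rho) \;=\; \frac{2\rho}{1+\rho}\,,
\]
where the final inequality uses the $\rho$-TV indistinguishability of $A$ together with the monotonicity of $\phi$. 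This yields exactly the desired $\frac{2\rho}{1+\rho}$-replicability of $A'$.

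The main obstacle is carrying out the correlated sampling cleanly when the Radon--Nikodym densities $f_S$ are not uniformly bounded and when the hypothesis space $\{0,1\}^\calX$ is uncountable. The classical rejection-sampling formulation of Broder--Holenstein (``output the first $h_i$ with $u_i M \leq f_S(h_i)$'' for an upper bound $M$ on $f_S$) requires such a uniform bound. I would address this by using the \emph{exponential-clock} / Gumbel-max variant: draw $u_i \sim \mathrm{Exp}(1)$ i.i.d.\ and set $i^\ast(S,r) = \argmin_i u_i / f_S(h_i)$, outputting $h_{i^\ast}$. A standard calculation shows that the marginal law of $h_{i^\ast}$ is $A(S)$ for any $A(S)$ absolutely continuous with respect to $\calP$, and that the same scheme applied to $S$ and $S'$ with shared $r$ satisfies the coupling inequality above. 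Measurability of $i^\ast$ follows from standard arguments, and absolute continuity of $A(S)$ with respect to $\calP$ (guaranteed for countable $\calX$ by \Cref{clm:countable X reference measure}) ensures $f_S$ is well defined. Once these measure-theoretic points are settled, the proof reduces to the Jensen computation above.
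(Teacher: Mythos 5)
Your high-level plan is exactly the paper's: re-implement $A$ by rejection sampling against the reference measure $\calP$ using shared randomness, obtain the pointwise bound $\Pr_r[A'(S,r) \neq A'(S',r)] \leq 2\dtv/(1+\dtv)$ for every $(S,S')$, and then conclude by Jensen together with the monotonicity and concavity of $\phi(x) = 2x/(1+x)$. That last part of your argument, and the identification of equivalence via matching marginals, are correct and coincide with the paper's proof.

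However, the specific correlated-sampling construction you propose to handle unbounded densities does not quite work. You draw an infinite i.i.d.\ sequence $(h_i, u_i)_{i\geq 1}$ with $h_i \sim \calP$ and $u_i \sim \mathrm{Exp}(1)$, and set $i^\ast(S,r) = \argmin_i u_i / f_S(h_i)$. Conditionally on $h_i$, the quantity $u_i/f_S(h_i)$ is $\mathrm{Exp}(f_S(h_i))$-distributed, so the $u_i/f_S(h_i)$ are i.i.d.\ continuous random variables whose essential infimum is $0$. Over a countable i.i.d.\ sequence the infimum is therefore $0$ almost surely and is not attained, so $i^\ast$ is undefined. (If you truncate to a finite window of indices the argmin exists, but then the law of $h_{i^\ast}$ is not exactly $A(S)$ — it is only $A(S)$ in the limit as the window grows.) The paper avoids this by following \cite{angel2019pairwise}: the shared randomness $\calR$ is a Poisson point process on $\{0,1\}^\calX \times \reals_+ \times \reals_+$ with intensity $\calP \times \mathrm{Leb} \times \mathrm{Leb}$, with points $(h,y,t)$; one accepts $(h,y,t)$ iff $y < f_S(h)$ and outputs the accepted $h$ with smallest time coordinate $t$. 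Because $\int f_S \, d\calP = 1$, the accepted points form a rate-$1$ Poisson process along the $t$-axis, so a first accepted point exists almost surely, its $h$-marginal is exactly $A(S)$, and the $\frac{2\delta}{1+\delta}$ guarantee of \Cref{thm:pairwise opt coupling protocol} applies. In short, you correctly identified the obstacle (no uniform bound $M$ on the densities) but the fix requires the extra time axis of a Poisson point process rather than an argmin over an i.i.d.\ sequence.
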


In this section, we only provide a sketch of the proof and we refer the reader to \Cref{sec:proof of tv stability to replicability} for the complete one. Let us first
state how we can use the previous result
when $\calX$
is countable.

\begin{corollary}\label{cor:TV stability to replicability
for countable X}
    Let $\calX$ be a countable domain and let $A$
    be a learning rule that is $n$-sample $\rho$-$\TV$
    indistinguishable. 
    Then, there exists an equivalent learning rule $A'$ 
that is $n$-sample $\frac{2\rho}{1+\rho}$-replicable. 
\end{corollary}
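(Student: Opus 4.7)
The plan is to derive the corollary as an immediate composition of the preceding \Cref{clm:countable X reference measure} and \Cref{lem:TV stability to replicability}. The only technical point to verify is that the reference probability measure produced by the former really does satisfy the absolute continuity hypothesis of the latter in the distribution-free sense (\Cref{def:absolutely cts learning rule for any D}), not merely pointwise on a prescribed data-generating distribution.

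First I would invoke \Cref{clm:countable X reference measure} on the given learning rule $A$ and the sample size $n$ to obtain a reference probability measure $\calP$ over $\{0,1\}^\calX$. Since $\calX$ is countable, the set of $n$-tuples of labeled examples $(\calX \times \{0,1\})^n$ is countable, and the enumeration used in the proof of the claim covers \emph{every} possible sample $S$, not just those supported by a particular $\calD$. Consequently, $A(S)$ is absolutely continuous with respect to $\calP$ for every $S$, and hence $A$ is absolutely continuous with respect to $\calP$ in the sense of \Cref{def:absolutely cts learning rule for any D}, regardless of the data-generating distribution $\calD$.

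Having established this, I would apply \Cref{lem:TV stability to replicability} directly: since $A$ is $n$-sample $\rho$-TV indistinguishable and absolutely continuous with respect to the reference measure $\calP$, there exists an equivalent learning rule $A'$ that is $n$-sample $\tfrac{2\rho}{1+\rho}$-replicable. Equivalence in the sense of \Cref{def:equivalent learning rules} guarantees that $A'(S) = A(S)$ as distributions for every sample $S$, so the input$\to$output map is preserved. No further computation or auxiliary construction is needed.

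There is essentially no obstacle here beyond bookkeeping; the only thing that could go wrong would be if \Cref{clm:countable X reference measure} secretly produced a measure that only controlled $A$ for some restricted family of inputs, but its proof manifestly handles all $n$-tuples simultaneously via the countable enumeration. Thus the corollary reduces to a one-line composition once the distribution-free absolute continuity is noted.
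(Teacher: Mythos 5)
Your proposal is correct and matches the paper's own proof, which states exactly that the corollary follows immediately by combining \Cref{clm:countable X reference measure} with \Cref{lem:TV stability to replicability}. Your extra care in verifying that the countable enumeration yields absolute continuity in the distribution-free sense of \Cref{def:absolutely cts learning rule for any D} is a sound (and implicitly assumed) observation, not a deviation.
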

The proof of this result follows immediately from \Cref{clm:countable X reference measure} and \Cref{lem:TV stability to replicability}.
\paragraph{Proof Sketch of \Cref{lem:TV stability to replicability}.} Let us consider a learning rule $A$ satisfying the conditions of \Cref{lem:TV stability to replicability}. Fix a distribution $\calD$ over inputs. 
The crux of the proof is that given two random variables $X, Y$ whose TV distance is bounded by $\rho$, we can couple them using only 
a carefully designed source of shared randomness $\calR$ so that the probability that the realizations of 
these random variables differ is at most $2\rho/(1+\rho).$  We can instantiate this observation with $X = A(S)$ and $Y = A(S')$. 
Crucially, in the countable $\calX$ setting, we can pick the shared randomness $\calR$ in a way that only depends on the learning rule $A$,
but not on $S$ or $S'$.
Let us now describe how this coupling works. 
Essentially, 
it can be thought of as a generalization of the von Neumann
rejection-based sampling which does not necessarily require that the distribution has bounded density.
Following~\cite{angel2019pairwise}, we pick $\calR$ to be a Poisson point process which generates points of the form $(h,y,t)$ with intensity\footnote{Roughly speaking, a point process is a (general) Poisson point
process with intensity $\lambda$ if (i) the number of points in a bounded Borel set $E$ is a Poisson random variable
with mean $\lambda(E)$ and (ii) the numbers of points in $n$ disjoint Borel sets forms $n$ independent random variables. For further details, we refer to \cite{last2017lectures}.} $\calP \times \mathrm{Leb} \times \mathrm{Leb}$, where $\calP$ is a reference probability measure with respect to which $A$ is absolutely continuous and $\mathrm{Leb}$ is the Lebesgue measure over $\reals_+$. Intuitively, $h \sim \calP$ lies in the hypotheses' space, $y$ is a non-negative real value and $t$ corresponds to a time value. 
The coupling mechanism performs \emph{rejection sampling} for each distribution we would like to couple (here $A(S)$ and $A(S'))$: it checks (in the ordering indicated by the time parameter) for each point $(h,y,t)$ whether $f(h) > y$ (i.e., if $y$ falls below the density curve $f$ at $h$) and accepts the first point that satisfies this condition. In the formal proof, there will be two density functions; $f$ (resp. $f'$) for the density function of $A(S)$ (resp. $A(S')$).
We also refer to \Cref{figure:poisson}. One can show (see \Cref{thm:pairwise opt coupling protocol}) that $\calR$ gives rise to a coupling between $A(S)$ and $A(S')$ under the condition that both measures are absolutely continuous with respect to the reference probability measure $\calP$.

This coupling technique appears in \cite{angel2019pairwise}.
We can then apply it and get
\[
    \Pr_{r \sim \calR}[A(S,r) \neq A(S',r)]
    \leq
    \frac{2\dtv(A(S),A(S'))}{1 + \dtv(A(S),A(S'))}\,.
\]
Taking the expectation with respect to the draws of $S,S'$,
we show (after some algebraic manipulations) that
$\Pr_{S,S'\sim \calD^n r \sim \calR}[A(S,r) \neq A(S',r)] \leq 2\rho/(1 + \rho)$. We conclude this section with the following remarks.

\begin{figure}[ht!]
    \centering
    \includegraphics[scale=0.8]{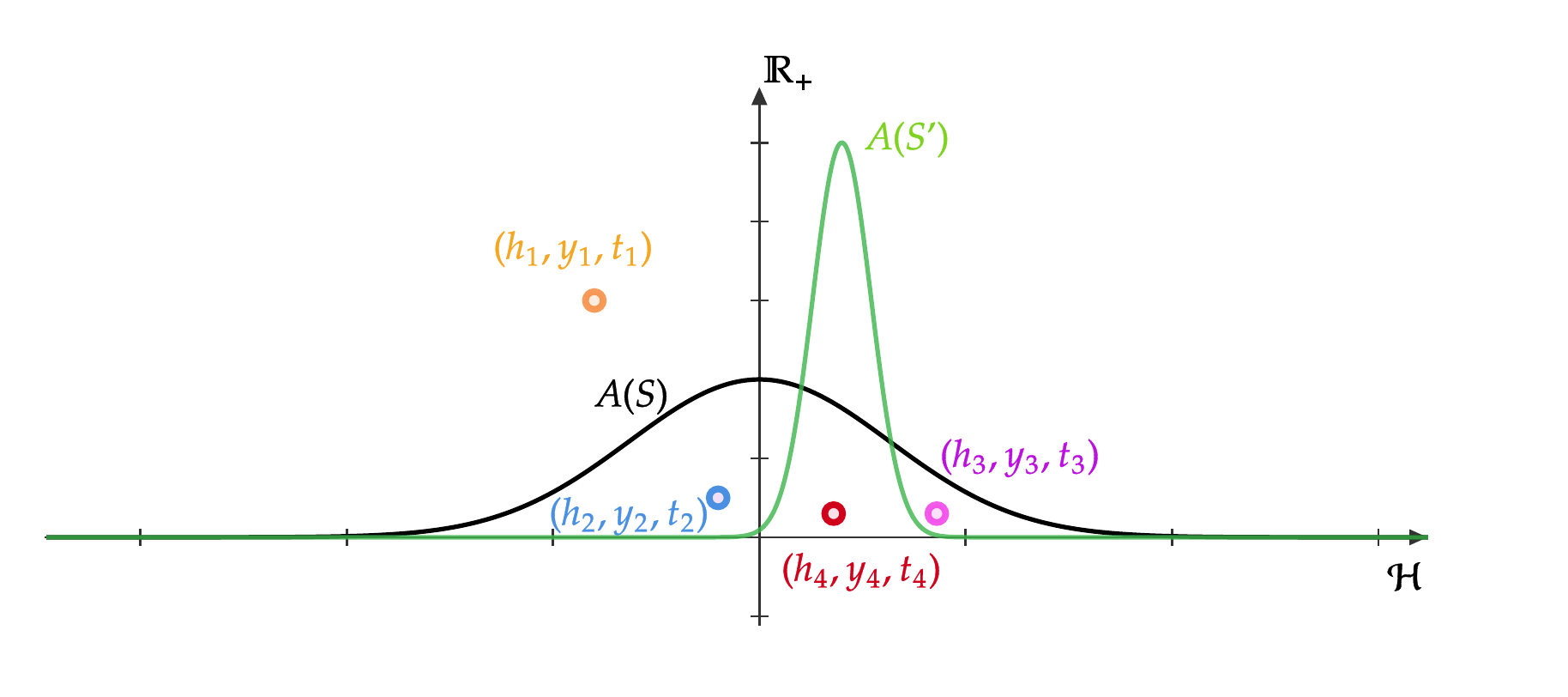}
    \caption{Our goal is to couple $A(S)$ with $A(S')$, where these two distributions are absolutely continuous with respect to the reference probability measure $\calP$. A sequence of points of the form  $(h, y, t)$ is generated by the Poisson point process with intensity $\calP \times \mathrm{Leb} \times \mathrm{Leb}$ where $h \sim \calP, (y,t) \in \reals^2_+$ and $\mathrm{Leb}$ is the Lebesgue measure over $\reals_+$ (note that we do not have upper bounds for the densities). Intuitively, $h$ lies in the hypotheses' space, $y$ is a non-negative real value and $t$ corresponds to a time value. 
    Let $f$ be the Radon-Nikodym derivate of $A(S)$ with respect to $\calP$.
    We assign the first (the one with minimum $t)$ value $h$ to $A(S)$ that satisfies the property that $f(h) > y$, i.e., $y$ falls below the density curve of $A(S)$. We assign a hypothesis to $A(S')$ in a similar manner. This procedure defines a data-independent way to couple the two random variables and naturally extends to multiple ones. In the figure's example, we set $A(S) = h_2$ and $A(S') = h_4$ given that $t_1 < t_2 < t_3 < t_4$. 
    }
    \label{figure:poisson}
\end{figure}


\begin{remark}[General Equivalence]
In \Cref{app:general equivalence}, we discuss how the above equivalence actually holds for general statistical tasks beyond binary classification. We first generalize the notions of indistinguishability, replicability and $\TV$ indistinguishability for general input spaces $\calI$ and output spaces $\calO$.
We then discuss that replicability and $\TV$ indistinguishability remain equivalent (under the same measure theoretic conditions) in these more general abstract learning scenarios.
\end{remark}

\begin{remark}[Implementation of the Coupling]
    We note that, in order to implement algorithm $A'$ of \Cref{lem:TV stability to replicability}, we need sample access to a Poisson point process with intensity $\calP \times \mathrm{Leb} \times \mathrm{Leb},$ where $\calP$ 
    is the reference probability measure from \Cref{clm:countable X reference measure} and $\mathrm{Leb}$ is the Lebesgue measure over $\reals_+$. 
    Importantly, $\calP$ depends only
    on $A$.
    Moreover, we need full access to the values of the density $f_i$ of the distribution $A(S_i)$ with respect to the reference probability measure $\calP$, for any sample $S_i$. 
    We underline that these quantities do not
    depend on the data-generating distribution
    $\calD$ (since we iterate over any possible sample).
\end{remark}

\begin{remark}[TV Indistinguishability vs.\ Replicability]
Notice that in the definition of replicability (cf. \Cref{def:replicability}) the 
source of randomness $\calR$ needs to be specified and by changing it we can observe
different behaviors for coupled executions of the algorithm. On the other hand,
the definition of $\TV$ indistinguishability (cf. \Cref{def:TV stability})
does not require the specification 
of $\calR$ as it states a property of the posterior distribution of 
the learning rule. 
\end{remark}

\section{TV Indistinguishability and Differential Privacy}
\label{sec:tv and dp}

In this section we investigate the connections between TV indistinguishability and approximate DP in binary classification.
Consider a hypothesis class $\calH \subseteq \{0,1\}^\calX$. We will say that $\calH$ is learnable by a $\rho$-TV indistinguishable learning rule $A$ if this rule satisfies
the notion of learnability under the standard realizable PAC learning model and is $\rho$-TV indistinguishable (see \Cref{def:learning-stable}).

The main result of this section is an equivalence between approximate DP and TV indistinguishability 
for countable domains $\calX$, in the context of PAC learning. We remark that
the equivalence of differential privacy with the notion of replicability is formally stated for finite outcome spaces (i.e., under the assumption that $\calX$ is finite) due to the use of a specific correlated sampling strategy for the direction that ``DP implies replicability'' in the context of classification \cite{ghazi2021user}.
Moreover, \cite{bun2023stabilityIsStablest} gave a constructive way to transform a DP algorithm to a replicable one for general statistical tasks and for finite domains. 
Thus, combining our results in \Cref{sec:ind and replicability}
and the result of \cite{ghazi2021user,impagliazzo2022reproducibility,bun2023stabilityIsStablest}, the equivalence of TV indistinguishability
and DP for \emph{finite} domains is immediate. We will elaborate
more on the differences of our approach and \cite{ghazi2021user, bun2023stabilityIsStablest}
later on. We also discuss our coupling and correlated sampling in \Cref{sec:coupling}.

Recall that a learner is $(\alpha,\beta)$-accurate if its misclassification probability is at most $\alpha$ with probability at least $1-\beta$.


\begin{theorem}
[$(\eps, \delta)$-DP $\Rightarrow$ TV Indistinguishability]
\label{prop:dp to stab}
Let $\calX$ be a (possibly infinite) domain and 
$\calH \subseteq \{0,1\}^\calX$. Let $\gamma \in (0,1/2),\alpha,\beta,\rho \in (0,1)^3 $.
Assume that $\calH$ is learnable by
an $n$-sample $(1/2-\gamma,1/2-\gamma)$-accurate $(0.1,1/(n^2\log(n)))$-differentially private learner. Then,  it is also learnable by an $(\alpha,\beta)$-accurate $\rho$-$\TV$ indistinguishable learning rule.
\end{theorem}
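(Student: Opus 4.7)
The plan is to extract combinatorial information about $\mathcal{H}$ from the DP learnability assumption and then exploit known constructions for classes of finite Littlestone dimension, combined with the equivalences and boosting tools developed in this paper. First, I invoke the (now classical) lower-bound direction of the private-PAC/Littlestone equivalence \cite{alon2019private,bun2020equivalence,alon2022private}: because $\mathcal{H}$ admits an $(0.1, 1/(n^2 \log n))$-DP learner with the stated weak accuracy guarantee, we conclude that $\mathrm{Ldim}(\mathcal{H}) < \infty$. Here the parameter $\delta = 1/(n^2 \log n)$ is well below the threshold needed for the approximate DP lower bound, and the weak accuracy $1/2 - \gamma$ with constant $\gamma > 0$ is also sufficient since weak DP PAC learning can be amplified to strong DP PAC learning while preserving privacy up to polynomial factors.

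Given $\mathrm{Ldim}(\mathcal{H}) < \infty$, the next step is to invoke the construction of \cite{ghazi2021sample} to obtain a list-globally stable PAC learner for $\mathcal{H}$: this learner outputs, with probability bounded away from zero, one of a fixed finite list $\{h_1, \ldots, h_L\}$ of hypotheses. Because the list is \emph{finite} (regardless of the cardinality of $\mathcal{X}$), correlated sampling over the index set $\{1, \ldots, L\}$ converts the list-globally stable learner into a pseudo-globally stable (equivalently, replicable) one, following \cite{ghazi2021user,impagliazzo2022reproducibility}. Crucially, this correlated-sampling step is insensitive to the structure or cardinality of the domain $\mathcal{X}$, which is what allows the argument to go through for arbitrary (possibly uncountable) $\mathcal{X}$. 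By \Cref{thm:replicability-implies-TV-ind} of this paper, the resulting replicable learner is TV indistinguishable, with some constant parameters $(\alpha_0, \beta_0, \rho_0)$.

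Finally, I apply the accuracy boosting and TV indistinguishability amplification procedures developed in \Cref{sec:boosting} to drive the accuracy and confidence to the desired $(\alpha, \beta)$ and the TV indistinguishability parameter to $\rho$, at the cost of increasing the sample complexity. The main obstacle is parameter tracking across these reductions, in particular ensuring that TV indistinguishability is preserved (and can be amplified) through the accuracy boosting step; this is precisely what the boosting tools of \Cref{sec:boosting} are designed to handle. Once the parameters are verified to compose correctly, the theorem follows from the chain DP $\Rightarrow$ finite Littlestone dimension $\Rightarrow$ list-globally stable learner $\Rightarrow$ replicable learner $\Rightarrow$ TV indistinguishable learner $\Rightarrow$ $(\alpha, \beta)$-accurate $\rho$-TV indistinguishable learner.
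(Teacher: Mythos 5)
Your proposal follows the same general chain as the paper — DP implies finite Littlestone dimension via \cite{alon2019private}, then a (list-)globally stable learner, then replicability, then TV indistinguishability via \Cref{thm:replicability-implies-TV-ind} — but the step you take from list-global stability to replicability is the one that breaks. You argue that ``correlated sampling over the index set $\{1,\ldots,L\}$'' suffices because each output list is finite; however, the list output by a list-globally stable learner (\Cref{def:list global stability}) is \emph{data-dependent}: different training samples $S$ produce different lists, and only a single anchor hypothesis (depending on $\calD$) is guaranteed to appear with probability $\geq \eta$. So indexing into the two lists is not a correlated-sampling strategy over a common distribution — a given index $i$ can point to unrelated hypotheses in the two executions. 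Ghazi et al.'s actual reduction performs correlated sampling over a space $2^\calX$ fixed a priori, and the paper records explicitly (see the discussion in \Cref{sec:coupling} and \Cref{remark:motivation}) that this is exactly why their result is stated for finite $\calX$; for general uncountable domains, whether any non-trivial correlated-sampling strategy exists is open \cite{bavarian2016optimality}. Thus your chain, as written, only goes through for finite $\calX$, whereas the theorem claims a possibly infinite domain.

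The paper avoids this obstacle by two different routes, neither of which uses correlated sampling. The main proof (\Cref{lem:dp implies replicability in general domains}, \Cref{cor:dp implies TV ind in general domains}) starts from the single-hypothesis globally stable learner of \cite{bun2020equivalence} (not the list version), then obtains replicability using the replicable heavy-hitters subroutine followed by a replicable agnostic learner; the stability across two executions comes from a shared random threshold that cuts borderline heavy-hitters and shared randomness inside the SQ-based agnostic learner, and this works for arbitrary $\calX$. The alternative route (\Cref{lem:littlestone implies tv stable2}) does use the list-globally stable learner of \cite{ghazi2021sample}, but it bypasses coupling altogether: it directly outputs a sample from a data-dependent distribution $\hat{\calP}$ and proves that $\hat{\calP}$ is, in expectation, TV-close to a fixed data-independent prior $\calP$, which is precisely the TV indistinguishability guarantee — no implementation of a coupling is needed. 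That second route is in fact one of the paper's motivations for introducing TV indistinguishability as a notion that sidesteps the correlated-sampling difficulty. To repair your argument you should replace the correlated-sampling step with one of these two constructions.
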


\paragraph{Proof Sketch of \Cref{prop:dp to stab}.} {The proof goes through the notion of global stability (cf.~\Cref{def:global-stability}). The existence 
of an $(\eps,\delta)$-DP learner implies that the hypothesis class $\calH$
has finite Littlestone dimension~\cite{alon2019private} (cf. \Cref{thm:dp learnable implies finite Littlestone}). Thus, we know
that there exists a $\rho$-globally stable learner for $\calH$~\cite{bun2020equivalence} (cf. \Cref{thm:finite littlestone
implies global stability}). The next step is to use the replicable heavy-hitters algorithm (cf.~\Cref{algo:replicable heavy hitters},~\cite{impagliazzo2022reproducibility}) with frequency parameter $O(\rho)$ and replicability parameter $O(\rho')$, where $\rho' \in (0,1)$ is the desired
TV indistinguishability parameter of the learning rule.
The global stability property implies that the list of heavy-hitters will be non-empty and it will contain at least one hypothesis with small error rate, with high probability. Finally, since
the list of heavy-hitters is finite and has bounded size,
we feed
the output into the replicable agnostic learner (cf.~\Cref{algo:replicable agnostic learner}). Thus, we have designed a replicable learner for $\calH$, and~\Cref{thm:replicability-implies-TV-ind} shows that this learner is also
TV indistinguishable. }

The formal proof of~\Cref{prop:dp to stab}
is deferred to~\Cref{sec:dp to tv}. We also
include a result which shows that \emph{list-global} stability
implies TV indistinguishability for general domains
and general statistical tasks, which could be of independent
interest (cf.~\Cref{lem:littlestone implies tv stable2}).

We proceed to the opposite direction where we provide an algorithm that takes as input a TV indistinguishable learning rule for $\calH$ and outputs a learner for $\calH$ which is $(\eps, \delta)$-DP.
In this direction countability of $\calX$ is crucial.

\begin{theorem}
[TV Indistinguishability $\Rightarrow$ $(\eps, \delta)$-DP]
\label{prop:stab to dp}
Let $\calX$ be a countable domain.
Assume that $\calH  \subseteq \{0,1\}^\calX$ is learnable by an $(\alpha, \beta)$-accurate $\rho$-$\TV$ indistinguishable learner $A$, for some $\rho \in (0,1), \alpha \in (0,1/2), \beta \in \left(0, \frac{1-\rho}{1+\rho}\right)$.
Then, for any $(\alpha',\beta',\varepsilon, \delta) \in (0,1)^4,$ it is also learnable by an $\left(\alpha + \alpha', \beta'\right)$-accurate $(\varepsilon, \delta)$-differentially private learner $A'$.
\end{theorem}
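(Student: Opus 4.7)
The plan is to chain two reductions. First, by~\Cref{cor:TV stability to replicability for countable X}, the $(\alpha,\beta)$-accurate $\rho$-TV indistinguishable rule $A$ admits an \emph{equivalent} $\rho''$-replicable learner $\tilde A$ with $\rho''=\tfrac{2\rho}{1+\rho}$; since equivalence preserves $A(S)=\tilde A(S)$ for every input $S$, the derived rule $\tilde A$ inherits $(\alpha,\beta)$-accuracy on every realizable distribution. Noting that $1-\rho''=(1-\rho)/(1+\rho)$, the quantitative hypothesis $\beta<(1-\rho)/(1+\rho)$ in the statement is precisely the requirement $\beta<1-\rho''$. It therefore suffices to establish a general transformation from any $(\alpha,\beta)$-accurate $\rho''$-replicable learner (with $\beta<1-\rho''$) to an $(\alpha+\alpha',\beta')$-accurate $(\eps,\delta)$-DP learner, for arbitrary $\alpha',\beta',\eps,\delta$.

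For the replicability-to-DP step, I would use the replicable heavy-hitters algorithm~\Cref{algo:replicable heavy hitters} as a black box. Let $\rho'''$ be a small replicability parameter to be tuned. On a subsample $S_1$ of suitable size, run heavy-hitters with frequency threshold $\eta\asymp 1-\rho''-\beta$ on top of $\tilde A$, producing a list $\calL$ of hypotheses. By replicability and accuracy of $\tilde A$, for a strictly positive fraction of internal seeds $r$ the map $S\mapsto\tilde A(S,r)$ concentrates on a canonical hypothesis $h_r$ of population error at most $\alpha$; since $\beta<1-\rho''$ the frequency $\eta$ is strictly positive, so with high probability $\calL$ is nonempty and contains at least one such accurate $h_r$. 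Moreover, heavy-hitters guarantees $|\calL|=O(1/\eta)$.

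On a fresh, disjoint subsample $S_2$ of size $m=\mathrm{poly}(1/\alpha',\log(1/\eta),\log(1/\beta'),1/\eps,\log(1/\delta))$, apply an $(\eps,\delta)$-DP selection mechanism (e.g., the exponential mechanism or a histogram-based report-noisy-max) over the finite list $\calL$ scored by empirical risk on $S_2$. Standard utility guarantees for DP selection, combined with uniform convergence on the finite class $\calL$, yield a hypothesis $\hat h\in\calL$ whose population loss is at most $\min_{h\in\calL}L(h)+\alpha'\leq\alpha+\alpha'$ with probability at least $1-\beta'$, delivering the advertised accuracy.

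The main obstacle is the privacy analysis, since the candidate list $\calL$ is a function of $S_1$ produced by a procedure that is replicable but not a priori differentially private. The standard resolution exploits the replicability of the heavy-hitters wrapper: choosing $\rho'''\ll\delta$ ensures that $\calL$ is the same on neighboring inputs (differing in one point of $S_1$) except on a failure event of probability $O(\rho''')$, which is absorbed into the additive $\delta$ slack via a coupling argument, while the $(\eps,\delta)$-DP guarantee of the selection stage handles privacy with respect to $S_2$. Calibrating $\rho'''$, $\eta$, $|S_1|$, $|S_2|$, and the DP-selection parameters so that accuracy, privacy, and non-emptiness of $\calL$ hold simultaneously is the delicate bookkeeping at the heart of the argument, and is exactly where the strict inequality $\beta<(1-\rho)/(1+\rho)$ is needed to guarantee a strictly positive heavy-hitters threshold.
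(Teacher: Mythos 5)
Your plan takes the indirect route ``$\TV$ $\Rightarrow$ replicability $\Rightarrow$ DP,'' which the paper explicitly considers and rejects for this statement, because it cannot recover the advertised range of~$\beta$. The paper's actual proof of \Cref{prop:stab to dp} (via \Cref{lem:TV stable implies dp-learner} and \Cref{algo:TV stable to dp}) bypasses the replicable learner entirely: it draws $k'$ batches of $k$ fresh samples, couples the $k$ posteriors in each batch \emph{directly} via the Poisson point process with the data-independent reference measure $\calP$ of \Cref{clm:countable X reference measure}, and then feeds the coupled realizations into the $\mathrm{DP}$ primitives \textsc{StableHist} (\Cref{lem:stable histograms}) and \textsc{GenPrivLearner} (\Cref{lem:agnostic learner}). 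Privacy follows because $\calP$ is data-independent, so changing one point of $S_i^j$ perturbs at most the single coupled output $X_i^j$, and the downstream mechanisms are already DP; accuracy follows by a Markov argument centered on the realized pivot $X_1^j$.

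The gap in your parameter claim is concrete. You assert that the requirement $\beta < 1 - \rho''$ with $\rho'' = \tfrac{2\rho}{1+\rho}$ is ``precisely'' what the replicability-to-DP step needs, because a positive fraction of seeds $r$ will concentrate on an accurate canonical $h_r$. But the conversion from replicability to pseudo-global stability (\Cref{clm:repl implies pseudo-global stability}) forces a double quantification: pay $\nu$ for bad seeds and then $\eta = \rho''/\nu$ for residual mass; after also discounting $\beta/(1-\eta)$ for inaccurate canonical hypotheses, the optimum over $\nu$ is attained at $\nu = \rho'' + \sqrt{\beta\rho''}$ and yields the constraint $\sqrt{\beta} + \sqrt{\rho''} < 1$, i.e.\ $\beta < (1-\sqrt{\rho''})^2$, which is strictly smaller than $1 - \rho''$ for every $\rho'' \in (0,1)$. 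The paper's direct coupling gets the wider range because its Markov argument uses the realized $X_1^j$ as the pivot: it only pays $1/\nu$ once for the event that a $(1 - \nu\rho'')$-fraction of the batch agrees with $X_1^j$, plus $\beta$ for $X_1^j$ itself being inaccurate, which is positive exactly when $\beta < 1 - \rho''$.

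There is also a gap in the privacy argument. Replicability (\Cref{def:replicability}) is a statement about two \emph{independently resampled} datasets from the same $\calD$; it says nothing about what happens on \emph{neighboring} datasets, so ``choosing $\rho''' \ll \delta$'' does not let you absorb the effect of a single data change into the $\delta$ slack. Moreover, you invoke the replicable heavy-hitters subroutine (\Cref{algo:replicable heavy hitters}) as the candidate-list generator, but that subroutine is only analyzed for replicability, not for differential privacy; the paper instead uses \textsc{StableHist} (\Cref{lem:stable histograms}), which is a genuine $(\eps,\delta)$-DP mechanism. To make your list-generation step private you would either have to swap in a DP histogram and re-analyze, or prove a neighboring-dataset robustness property for the randomized-threshold heavy-hitters that you never state. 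The data-independence of $\calP$ is what actually makes the paper's coupling ``local'' (a one-point change in $S_i^j$ alters only $X_i^j$); this observation, and not replicability, is the load-bearing step in the privacy analysis.
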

We refer to \Cref{sec:tv to dp} for the proof.
In the above statements, we omit the details about the sample complexity. We refer to \Cref{lem:littlestone implies tv stable2} and \Cref{lem:TV stable implies dp-learner} for these details.
Let us now comment on the differences between \cite{ghazi2021user, bun2023stabilityIsStablest} which establish a transformation from a replicable learner
to an approximately DP learner and our result. {The high-level
idea to obtain both of these results is similar. Essentially, the proof
of  \cite{ghazi2021user, bun2023stabilityIsStablest} can be viewed as a coupling
between sufficiently many posteriors of the replicable learning rule using \emph{shared randomness}
in order to achieve this coupling. In our proof, instead of using shared randomness
we use the reference measure we described in previous sections to achieve this coupling. 
We remark that we could have obtained the same qualitative result, i.e., that TV 
indistinguishability implies approximate DP, by using the transformation from replicability
to approximate DP of \cite{ghazi2021sample, bun2023stabilityIsStablest} in a black-box manner along with our result that TV indistinguishability implies replicability (cf. \Cref{lem:TV stability to replicability}). However, this leads to worse guarantees
in terms of the range of the parameters $\alpha,\beta,\delta,\eps,\rho$
than the ones stated in \Cref{prop:stab to dp}. Thus, we have chosen to do a more careful
analysis based on the coupling we proposed that leads to a stronger quantitative result.
More concretely,}  the proof in \cite{ghazi2021user, bun2023stabilityIsStablest} starts 
by sampling many random strings independently
of the dataset $\{S_i\}_{i \in [k]}$ and considers many executions of the algorithm
using the same random strings but different data.
In our algorithm we first
sample the sets $\{S_i\}_{i\in[k]}$
and then we consider an optimal coupling along 
the $\{A(S_i)\}_{i\in[k]}$ which is also independent of the dataset, thus it satisfies the DP 
requirements. 
Moreover, our procedure covers
a wider range of parameters $\alpha,\beta,\rho$ compared to 
\cite{ghazi2021user}.
The reason we need countability of $\calX$
is because it allows us to design a \emph{data-independent}
reference probability measure $\calP$, the same one as in \Cref{clm:countable X reference measure}.
Then, using this reference probability measure for the coupling
helps us establish the DP properties.
Nevertheless, we propose a simple change to our approach
which we conjecture 
applies to general domains $\calX$ and we leave it open as an interesting
future direction. For a more detailed discussion, we refer the reader
to \Cref{apx:dp beyond countable}. 

Interestingly, we underline that, as is shown
in~\cite{ghazi2021user,bun2023stabilityIsStablest} and as opposed to \Cref{prop:stab to dp}, replicability
implies DP in general spaces
(cf.~\Cref{thm:repl implies DP general tasks domains}).

We 
conclude this section by stating a general equivalence
between $(\eps,\delta)$-DP and replicability for PAC learning, that follows from the previous
discussion, in particular by combining~\Cref{thm:repl implies DP general tasks domains} \cite{ghazi2021user, bun2023stabilityIsStablest}, and \Cref{lem:dp implies replicability in general domains}.
\begin{theorem}[Replicability $\iff$ Differential Privacy in PAC Learning]\label{thm:DP PAC learning iff replicable}
    Let $\calX$ be a (possibly infinite) domain and let $\calH \subseteq \left\{0,1\right\}^\calX$. Then, $\calH$ is replicably learnable if and only if it is approximately-DP learnable.
\end{theorem}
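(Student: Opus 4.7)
The plan is to prove the two directions separately and then combine them. The forward direction, replicability implies $(\varepsilon,\delta)$-DP, is \Cref{thm:repl implies DP general tasks domains}, which is due to \cite{ghazi2021user, bun2023stabilityIsStablest} and already holds for arbitrary (possibly uncountable) domains $\calX$. Thus the real content lies in the converse direction, which is isolated as \Cref{lem:dp implies replicability in general domains}; the theorem follows by concatenating the two implications.

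For the DP $\Rightarrow$ replicability direction, I would mirror the pipeline sketched in the proof of \Cref{prop:dp to stab}, but aim directly at replicability rather than at TV indistinguishability. First, from the hypothesis that $\calH$ is PAC learnable by an $(\varepsilon,\delta)$-DP algorithm (with parameters as in \Cref{prop:dp to stab}), apply the equivalence of \cite{alon2019private} (\Cref{thm:dp learnable implies finite Littlestone}) to conclude $\mathrm{Ldim}(\calH)<\infty$. Second, invoke the Bun--Livni--Moran construction \cite{bun2020equivalence} (\Cref{thm:finite littlestone implies global stability}) to obtain a globally stable PAC learner: one whose output, on a random sample of appropriate size, hits some specific low-error hypothesis $h^\star$ with probability at least a constant $\eta>0$. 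Third, amplify global stability into replicability by running the replicable heavy-hitters routine of \cite{impagliazzo2022reproducibility} (\Cref{algo:replicable heavy hitters}) with frequency parameter slightly below $\eta$ and replicability parameter equal to the target $\rho$; this produces, replicably, a finite list that with high probability contains at least one hypothesis of low population error. Finally, feed this finite list into the replicable agnostic learner (\Cref{algo:replicable agnostic learner}), which selects a near-optimal hypothesis in a replicable way. A union bound and standard composition of replicability parameters between the two routines yield the claim.

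The main subtlety I would want to double-check is that no step in this pipeline requires $\calX$ to be countable. The DP $\Rightarrow$ finite Littlestone and finite Littlestone $\Rightarrow$ global stability results are stated for arbitrary domains. The replicable heavy-hitters and agnostic routines only access hypotheses through their evaluations on sampled points, so their sample, randomness, and replicability parameters depend on the frequency threshold, the accuracy and confidence parameters, and the size of a finite candidate list, but not on $|\calX|$. This is precisely why the argument here sidesteps the countability restriction needed in \Cref{prop:stab to dp}: there, the target stability notion was TV indistinguishability, which forced us to construct a data-independent reference probability measure on $\{0,1\}^\calX$; here, by targeting replicability directly through the global stability $\to$ heavy-hitters $\to$ agnostic learner chain, we never place a measure on the hypothesis space and the obstruction disappears.
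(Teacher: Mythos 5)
Your proposal is correct and follows essentially the same route as the paper: the forward direction is delegated to \Cref{thm:repl implies DP general tasks domains}, and the converse is the pipeline DP $\Rightarrow$ finite Littlestone (\Cref{thm:dp learnable implies finite Littlestone}) $\Rightarrow$ global stability (\Cref{thm:finite littlestone implies global stability}) $\Rightarrow$ replicability via heavy-hitters and a replicable agnostic learner, which is exactly \Cref{lem:dp implies replicability in general domains}, and you correctly explain why countability of $\calX$ is never needed. The only detail you gloss over (covered by "standard composition of replicability parameters") is that the globally stable subroutine must be fed \emph{fresh} internal randomness on each call so that its outputs are genuine i.i.d. draws from the hypothesis distribution $\calD'$, while the heavy-hitters and agnostic-learner routines share their randomness across executions; the paper resolves this split between private and public random bits by invoking \Cref{lem:private replicability}, and a complete write-up should do the same.
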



{
\begin{remark}
[Dependence on the Parameters]
In the case of $\TV$ indistinguishability $\Rightarrow$ $\mathrm{DP}$, the blowup in the sample complexity is stated explicitly in \Cref{lem:TV stable implies dp-learner}.

For the direction $\mathrm{DP}$ $\Rightarrow$ $\TV$ indistinguishability it is a bit trickier to state the exact sample complexity blow-up because we do not make explicit use of the $\mathrm{DP}$ learner. Instead, we use the fact that the existence of a non-trivial $\mathrm{DP}$ learner implies that the class has finite Littlestone dimension and then we use an appropriate algorithm that is known to work for such classes. 
In this case, it suffices to let the parameters of the $\mathrm{DP}$ 
learner to be $\eps \in (0,0.1), \delta \in \left(0, \frac{1}{n^2\log(n)}\right), \alpha \in (0,1/2), \beta \in (0,1/2)$
and the parameters of the desired $\TV$ indistinguishable 
$(\alpha',\beta')$-accurate learner
are unconstrained, i.e., $\rho \in (0,1), \alpha' \in (0,1), \beta' \in (0,1)$. 
If we denote the Littlestone dimension of the class by $L$, then, as shown in \Cref{lem:littlestone implies tv stable2} the sample complexity of the $\TV$ indistinguishable learner is $\poly(L,1/\rho,1/\alpha',\log(1/\beta'))$\footnote{This holds under the (standard) assumption that uniform convergence holds for Littlestone classes. If this is not the case, we get $\poly(2^{2^L},1/\rho,1/\alpha',\log(1/\beta'))$ sample complexity (\Cref{cor:dp implies TV ind in general domains}).}. 
\end{remark}

\begin{remark}
[Beyond Binary Classification]
The only transformation that is restricted to binary classification is the one from $\mathrm{DP}$ to $\TV$ indistinguishability. All the other transformations, (and the boosting algorithms that we present in the upcoming section), extend to general statistical tasks.
Let us now shortly discuss how to extend our result e.g., to the multi-class setting, using results from the private multiclass learning literature 
\cite{jung2020equivalence,sivakumar2021multiclass}. \cite{jung2020equivalence} showed that private multiclass learnability implies finite multiclass Littlestone dimension and \cite{sivakumar2021multiclass} showed how to extend the binary list-globally stable learner that we use to the multiclass setting. Using these two main ingredients, the rest of our approach for the binary classification setting should extend to the multiclass setting. The extension to the regression problem seems to be more challenging. Even though \cite{jung2020equivalence} showed that private regression implies finiteness of some appropriate Littlestone dimension, it is not clear yet how to derive a (list-)globally stable algorithm for this problem. 
\end{remark}
}

\section{Amplifying and Boosting TV Indistinguishable Algorithms}
\label{sec:boosting}
In this section we study the following fundamental question.
\begin{question}
\label{quest}
Consider a weak $\TV$ indistinguishable learning rule both in terms of the indistinguishability parameter and the accuracy. Is it possible to amplify its indistinguishability and to boost its accuracy?
\end{question}

{For instance, in the context of approximate differential privacy, a series of works has lead to (constructive) algorithms that boost the accuracy and amplify the privacy guarantees (e.g., \cite{dwork2010boosting, bun2020equivalence, bun2023stabilityIsStablest}). This result builds upon the equivalence of online learnability and approximate differential privacy.}
Our result relating DP to TV indistinguishability implies the following existential result.

\begin{corollary}
\label{cor:boosting + amplification existential}
Let $\calX$ be a countable domain.
Suppose that for some sample size $n_0$, there exists an $(\alpha_0, \beta_0)$-accurate $\rho_0$-$\TV$ indistinguishable learner $A$ for a class $\calH \subseteq \{0,1\}^\calX$ with $\alpha_0 \in (0,1/2), \rho_0 \in (0,1), \beta_0 \in \left(0, \frac{1-\rho_0}{1+\rho_0}\right)$. Then, for any $(\alpha, \beta, \rho) \in (0,1)^3$, $\calH$ admits an $(\alpha, \beta)$-accurate $\rho$-$\TV$ indistinguishable learner $A'$.
\end{corollary}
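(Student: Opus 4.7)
The plan is to chain the two directions of the equivalence between TV indistinguishability and approximate differential privacy established in \Cref{sec:tv and dp}, using DP as an intermediate stepping stone and exploiting the fact that \Cref{prop:dp to stab} produces a learner with arbitrary target parameters $(\alpha, \beta, \rho)$ once we have any sufficiently non-trivial DP learner in hand.

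First, I would feed the given weak learner $A$ into \Cref{prop:stab to dp}. Its hypothesis is precisely $\alpha_0 \in (0, 1/2)$, $\rho_0 \in (0,1)$, and $\beta_0 \in \bigl(0, (1-\rho_0)/(1+\rho_0)\bigr)$, which we have by assumption. Picking an auxiliary slack $\alpha' = (1/2 - \alpha_0)/4$, a confidence $\beta' \in (0, 1/2)$, and DP parameters $\varepsilon = 0.1$ and $\delta = 1/(n^2 \log n)$, where $n$ is the resulting sample size, \Cref{prop:stab to dp} produces an $(\alpha_0 + \alpha', \beta')$-accurate $(0.1, 1/(n^2 \log n))$-differentially private learner for $\calH$. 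Since $\alpha_0 + \alpha' < 1/2$ and $\beta' < 1/2$, this learner is in particular $(1/2 - \gamma, 1/2 - \gamma)$-accurate for some $\gamma > 0$.

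Second, I would feed this intermediate learner into \Cref{prop:dp to stab}. Its hypothesis is exactly a $(1/2 - \gamma, 1/2 - \gamma)$-accurate $(0.1, 1/(n^2 \log n))$-DP learner, and its conclusion guarantees that, for \emph{any} desired target $(\alpha, \beta, \rho) \in (0,1)^3$, the class $\calH$ admits an $(\alpha, \beta)$-accurate $\rho$-TV indistinguishable learner. Composing the two transformations yields the required $A'$.

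The substantive work is already wrapped inside \Cref{prop:dp to stab}, which encapsulates the chain DP learner $\Rightarrow$ finite Littlestone dimension $\Rightarrow$ globally stable learner $\Rightarrow$ replicable heavy hitters $\Rightarrow$ replicable agnostic (hence TV indistinguishable) learner. There is essentially no obstacle beyond parameter bookkeeping, namely verifying that the output of \Cref{prop:stab to dp} fits into the hypothesis of \Cref{prop:dp to stab}. The assumption $\beta_0 < (1-\rho_0)/(1+\rho_0)$ in the statement of the corollary is precisely what is needed to bootstrap via \Cref{prop:stab to dp}, and the countability of $\calX$ is inherited from that same TV-to-DP direction.
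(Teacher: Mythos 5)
Your proposal is correct and is essentially the paper's own argument, just re-packaged: the paper cites \Cref{cor:TV stable implies finite Littlestone dimension} (which is precisely \Cref{prop:stab to dp} composed with \Cref{thm:dp learnable implies finite Littlestone}) followed by \Cref{lem:littlestone implies tv stable2}, whereas you compose \Cref{prop:stab to dp} with \Cref{prop:dp to stab} directly, but since \Cref{prop:dp to stab} itself proceeds through finite Littlestone dimension the underlying chain (weak TV $\Rightarrow$ DP $\Rightarrow$ finite Littlestone $\Rightarrow$ strong TV) is identical. The only small care needed in your phrasing, which the paper shares, is the self-referential choice $\delta = 1/(n^2\log n)$ where $n$ is the DP learner's own sample complexity; this fixed point exists because the sample complexity in \Cref{lem:TV stable implies dp-learner} depends only logarithmically on $1/\delta$.
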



{This result relies on connections between learnability by TV indistinguishable learners and finiteness of the Littlestone dimension of the underlying hypothesis class that were discussed in \Cref{sec:tv and dp}. In particular,
\Cref{cor:TV stable implies finite Littlestone dimension} shows that the existence of such a non-trivial TV indistinguishable learner implies that the $\calH$ has finite Littlestone dimension, and \Cref{lem:littlestone implies tv stable2}, states
that the finiteness of the Littlestone dimension of $\calH$ implies
the existence of an $(\alpha, \beta)$-accurate $\rho$-TV indistinguishable learner, for arbitrarily small choices of $\alpha, \beta, \rho$.}
It is not hard to see that we need to constrain $\alpha \in (0,1/2)$, because
the algorithm needs to have an advantage compared to the random classifier.
Moreover, it should be the case that $\beta \in (0,1-\rho).$ 
If $\beta \geq 1-\rho$ then the algorithm
which outputs a constant classifier
with probability $\beta$ and an $\alpha$-good
one with the remaining probability 
is $\rho$-TV indistinguishable and $(\alpha,\beta)$-accurate. An interesting open problem
is to investigate
what happens when $\beta \in \left( \frac{1-\rho}{1+\rho},1-\rho\right)$.

{We underline that \Cref{cor:boosting + amplification existential} is existential 
and does not make actual use of the weak TV indistinguishable learner that is given
as input. 
Hence, it is natural to try to come up with sample-efficient and constructive approaches that utilize the weak learner through black-box oracle calls to it
during the derivation of the strong one}.
In what follows, we aim to design such algorithms. 
We remind the reader that if we constrain ourselves to work in the setting
where $\calX$ is countable, then the absolute continuity requirement in the next theorems comes immediately, due to \Cref{clm:countable X reference measure}.

\paragraph{Indistinguishability Amplification.}
We first consider the amplification of
the indistinguishability guarantees of an algorithm. 
An important ingredient of our approach is a replicable algorithm
for finding heavy hitters of a distribution, i.e., elements whose frequency
is above some given threshold. This algorithm has appeared in \cite{ghazi2021user,impagliazzo2022reproducibility}. However, the dependence
of the number of samples in the confidence parameter in these works is polynomial.
We present a new variant of this algorithm that has polylogarithmic dependence on the
confidence parameter. Moreover, using a stronger concentration inequality, we improve
the dependence of the number of samples on the error parameter. We believe that this
result could be of independent interest. We also design an agnostic
learner for finite hypothesis classes. However, the dependence of
the number of samples on $|\calH|$ is polynomial. We believe that 
an interesting question is to design agnostic learners with
polylogarithmic dependence on $|\calH|$.
We refer the reader to \Cref{apx:amplification and boosting}.

\begin{theorem}
[Indistinguishability Amplification]
\label{lem:stability amplification}
Let $\calP$ be a reference probability measure over $\{0,1\}^\calX$ and $\calD$ be a distribution over inputs.
Consider the source of randomness $\calR$ to be a Poisson point process with intensity $\calP \times \mathrm{Leb} \times \mathrm{Leb},$ where
$\mathrm{Leb}$ is the Lebesgue measure over $\mathbb{R}_+$.
Consider a weak learning rule $A$ that is 
(i) $\rho$-$\TV$ indistinguishable with respect to $\calD$ for some $\rho \in (0,1)$,
(ii) $(\alpha,\beta)$-accurate for $\calD$ for some $(\alpha, \beta) \in (0,1)^2$, such that $\beta < \frac{2\rho}{\rho+1} - 2\sqrt{\frac{2\rho}{\rho+1}} + 1$,  and, (iii)
 absolutely continuous with respect to $\calP$ on inputs from $\calD$.
Then, for any $\rho', \eps, \beta' \in (0,1)^3$, there exists a learner $\textsc{Ampl}(A,\calR, \beta',\eps,\rho')$ that is $\rho'$-$\TV$ indistinguishable with respect to $\calD$, and $(\alpha+\eps,\beta')$-accurate for $\calD$. 
\end{theorem}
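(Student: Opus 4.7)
}
The plan is to convert the weak TV indistinguishable learner into a replicable one, argue that a typical shared seed gives an accurate heavy hitter in the induced output distribution, and then combine a replicable heavy-hitters routine with a replicable agnostic learner to amplify both indistinguishability and confidence; replicability will then imply TV indistinguishability by \Cref{thm:replicability-implies-TV-ind}.

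First, I would invoke \Cref{lem:TV stability to replicability} with the given reference measure $\calP$ to obtain an equivalent learner $A'$ that is $\rho_0$-replicable, where $\rho_0 = 2\rho/(1+\rho)$; this is where the absolute-continuity hypothesis (iii) is used. For each seed $r$ sampled from the Poisson point process $\calR$, let $p_r(h) := \Pr_{S \sim \calD^n}[A'(S,r) = h]$ be the induced output distribution. Replicability gives $\E_r[\sum_h p_r(h)^2] \geq 1 - \rho_0$, and since $\sum_h p_r(h)^2 \leq \max_h p_r(h)$, Markov's inequality implies $\Pr_r[\max_h p_r(h) \geq 1 - \sqrt{\rho_0}] \geq 1 - \sqrt{\rho_0}$. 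In parallel, $(\alpha,\beta)$-accuracy gives $\E_r[\sum_{h: L(h) > \alpha} p_r(h)] \leq \beta$, so $\Pr_r[\sum_{h: L(h) > \alpha} p_r(h) \leq \sqrt{\beta}] \geq 1 - \sqrt{\beta}$. On the intersection of these events, which occurs with probability at least $p_G := 1 - \sqrt{\rho_0} - \sqrt{\beta}$ (positive precisely under the hypothesis $\beta < (1 - \sqrt{\rho_0})^2$), the mode $h_r^\ast := \argmax_h p_r(h)$ is forced to be $\alpha$-accurate, since otherwise the inaccurate mass would exceed $p_r(h_r^\ast) \geq 1-\sqrt{\rho_0} > \sqrt{\beta}$.

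The amplified algorithm $\textsc{Ampl}$ then works as follows. Its shared randomness comprises $K$ independent Poisson seeds $r_1, \ldots, r_K$ for $A'$, together with the internal randomness of the replicable heavy-hitters procedure of \Cref{algo:replicable heavy hitters} and of a replicable agnostic learner. Given the input sample, split it into disjoint chunks and: (i) for each $j \in [K]$, feed $m$ fresh subsamples through $A'(\cdot, r_j)$ and pass the resulting multiset of hypotheses to the heavy-hitters routine with a constant threshold in $(\sqrt{\beta},\, 1 - \sqrt{\rho_0})$ and per-call failure probability $\beta'/(4K)$, obtaining a finite list $L_j$; (ii) aggregate $L := L_1 \cup \cdots \cup L_K$ and apply a replicable agnostic learner on this finite class with accuracy slack $\epsilon$ and failure probability $\beta'/4$. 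Choosing $K = \Theta(\log(1/\beta')/p_G)$, the probability that none of the $L_j$ contains an $\alpha$-accurate hypothesis is at most $(1-p_G)^K \leq \beta'/2$; when some $L_j$ does, the agnostic learner outputs a hypothesis of loss at most $\alpha + \epsilon$ with total failure probability at most $\beta'$. End-to-end replicability is obtained by allocating the $\rho'$ budget through a union bound across the $K$ heavy-hitters calls and the single agnostic-learner call.

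The main obstacle is quantitative book-keeping: the heavy-hitters threshold must be calibrated so that $h_{r_j}^\ast$ is captured without blowing up the size of $L_j$; the heavy-hitters routine must exhibit polylogarithmic dependence on its confidence parameter so that $K$ parallel invocations at failure probability $\beta'/(4K)$ remain sample-efficient, which is exactly the improved variant the paper develops in \Cref{apx:amplification and boosting}; and the replicable agnostic learner must be called with the shared randomness to preserve end-to-end replicability. Distributing the $\rho'$ and $\beta'$ budgets carefully across the $K+1$ subroutines is the bulk of the quantitative work, but the structural skeleton---weak learner $\to$ replicable via \Cref{lem:TV stability to replicability} $\to$ accurate heavy hitter on a typical shared seed $\to$ replicable heavy-hitter extraction $\to$ replicable agnostic selection---drives the result.
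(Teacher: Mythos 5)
Your proposal is correct and matches the paper's own argument (\Cref{algo:stability amplification} in \Cref{apx:amplification of stability}) step for step: convert $A$ to an equivalent $\frac{2\rho}{1+\rho}$-replicable learner $A'$ via the shared Poisson point process, show a typical seed $r$ induces a distribution over hypotheses with an $\alpha$-accurate heavy hitter, then sample $K=\Theta(\log(1/\beta')/p_G)$ seeds and combine the replicable heavy-hitters routine with the replicable agnostic learner, splitting the $\rho'$ and $\beta'$ budgets by union bound. The only cosmetic divergences are that you derive the good-seed bound directly from the second-moment identity $\E_r\bigl[\sum_h p_r(h)^2\bigr]\geq 1-\rho_0$ rather than citing \Cref{clm:repl implies pseudo-global stability}, and you bound the inaccurate mass per seed by $\sqrt{\beta}$ where the paper uses the slightly sharper $\beta/(1-\sqrt{\rho_0})$ — both yield the same positivity threshold $\beta < (1-\sqrt{2\rho/(1+\rho)})^2$ in the hypothesis.
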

We remark that the above result makes strong use of the equivalence between replicability
and TV indistinguishability. Our algorithm is a variant of the amplification algorithm that
appeared in \cite{impagliazzo2022reproducibility}, which (i) works for a wider range of parameters 
and (ii) its sample complexity is polylogarithmic in the parameter $\beta'$.

\paragraph{Accuracy Boosting.} Next, we design an algorithm that boosts the accuracy of an $n$-sample $\rho$-TV indistinguishable algorithm and preserves its TV indistinguishability guarantee.
Our algorithm is a variant of the boosting mechanism provided in \cite{impagliazzo2022reproducibility}. Similarly as in the case
of amplification, our variant improves upon the dependence
of the number of samples
on the parameter $\beta'$.

\begin{theorem}
[Accuracy Boosting]\label{thm:boosting algorithm}
Let $\calP$ be a reference probability measure over $\{0,1\}^\calX$ and $\calD$ be a distribution over inputs.
Consider the source of randomness $\calR$ to be a Poisson point process with intensity $\calP \times \mathrm{Leb} \times \mathrm{Leb},$ where
$\mathrm{Leb}$ is the Lebesgue measure over $\reals_+$.
Consider a weak learning rule $A$ that is 
(i) $\rho$-$\TV$ indistinguishable with respect to $\calD$ for some $\rho \in (0,1)$,
(ii) $(1/2-\gamma,\beta)$-accurate for $\calD$ for some $(\gamma, \beta) \in (0,1)^2$, and,
(iii) absolutely continuous with respect to $\calP$ on inputs from $\calD$.
Then, for any $\beta', \eps, \rho' \in (0,1)^3$, there exists a learner $\textsc{Boost}(A,\calR, \eps)$ that is $\rho'$-$\TV$ indistinguishable with respect to $\calD$ and $(\eps,\beta')$-accurate for $\calD$.
\end{theorem}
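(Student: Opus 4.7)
The plan is to reduce to the well-understood boosting of replicable weak learners of \cite{impagliazzo2022reproducibility} via the equivalences already established in the paper, and then sharpen the sample dependence on $\beta'$.  First, I would use the indistinguishability amplification theorem (\Cref{lem:stability amplification}) in combination with the coupling based transformation of~\Cref{lem:TV stability to replicability} to convert $A$ into an \emph{equivalent} learner $A_0$ that is $(1/2-\gamma', \beta_0)$-accurate and $\rho_0$-replicable on $\calD$, for parameters $\gamma' \geq \gamma/2$, $\beta_0$ small enough that a Chernoff/VC bound yields empirical advantage, and $\rho_0$ to be chosen.  Since $A$ is absolutely continuous with respect to $\calP$, so is $A_0$ (it has the same input-output map), so the hypotheses of the previous results are satisfied.

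Next I would run a standard weak-to-strong boosting scheme for $T = O(\log(1/\eps)/\gamma^2)$ rounds, treating $A_0$ as the weak learner.  At round $t$, the boosting algorithm reweights the sample according to the errors of the previously produced hypotheses $h_1,\ldots,h_{t-1}$, draws a fresh sub-sample $S_t$ distributed (approximately) according to the reweighted distribution, and calls $A_0$ on $S_t$ with a freshly sampled point of the Poisson process $\calR$ to obtain $h_t$.  The final hypothesis $h = \mathrm{Maj}(h_1,\ldots,h_T)$ has population error at most $\eps$ with probability $\geq 1 - \beta'$ by the usual generalization argument for boosting (combining the training error bound of AdaBoost-type analyses with the VC bound on majorities of hypotheses produced by a fixed-complexity class).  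Crucially, because the reweighting rule is a deterministic function of the previous hypotheses, if two independent runs of $\textsc{Boost}$ (on $S,S'\sim \calD^n$) produce the same sequence $h_1,\ldots,h_T$, then they produce the same final majority; hence replicability is preserved across rounds.

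Indistinguishability accounting is the next step.  Couple the two executions of the boosting procedure by using the \emph{same} random string from $\calR$ at each round; by a union bound, the probability that the two runs diverge at some round is at most $T\rho_0 + \xi$, where $\xi$ accounts for the (small) probability that the reweighted distributions differ non-trivially because the empirical weightings on $S$ and $S'$ differ.  Choosing $\rho_0 = \rho'/(2T)$ and the per-round sample size large enough (via a standard Hoeffding argument) to make $\xi \leq \rho'/2$ gives overall replicability $\leq \rho'$, and hence $\rho'$-TV indistinguishability by~\Cref{thm:replicability-implies-TV-ind}.  The polylogarithmic dependence on $\beta'$ claimed after the theorem comes from plugging in the sharpened heavy-hitters and agnostic-learning subroutines (referenced as \Cref{apx:amplification and boosting}) in place of the routines of~\cite{impagliazzo2022reproducibility}, which previously incurred a polynomial dependence.

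The main obstacle I expect is the interaction between the reweighting and the indistinguishability guarantee: the round-$t$ distribution fed into $A_0$ is itself a random object depending on $(S, h_1,\ldots,h_{t-1})$, so that $A_0$ is really being invoked on a distribution rather than on i.i.d.~samples from a fixed $\calD_t$.  To handle this, I would introduce an intermediate ``idealized'' process in which the reweighted distributions along the two executions are identical conditional on $(h_1,\ldots,h_{t-1})$ being identical, and show by induction on $t$ that the real and idealized processes can be coupled to agree up to round $t$ with probability at least $1 - t(\rho_0 + \xi_t)$, where $\xi_t$ is a uniform convergence error for the reweighting step at round $t$.  Establishing uniform convergence for the reweighted empirical distributions inductively is where most of the technical care will be needed, but since we only need a union bound over $T = O(\log(1/\eps)/\gamma^2)$ rounds, standard VC arguments suffice.
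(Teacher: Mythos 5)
Your high-level strategy matches the paper's: convert the TV-indistinguishable weak learner into an equivalent replicable one via the Poisson-process coupling of \Cref{lem:TV stability to replicability}, run a replicable weak-to-strong boosting procedure in which per-round randomness is drawn from shared tapes, and finally invoke \Cref{thm:replicability-implies-TV-ind} to convert the replicability bound back into a TV-indistinguishability bound. However, the specific boosting scheme you invoke creates a genuine gap, and in fact is exactly what the paper avoids.

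You propose an AdaBoost-style procedure with $T = O(\log(1/\eps)/\gamma^2)$ rounds and \emph{empirical} reweighting of the training set: the round-$t$ distribution is obtained by reweighting $S$ according to the errors of $h_1,\ldots,h_{t-1}$, so it depends on $(S,h_1,\ldots,h_{t-1})$ jointly. You correctly flag this as ``the main obstacle,'' but the obstacle is self-inflicted. AdaBoost's exponential reweighting does not preserve smoothness of the induced measure, so (a) the rejection-sampling step that tries to simulate draws from the reweighted distribution can require a number of samples growing like the inverse density floor, which is not controlled across rounds; and (b) because the reweighting is a function of $S$ rather than only of the hypotheses, two runs on $S,S'$ with matching $h_1,\ldots,h_{t-1}$ still feed \emph{different} distributions into the weak learner, forcing you to bound an extra $\xi_t$ term by a uniform-convergence argument over an adaptively chosen, data-dependent family. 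That inductive argument is not just ``technical care'' --- the conditioning on the coupled history $h_1,\ldots,h_{t-1}$ breaks the i.i.d.\ structure needed for a clean VC/Hoeffding step, and there is no indication that the argument closes. The paper sidesteps all of this by using Servedio's \emph{smooth} boosting~\cite{servedio2003smooth} (as in~\cite{impagliazzo2022reproducibility}), with $T = O(1/(\eps\gamma^2))$ rounds, where the measure $\mu_t:\calX\to[0,1]$ is a deterministic function of $(h_1,\ldots,h_{t-1})$ alone, \emph{not} of the sample. Then, conditional on the two executions agreeing on $h_1,\ldots,h_{t-1}$, the round-$t$ target distributions are \emph{identical}, rejection sampling with shared uniform randomness succeeds identically, and the only per-round replicability cost is the weak-learner call and the stopping test. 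The accounting is then a clean union bound with per-round replicability $\rho'/(6T)$ for $A'$ and $\rho'/(3T)$ for the replicable stopping test --- no $\xi$ term at all. Additionally, the preprocessing step you propose (applying the amplification theorem to $A$ before boosting) is not in the paper's proof and is not needed: the per-round accuracy/confidence/replicability parameters are supplied directly to the weak learner by choosing $n_w = n_A(\gamma, \beta'/(3T), \rho'/(6T))$ inside the boosting loop.
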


We can combine the amplification and boosting results for a wide range of parameters and get the next corollary.
\begin{corollary}
    Let $\calX$ be a countable domain and $A$ be an $n$-sample $\rho$-$\TV$ indistinguishable $(\alpha,\beta)$-accurate algorithm, for some $\rho \in (0,1), \alpha \in (0,1/2), \beta \in \left(0, \frac{2\rho}{\rho+1} - 2\sqrt{\frac{2\rho}{\rho+1}}+1\right).$ Then, for any $\rho', \alpha', \beta' \in (0,1)^3,$ there exists a 
    $\rho'$-$\TV$ indistinguishable $(\alpha', \beta')$-accurate learner $A'$ that requires
    at most $O\left(\mathrm{poly}\left(1/\rho, 1/\alpha',\log(1/\beta')\right) \cdot n \right)$
    samples from $\calD$.
\end{corollary}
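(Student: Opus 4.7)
The plan is to prove this corollary by composing the two preceding results, \Cref{lem:stability amplification} (indistinguishability amplification) and \Cref{thm:boosting algorithm} (accuracy boosting), in sequence. Since $\calX$ is countable, \Cref{clm:countable X reference measure} supplies a reference probability measure $\calP$ with respect to which every learning rule is absolutely continuous, so the measure-theoretic prerequisites of both theorems are automatically satisfied. Fix $\gamma := 1/2 - \alpha > 0$, so that the hypothesis $\alpha \in (0,1/2)$ gives us genuine advantage over random guessing, and notice that the stated bound on $\beta$ in the corollary is exactly the one required in the hypothesis of \Cref{lem:stability amplification}.

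In the first stage, I apply the amplification theorem to $A$. The choice of targets is made purely to set up the second stage: pick an amplified indistinguishability parameter $\rho_1 \in (0,1)$ (any convenient constant, say $1/2$), accuracy slack $\eps_1 := \gamma/2$, and confidence target $\beta_1 := 1/10$. This produces a learner $A_1 = \textsc{Ampl}(A,\calR,\beta_1,\eps_1,\rho_1)$ that is $\rho_1$-$\TV$ indistinguishable and $\bigl(1/2 - \gamma/2,\, 1/10\bigr)$-accurate, with sample complexity overhead $\poly(1/\rho, 1/\gamma, \log(1/\beta_1))$ times $n$, which is $\poly(1/\rho)\cdot n$ since $\gamma$ depends only on the fixed input parameter $\alpha$ and $\beta_1$ is a constant.

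In the second stage, $A_1$ satisfies the hypothesis of \Cref{thm:boosting algorithm} with weak-accuracy parameter $\gamma_1 = \gamma/2 > 0$ and failure probability $\beta_1 = 1/10$ (and $\rho_1$-$\TV$ indistinguishability). Apply $\textsc{Boost}(A_1,\calR,\alpha')$ with target indistinguishability $\rho'$ and target confidence $\beta'$. The output is a $\rho'$-$\TV$ indistinguishable $(\alpha',\beta')$-accurate learner $A'$, at a further blow-up of $\poly(1/\rho', 1/\alpha', \log(1/\beta'))$ in sample complexity. Multiplying the two overheads gives the claimed $O\!\bigl(\poly(1/\rho, 1/\alpha', \log(1/\beta'))\cdot n\bigr)$ bound (absorbing the $\rho'$-dependent factors, or equivalently treating $\rho'$ as a parameter of the poly).

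The only real obstacle is verifying parameter compatibility at the boundary between the two stages. The amplification theorem demands $\beta < (\sqrt{2\rho/(\rho+1)} - 1)^2$, which is precisely the corollary's assumption, so it applies directly to $A$. In turn, its output must still have strict advantage over random for boosting to be applicable; this is why $\eps_1$ is chosen strictly smaller than $\gamma$, so that $\alpha + \eps_1 = 1/2 - \gamma/2 < 1/2$. Beyond these choices, the composition is mechanical: the two theorems do all the work, and the corollary follows by combining the sample-complexity bounds.
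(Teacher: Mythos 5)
Your proposal takes exactly the route the paper intends: the paper's own proof of this corollary is a one-line pointer to \Cref{lem:stability amplification}, \Cref{thm:boosting algorithm}, and \Cref{clm:countable X reference measure}, and you flesh that out correctly as an amplify-then-boost composition with the countable-domain reference measure guaranteeing the absolute-continuity hypotheses at each stage. Your bookkeeping of the accuracy ($\alpha + \gamma/2 = 1/2 - \gamma/2$, so $A_1$ retains advantage $\gamma/2$) and the sample-complexity multiplication are right.

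One caveat on the concrete constants. The appendix restatement of \Cref{thm:boosting algorithm} (unlike the terser main-body version) requires the weak learner fed into \textsc{Boost} to satisfy $\beta < \bigl(1 - \sqrt{2\rho/(\rho+1)}\bigr)^2$, exactly as \Cref{lem:stability amplification} requires of its input. With your choice $\rho_1 = 1/2$, the right-hand side is $\bigl(1 - \sqrt{2/3}\bigr)^2 \approx 0.034$, so $\beta_1 = 1/10$ violates it. Your remark that $\rho_1$ is ``any convenient constant'' is therefore not quite right: you need $\rho_1$ and $\beta_1$ chosen jointly so that $\beta_1 < \bigl(1 - \sqrt{2\rho_1/(\rho_1+1)}\bigr)^2$; e.g.\ $\rho_1 = 1/100$, $\beta_1 = 1/10$ works, or keep $\rho_1 = 1/2$ and take $\beta_1 = 1/100$. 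Since both remain absolute constants, this does not change the asymptotic bound, but as written the particular parameters in stage one do not satisfy the boosting hypothesis. With that constant adjusted, the argument is correct and matches the paper's.
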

The proof of this result follows immediately from \Cref{lem:stability amplification}, \Cref{thm:boosting algorithm}, and from the fact that we can design the reference probability measure
$\calP$ for countable domains (cf. \Cref{clm:countable X reference measure}).
This result leads to two natural questions: what is the tightest range of $\beta$
for which we can amplify the stability parameter $\rho$ and
under what assumptions can we design such boosting and amplification algorithms
for general domains $\calX$? For a more detailed discussion, we refer the reader to \Cref{apx:boosting tight bound}, \Cref{apx:boosting beyond countable domains}.

{
\begin{remark}
[Dependence on the Parameters]
We underline that the polynomial dependence on $\rho$ in the boosting result is not an artifact of the algorithmic procedure or the analysis we provide, but it is rather an inherent obstacle in $\TV$ indistinguishability. \cite{impagliazzo2022reproducibility} show that in order to estimate the bias of a coin $\rho$-replicably with accuracy $\tau$ one needs at least $1/(\tau^2\rho^2)$ coin tosses. 
Since $\rho$-$\TV$ indistinguishability implies $(2\rho/(1+\rho))$-replicability as
we have shown (without any blow-up in the sample complexity), we also inherit this lower bound. Our main goal behind the study of the boosting algorithms is to identify the widest range of parameters $\alpha, \rho, \beta$ such that coming up with a $\rho$-$\TV$ indistinguishable algorithm switches from being trivial to being difficult. For example, in PAC learning we know that if the accuracy parameter is strictly less than $1/2$, then there are sample-efficient boosting algorithms that can drive it down to any $\eps > 0$. In the setting we are studying, it is crucial to understand the relationship between $\beta, \rho$, see \Cref{apx:boosting tight bound}.
\end{remark}}

\section{Conclusion}
\label{sec:privacy, replicability, stability}
In this work, we studied TV indistinguishability and established connections
to similar notions that have been proposed in the past, i.e., 
differential privacy, replicability, global stability, and pseudo-global
stability, under mild measure-theoretic assumptions
(e.g., countable $\calX$).
Our work leaves the following open problems:
\begin{enumerate}
    \item Does the equivalence between TV indistinguishability and replicability hold for general spaces, i.e., when the input domain is not countable?
    \item Does the equivalence between TV indistinguishability and $(\eps,\delta)$-DP hold for general spaces?
    \item How can we boost the correctness and amplify the indistinguishability parameter
    of a weak TV indistinguishable learner to a strong one in
    general spaces? 
    \item What is the minimal condition that characterizes TV indistinguishable PAC learnability? This is closely related to understanding the limits of TV indistinguishable boosting algorithms.
\end{enumerate}

\section{Acknowlegdements}
We thank Mark Bun, Marco Gaboardi, Max Hopkins, Russell Impagliazzo, Rex Lei, Toniann Pitassi,
Satchit Sivakumar, and Jessica Sorrell for illustrating discussions regarding the connection of this work with the recent paper~\cite{bun2023stabilityIsStablest}. We also thank 
Kyriakos Lotidis for helpful discussions about the Poisson point process.

\bibliography{bib}

\appendix

\section{Preliminaries and Additional Definitions}
\subsection{Preliminaries}

\paragraph{Probability Theory.} We first review some standard definitions from probability theory.

\begin{definition}
[Coupling]
\label{def:coupling}
A coupling of two probability distributions $P$ and
$Q$ is a pair of random variables $(X,Y)$, defined on the same probability space, such that the marginal distribution of $X$ is $P$ and the marginal distribution of $Y$ is $Q$.
\end{definition}

\begin{definition}
[Integral Probability Metric]
\label{def:ipm}
The Integral Probability Metric (IPM) between two probability measures $P$ and $Q$ over $\calO$ is defined as
\[
d_{\calF, \calO}(P,Q)
=
\sup_{f \in \calF}  
\left| \int_\calO f dP - \int_\calO f dQ  \right|
=
\sup_{f \in \calF}  
\left| \E_{x \sim P}[f(x)] - \E_{x \sim Q}[f(x)] \right|\,,
\]
where $\calF$ is a set of real-valued bounded functions $\calO \to \reals$.
\end{definition}
IPM distance measures are symmetric and non-negative. Note that the KL-divergence
is not a special case of IPM, rather it belongs to the family of $f$-divergences, that intersect with IPM
only at the TV distance. Such measures were recently used in order to derive PAC-Bayes style generalization bounds \cite{amit2022integral}. The definition of an $f$-divergence will not be useful in this work and we refer the interested reader to e.g., \cite{sason2016f}.

\paragraph{Learning Theory.} We next review some standard definitions in statistical learning theory. We start with the definition of the Littlestone dimension~\cite{littlestone1988learning}.
\begin{definition}[Littlestone Dimension \cite{littlestone1988learning}]
\label{definition:littlestone-dimension}
Consider a complete binary tree $T$ of depth $d+1$ whose internal
nodes are labeled by points in $\calX$ and edges by $\{0,1\}$, when they connect the parent to the right, left child, respectively. 
We say that $\calH \subseteq \{0,1\}^\calX$ Littlestone-shatters $T$ if for every root-to-leaf path $x_1,y_1,\ldots,x_d,y_d,x_{d+1}$ there exists some $h\in\calH$
such that $h(x_i) = y_i, 1 \leq i \leq d$. The Littlestone dimension is
denoted by $\mathrm{Ldim}(\calH)$ is defined to be the largest $d$ such
that $\calH$ Littlestone-shatters such a binary tree of depth $d+1$. If this happens for
every $d \in \nats$ we say that $\mathrm{Ldim}(\calH) = \infty$.
\end{definition}

We work under the well-known PAC learning model that was introduced in \cite{valiant1984theory}.
Let us denote the misclassification probability of a classifier $h$ by $\mathrm{err}_\calD(h) = \Pr_{(x,y) \sim \calD}[h(x) \neq y]$. Also, we say that $\calD$ is realizable
with respect to $\calH$ if there exists some $h^* \in \calH$ such that
$\mathrm{err}_\calD(h^*) = 0.$ Below, we slightly abuse notation and use the misclassification probability for distributions over classifiers.
\begin{definition}
[PAC Learnability \cite{valiant1984theory,shalev2014understanding}]
\label{def:pac}
An algorithm $A$ is $n$-sample $(\alpha,\beta)$-accurate for a hypothesis class $\calH \subseteq \{0,1\}^\calX$ if, for any realizable distribution $\calD$, it holds that
$
\Pr_{S \sim \calD^n}\left[
\mathrm{err}_\calD(A(S))
> \alpha\right] \leq \beta\,.
$
A hypothesis class $\calH$ is PAC learnable if, for any $\alpha, \beta \in (0,1)^2$,
there exist some $n_0(\alpha,\beta) \in \nats$ and an algorithm $A$ such that
$A$ is $n$-sample $(\alpha,\beta)$-accurate for $\calH,$ for any $n \geq n_0(\alpha,\beta).$
\end{definition}
For the purposes of this work, an algorithm $A$ should be thought of as a mapping from
samples to a \emph{distribution} over hypotheses. We want to design algorithms that
satisfy two desiderata: they are PAC learners for some given hypothesis class $\calH$
and they are total variation indistinguishable. In particular, we consider the following learning setting combining \Cref{def:TV stability} and \ref{def:pac}.
\begin{definition}
[Realizable Learnability by TV Indistinguishable Learner]
\label{def:learning-stable}
An algorithm $A$
is $n$-sample $(\alpha, \beta)$-accurate $\rho$-$\TV$ indistinguishable 
for a hypothesis class $\calH \subseteq \{0,1\}^\calX$
if, for any realizable distribution $\calD$, 
it holds that $(i)$ $A$ is $n$-sample $\rho$-$\TV$ indistinguishable and $(ii)$ $\Pr_{S \sim \calD^n}[\mathrm{err}_\calD(A(S)) > \alpha] \leq \beta$.
A hypothesis class $\calH$ is learnable by a $\TV$ indistinguishable algorithm if, for any $\alpha, \beta, \rho \in (0,1)$, there exist some $n_0(\alpha, \beta, \rho) \in \nats$ and an algorithm $A$ such that $A$ is $n$-sample $(\alpha, \beta)$-accurate $\rho$-$\TV$ indistinguishable for $\calH$ for any $n \geq n_0(\alpha, \beta, \rho)$.
\end{definition}
In the above definition, $n$ depends on $\alpha, \beta, \rho$ (and $\calH),$
but \emph{not} on the distribution.

\begin{definition}
[Uniform Convergence Property]
\label{def:uc}
We say that a domain $\calX$ and a class $\calH \subseteq \{0,1\}^\calX$
satisfy the uniform convergence property if
there exists a function $m^{\mathrm{UC}} : (0,1)^2 \to \nats$ such that
for any $\eps,\delta \in (0,1)$, and for every distribution $\calD$ over $\calX \times \{0,1\}$ it holds that if $S \sim \calD^m$ and $m \geq m^{\mathrm{UC}}(\eps, \delta)$, it holds that
$\sup_{h \in \calH} |L_S(h) - L_\calD(h)| \leq \eps$, with probability at least $1-\delta$, where $L_S$ (resp. $L_\calD)$ is the empirical (resp. population) loss.
\end{definition}

The fundamental theorem of learning theory \cite{vapnik2015uniform, blumer1989learnability} states
that the uniform convergence property is equivalent to the finiteness of the
VC dimension of $\calH.$ However, one needs to make some (standard) measurability
assumptions on $\calX, \calH$ to rule out pathological cases. For instance,
it is known that there classes with VC dimension $1$ where uniform convergence does
not hold \cite{ben20152}\footnote{We note that the proof of the existence of such a class holds under the continuum hypothesis.}. 
It is known that when $\calH$ is countable and has finite VC dimension uniform convergence holds \cite{bartlett2002rademacher}.

\subsection{General Definition of Indistinguishability}\label{apx:general definition 
ind}
While in the main body of the paper, we focused on binary classification, (most of) our proofs extend to general learning problems and so we first present a general abstract framework.

For general learning tasks, we can view learning rules (or algorithms) as randomized mappings
$A : \calI  \to \Delta_\calO$ which take as input instances from a domain
$\calI$ and map them to an element of the output space $\calO.$ 
We assume that there is a distribution
$\mu$ on $\calI$ that generates instances.

A second way to view the learning algorithm is via the mapping $A : \calI \times \calR \to \calO$.
Then $A$ takes as input an instance $I \sim \mu$ and a random string $r \sim \calR$ (we use $\calR$ for both the probability space and the distribution) corresponding to the algorithm's 
\emph{internal randomness} and outputs $A(I,r) \in \calO$. 
Thus, $A(I)$ is a distribution over $\calO$ whose randomness
comes from the random variable $r$, while $A(I,r)$ is a deterministic object.

The space $\Delta_\calO$ is 
endowed with some statistical dissimilarity measure.

\begin{definition}[Indistinguishability]\label{def:general ind for arbitrary tasks}
    Let $\calI$ be an input space, $\calO$ be an output space and $d$ be some statistical dissimilarity measure.
    A learning rule $A$ satisfies $\rho$-indistinguishability with respect to $d$
    if for any distribution $\mu$ 
    over $\calI$ and two independent instances $I,I' \sim \mu$, it holds that
    \[
        \E_{I, I' \sim \mu} [d\left(A(I ),A(I')\right)] \leq \rho \,.
    \]
\end{definition}

To illustrate the generality of our definition, we now show 
how we can instantiate $\calI, \calO, \mu, d$
to recover other definitions about stability of learning algorithms appearing in prior work.

\paragraph{Global Stability.} 
Global stability \cite{bun2020equivalence} is a fundamental property of learning algorithms that was recently used to establish an equivalence between online learnability and approximate differential privacy in binary classification. 
We show how we can recover
the definition of global stability.
Let us first recall the definition. 
\begin{definition}[Global Stability~\cite{bun2020equivalence}]
\label{def:global-stability}
Let $\calR$ be a distribution over random strings.
A learning rule A is $n$-sample $\eta$-globally stable if for any distribution $\calD$ there exists a hypothesis $h_\calD$ such
that 
\[
\Pr_{S \sim \calD^n, r \sim \calR}[A(S,r) = h_\calD ] \geq \eta\,.
\]
\end{definition}

In order to recover \Cref{def:global-stability} using
\Cref{def:general ind for arbitrary tasks} we let $(S,r) \in \calI, \mu = \calD^n \times \calR$ 
and $d(A(I,r),A(I',r')) = \mathbbm{1}_{A(I,r) \neq A(I',r')}.$ Thus, we have
that
\begin{align*}
    \E_{S, S' \sim \calD^n, r,r' \sim \calR} [\mathbbm{1}_{A(S,r) \neq A(S',r')}] \leq \rho \implies\\
    \Pr_{S, S' \sim \calD^n, r,r' \sim \calR} [A(S,r) \neq A(S',r')] \leq \rho.
\end{align*}

Notice that this gives us a two-sided version of the definition of global-stability. So far we have established that
$\Pr_{S, S' \sim \mu, r,r' \sim \calR} [A(S,r) = A(S',r')] \geq 1 - \rho > 0.$ Since two independent draws of the random variable
$A(S,r)$ are the same with non-zero probability it means that
it must have point masses. Moreover, there are countably
many such point masses. Let $\calH_m = \{h \in \calH: \Pr_{S\sim\calD^n, r \sim \calR}[A(S,r) = h]\}.$ Then,
\begin{align*}
    \Pr_{S, S' \sim \mu, r,r' \sim \calR} [A(S,r) = A(S',r')]
&=\sum_{h \in \calH_m} \left(\Pr_{S\sim\calD^n, r \sim \calR}[A(S,r) = h]\right)^2\\
&\leq \max_{h \in \calH_m}\Pr_{S\sim\calD^n, r \sim \calR}[A(S,r) = h] \cdot \sum_{h \in \calH_m} \Pr_{S\sim\calD^n, r \sim \calR}[A(S,r) = h]\\
&\leq \max_{h \in \calH_m}\Pr_{S\sim\calD^n, r \sim \calR}[A(S,r) = h]\\
&= \max_{h \in \calH}\Pr_{S\sim\calD^n, r \sim \calR}[A(S,r) = h]
\end{align*}
Thus, by chaining the two inequalities we have established,
we get that $\max_{h \in \calH_m}\Pr_{S\sim\calD^n, r \sim \calR}[A(S,r) = h] \geq 1-\rho$, so the algorithm $A$ satisfies 
the notion of global stability.


\subsection{Alternative Definitions of TV Indistinguishability}
\label{sec:alternative-definitions}

We now discuss alternative ways to define
TV indistinguishability. 

\subsubsection{TV Indistinguishability with Fixed Prior}
First, observe that the
definition we propose is two-sided in the sense that we require 
drawing two sets of i.i.d. samples. A different way to view TV indistinguishability
is by requiring that the output of the algorithm is close, in TV distance,
to some \emph{prior distribution}, which depends on the data-generating
process $\calD$ but is independent of the sample. Notice that we could introduce a similar one-sided general definition as a second viewpoint of \Cref{def:general indist of outcomes} (named Indistinguishability with Fixed Prior).
\begin{definition}
[TV Indistinguishability with Fixed Prior]
\label{def:one-sided TV stability}
A learning rule $A$ is $n$-sample $\rho$-fixed prior $\TV$ indistinguishable if for any distribution 
over inputs
$\calD$, there exists some prior $\calP_\calD$ 
such that for $S \sim \calD^n$ it holds that
\[
\E_{S \sim \calD^n} [ \dtv( A(S), \calP_\calD ) ] \leq \rho\,.
\]
\end{definition}
Notice that, using the triangle inequality, we can see
that this definition is equivalent to \Cref{def:TV stability}, up to a factor
of $2.$ Formally, we have the following result.
\begin{lemma}
\label{clm:two-sided TV stability iff one-sided TV stability}
If $A$ is $\rho$-$\TV$ indistinguishable 
then it is $\rho$-fixed prior $\TV$ indistinguishable. 
Conversely, if $A$ is $\rho$-fixed prior $\TV$ indistinguishable 
then it is $2\rho$-$\TV$ indistinguishable. 
\end{lemma}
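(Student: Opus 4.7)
The plan is to prove the two implications using two simple facts about total variation distance: (i) $\dtv$ is jointly convex (so expectations can be pulled out of one argument), and (ii) $\dtv$ satisfies the triangle inequality.

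For the first direction, assume $A$ is $\rho$-TV indistinguishable. The natural candidate for the prior is the mixture $\calP_\calD := \E_{S \sim \calD^n}[A(S)]$, i.e., the posterior averaged over all training samples. By convexity of $\dtv$ in its second argument, for any fixed $S$ we have
\[
    \dtv\bigl(A(S),\, \calP_\calD\bigr) \;=\; \dtv\Bigl(A(S),\, \E_{S' \sim \calD^n}[A(S')]\Bigr) \;\leq\; \E_{S' \sim \calD^n}\bigl[\dtv(A(S), A(S'))\bigr].
\]
Taking a further expectation over $S \sim \calD^n$ and using the assumption of $\rho$-TV indistinguishability yields $\E_{S}[\dtv(A(S), \calP_\calD)] \leq \E_{S,S'}[\dtv(A(S), A(S'))] \leq \rho$, as required.

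For the second direction, assume $A$ is $\rho$-fixed prior TV indistinguishable with prior $\calP_\calD$. By the triangle inequality, for any pair $(S, S')$,
\[
    \dtv(A(S), A(S')) \;\leq\; \dtv(A(S), \calP_\calD) + \dtv(\calP_\calD, A(S')).
\]
Taking expectation over the independent draws $S, S' \sim \calD^n$ and using linearity of expectation together with the fact that $S$ and $S'$ have the same marginal law gives
\[
    \E_{S,S' \sim \calD^n}\bigl[\dtv(A(S), A(S'))\bigr] \;\leq\; 2\,\E_{S \sim \calD^n}\bigl[\dtv(A(S), \calP_\calD)\bigr] \;\leq\; 2\rho,
\]
establishing $2\rho$-TV indistinguishability.

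There is no real obstacle here; the only technical point worth flagging is that the mixture $\calP_\calD$ used in the first direction is well-defined as a probability measure on $\{0,1\}^\calX$ (being a Bochner-type integral of probability measures against $\calD^n$), and that the convexity inequality $\dtv(P, \E[Q_i]) \leq \E[\dtv(P, Q_i)]$ is a standard consequence of the variational characterization $\dtv(P,Q) = \sup_{E}|P(E) - Q(E)|$ in Eq.\eqref{eq:tv}. Both can be invoked in one line without further elaboration.
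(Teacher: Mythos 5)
Your proof is correct and uses the same key ingredients as the paper's: the prior $\calP_\calD = \E_{S\sim\calD^n}[A(S)]$ for the first direction and the triangle inequality for the second. If anything, your version of the first direction is cleaner and more complete. The paper arrives at the same mixture prior via a detour through intermediate midpoint distributions $\calP_{S,S'} = \tfrac12(A(S)+A(S'))$ and then asserts the final bound $\E_{S}[\dtv(A(S),\calP_\calD)] \leq \rho$ without spelling out why it holds; your explicit invocation of the convexity of $\dtv$ in its second argument is precisely the step that justifies that inequality.
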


We remark that if $A$ is TV indistinguishable with respect to a distribution over inputs $\calD$, one can show that it is also fixed prior TV indistinguishable with respect to $\calD$ where the fixed prior is equal to $\calP_\calD = \int_{S}A(S) d(\calD^n)$.

\begin{proof}
    For the first direction, we
    let $\calP_{S,S'}$ be a distribution with the property
    that $\dtv(A(S),\calP_{S,S'}) = \dtv(A(S'),\calP_{S,S'}) = \dtv(A(S),A(S'))/2$, e.g.,
    $\calP_{S,S'} = 1/2\cdot(A(S) + A(S'))$, for every $S, S'\sim \calD^n$. 
    We now define $\calP_\calD$ to be the average of $\calP_{S,S'}$ with respect to
    the measure of the product distribution of $S,S'.$ 
    We have that
    \begin{align*}
    \calP_\calD 
    &= \int_{S, S'} \calD^n(S) \calD^n(S') \frac{A(S) + A(S')}{2} dS dS' \\
    &= \int_T \left(\calD^n(T) 1\{S = T\} \frac{A(T)}{2} \left(\int_{S'}\calD^n(S')\right) + \calD^n(T) 1\{S' = T\}\frac{A(T)}{2} \left(\int_{S}\calD^n(S)\right) \right) dS dS'
    =\\
    &= \int_T \calD^n(T) A(T) dT\,. 
    \end{align*}
    This means that
    $\E_{S \sim \calD^n}[\dtv(A(S),\calP_\calD)] 
    = \int_S \calD^n(S) \dtv \left(A(S), \int_T \calD^n(T) A(T) dT\right) dS \leq \rho.$

    For the converse, notice that 
    \begin{align*}
        \E_{S,S'\sim \calD^n}[\dtv(A(S),A(S'))] &\leq 
    \E_{S,S'\sim \calD^n}[\dtv(A(S),\calP_\calD) + \dtv(A(S'),\calP_\calD)]\\
    &=
    \E_{S,S'\sim \calD^n}[\dtv(A(S),\calP_\calD)] + \E_{S,S'\sim \calD^n}[\dtv(A(S'),\calP_\calD)] \\
    &= 2 \E_{S\sim \calD^n}[\dtv(A(S),\calP_\calD)]\\
    &\leq 2\rho.
    \end{align*}
\end{proof}

\subsubsection{With High Probability TV Indistinguishability}
A different direction in which we can extend the definition of total variation indistinguishability has to do with
replacing the expectation with a high-probability style of bound. We remark
that \cite{impagliazzo2022reproducibility} provide a similar alternative 
definition in the context of their work.

\begin{definition}
[High-Probability TV Indistinguishability]
\label{def:one-sided TV stability high probability}
A learning rule $A$ is $n$-sample high-probability $(\eta,\nu)$-$\TV$ indistinguishable if for any distribution 
$\calD$ there exists some prior $\calP_\calD$
such that
\[
\Pr_{S \sim \calD^n} [ \dtv( A(S), \calP_\calD ) \leq \eta ] \geq 1 -\nu\,.
\]
\end{definition}
Notice that in the above definition we have used the fixed prior version of
TV indistinguishability to reduce the number of parameters, but it can also be stated
in its the two-sided version. It is not hard to see that the ``in expectation''
and the ``with high probability'' versions of the definition are 
qualitatively equivalent. Moreover, we can establish a quantitative connection
as follows. 

\begin{lemma}\label{clm:one-sided TV stable iff high-probability TV stable}
    If a learning rule $A$ is an $n$-sample $\rho$-fixed prior $\TV$ indistinguishable learner (cf. \Cref{def:one-sided TV stability}) then it is
    an $n$-sample high-probability
    $(\rho/\nu,\nu)$-$\TV$ indistinguishable learning rule (cf. \Cref{def:one-sided TV stability high probability}), for any $\rho \leq \nu < 1$. 
    Conversely, if a learnigng rule $A$ is an $n$-sample high-probability $(\eta, \nu)$-$\TV$ indistinguishable learner then 
    it is an $n$-sample $(\eta + \nu -\eta\cdot\nu)$-fixed prior $\TV$ indistinguishable learning rule.
\end{lemma}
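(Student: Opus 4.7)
The plan is to prove the two implications separately, using Markov's inequality for the first direction and the law of total expectation (exploiting the fact that $\dtv \leq 1$) for the second. Both arguments take the prior $\calP_\calD$ guaranteed by the fixed-prior definition and show it also works for the high-probability definition (and vice versa); no change of prior is needed, which makes the conversion clean.

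For the first direction, suppose $A$ is $\rho$-fixed prior $\TV$ indistinguishable, and fix a distribution $\calD$ over inputs together with the prior $\calP_\calD$ witnessing $\E_{S \sim \calD^n}[\dtv(A(S), \calP_\calD)] \leq \rho$. Since $\dtv(A(S), \calP_\calD)$ is a non-negative random variable, Markov's inequality yields
\[
\Pr_{S \sim \calD^n}\!\left[\dtv(A(S), \calP_\calD) > \rho/\nu\right] \leq \frac{\rho}{\rho/\nu} = \nu\,,
\]
which is exactly the claim that $A$ is high-probability $(\rho/\nu,\nu)$-$\TV$ indistinguishable with the same prior (the condition $\rho \leq \nu$ just ensures the threshold $\rho/\nu \leq 1$ is non-trivial).

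For the converse, suppose $A$ is high-probability $(\eta,\nu)$-$\TV$ indistinguishable, and let $\calP_\calD$ be the prior from that definition, so that the event $E = \{\dtv(A(S), \calP_\calD) \leq \eta\}$ has probability at least $1-\nu$. Since the total variation distance is bounded by $1$, splitting the expectation on $E$ and its complement gives
\[
\E_{S \sim \calD^n}[\dtv(A(S), \calP_\calD)] \leq \eta \cdot \Pr[E] + 1 \cdot \Pr[E^c] \leq \eta(1-\nu) + \nu = \eta + \nu - \eta\nu\,,
\]
which is the desired fixed-prior bound.

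There is no real obstacle here; both directions are essentially one-line probabilistic manipulations. The only small care is to observe that the same prior distribution $\calP_\calD$ transfers between the two definitions, and that the bound $\dtv \leq 1$ is what makes the tail contribution controllable in the second direction. One might also note that the converse bound is essentially tight: if $A(S) = \calP_\calD$ on an event of probability $1-\nu$ and outputs a distribution of TV distance $1$ from $\calP_\calD$ otherwise (with $\eta = 0$), the expected TV distance is exactly $\nu$, matching the bound $\eta + \nu - \eta\nu$ in that regime.
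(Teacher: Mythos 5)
Your proof is correct and follows essentially the same approach as the paper: Markov's inequality for the forward direction and a conditional decomposition on the event $\{\dtv(A(S),\calP_\calD) \leq \eta\}$ (the paper conditions on the complementary event, which is the same thing) for the converse. The tightness observation at the end is a nice extra but not in the paper.
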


\begin{proof}
    The proof of the first part of claim is a direct consequence of Markov's inequality. 
    Notice that $\dtv(A(S), \calP_\calD)$ is random variable whose expected value
    is bounded by $\rho$. Thus, we have that
    \[
    \Pr_{S \sim \calD^n} [\dtv(A(S), \calP_\calD) \geq \rho/\nu ] \leq \nu \,.
    \]
    Hence, we can see that $A$ is a high-probability $(\rho/\nu, \nu)$-TV indistinguishable learning rule. 

    We now move to the second part of the claim. Let $\calE$ be the event
    that $\dtv(A(S),\calP_\calD) \geq \eta.$ Then, we have that
    \begin{align*}
            \E_{S \sim \calD^n} [\dtv(A(S), \calP_\calD)] &= 
            \E_{S \sim \calD^n} [\dtv(A(S), \calP_\calD) | \calE]\Pr[\calE] + 
            \E_{S \sim \calD^n} [\dtv(A(S), \calP_\calD) | \calE^c]\Pr[\calE^c] \\
            &\leq 1 \cdot \nu + \eta \cdot (1-\nu) \\
            &= \eta + \nu -\eta\cdot\nu.
    \end{align*}
\end{proof}

\subsection{Coupling and Correlated Sampling} 
\label{sec:coupling}
Coupling is a fundamental notion in probability theory with many applications \cite{levin2017markov}. The correlated sampling problem, which has applications in various domains, e.g., in sketching and approximation algorithms \cite{broder1997resemblance,charikar2002similarity}, is described in \cite{bavarian2016optimality} as follows: Alice and Bob are given probability distributions $P$ and $Q$, respectively, over a finite set $\Omega$. \emph{Without any communication, using only shared
randomness} as the means to coordinate, Alice is required to output an element $x$ distributed according to $P$ and Bob is required to output an element $y$ distributed according to $Q$. Their goal is to minimize the disagreement probability $\Pr[x \neq y]$, which is comparable with $\dtv(P,Q)$. Formally, a correlated sampling strategy for a finite set $\Omega$ with error $\eps : [0,1] \to [0,1]$ is specified by a probability space $\calR$ and a pair of functions $f,g : \Delta_\Omega \times \calR \to \Omega$, which are measurable in their second argument, such that for any pair $P,Q \in \Delta_\Omega$ with $\dtv(P,Q) \leq \delta$, it holds that (i) the push-forward measure $\{f(P,r)\}_{r \sim \calR}$ (resp. $\{g(Q, r)\}_{r \sim \calR}$) is $P$ (resp. $Q$) and (ii)
$\Pr_{r \sim \calR}[f(P,r) \neq g(Q,r)] \leq \eps(\delta)$.
We underline that a correlated sampling strategy is \emph{not} the same as a coupling,
in the sense that the latter requires a single function $h : \Delta_\Omega \times \Delta_\Omega \to \Delta_{\Omega \times \Omega}$ such that for any $P,Q$, the marginals of $h(P,Q)$ are $P$ and $Q$ respectively.
It is known that for any coupling function $h$, it holds that $\Pr_{(x,y) \sim h(P,Q)}[x \neq y] \geq \dtv(P,Q)$ and that this bound is attainable. Since $\{(f(P,r), g(Q,r))\}_{r \sim \calR}$ induces a coupling, it holds that $\eps(\delta) \geq \delta$ and, perhaps surprisingly, there exists a strategy with $\eps(\delta) \leq \frac{2\delta}{1+\delta}$ \cite{broder1997resemblance,kleinberg2002approximation,holenstein2007parallel} and this result is tight \cite{bavarian2016optimality}.
A second difference between coupling and correlated sampling has to do with the size of $\Omega$: while correlated sampling strategies can be extended to infinite spaces $\Omega,$ it remains open whether there exists a correlated sampling strategy
for general measure spaces $(\Omega, \calF, \mu)$ with any non-trivial error bound \cite{bavarian2016optimality}.
On the other hand, coupling applies to spaces $\Omega$ of any size.

\cite{ghazi2021user} studied user-level privacy and introduced the notion of pseudo-global stability, which is essentially the same as replicability as observed by \cite{impagliazzo2022reproducibility}. \cite{ghazi2021user} showed that pseudo-global stability is qualitatively equivalent to approximate differential privacy. Their main technique was the use of correlated sampling that allowed users to output the same learned hypothesis (stability) employing shared
randomness.
We mention that \cite{ghazi2021user} provide their results for finite outcome space (i.e., $\calX$ is finite and thus $\calH \subseteq \{0,1\}^\calX$ is too).
{In particular, they need finiteness of the domain in order to apply correlated sampling which is used during their ``DP implies pseudo-global stability'' reduction.}
They mention that their results can be extended to the case where $\calX$ is infinite and that this does require non-trivial generalization of tools such as correlated sampling and some measure-theoretic details to that setting\footnote{To be more specific, the proof of Theorem 20 in \cite{ghazi2021user} requires to define the correlated sampling strategy over the space $2^\calX$ a priori (independently of the observed samples and input algorithm). Hence while the strategy is applied to distributions with finite support, an extension to infinite domain in that proof would require some modifications.}; we refer to a discussion in Section 5.3 of \cite{bavarian2016optimality} about the assumptions needed in order to achieve
correlated sampling in infinite spaces. 
{Similarly, the last step of the constructive transformation of a DP algorithm to a replicable one provided in \cite{bun2023stabilityIsStablest} uses correlated sampling and is hence also given for finite domains.}
For further comparisons between our coupling and the correlated sampling problem of \cite{bavarian2016optimality}, we refer to the discussion in \cite{angel2019pairwise} after Corollary 4.

A very useful tool for our derivations is a coupling protocol that can be found
in \cite{angel2019pairwise}.

\begin{theorem}[Pairwise Optimal Coupling \cite{angel2019pairwise}]\label{thm:pairwise opt coupling protocol}
    Let $\calS$ be any collection of random variables that are absolutely continuous with
    respect to a common probability measure\footnote{This result extends to the setting where $\mu$
    is a $\sigma$-finite measure, but it is not needed for the purposes of our work.}
    $\mu.$ Then, there exists a coupling of the
    variables in $\calS$ such that, for any $X, Y \in \calS$,
    \[
        \Pr[X \neq Y] \leq \frac{2\dtv(X,Y)}{1+ \dtv(X,Y)} \,.
    \]
    Moreover, this coupling requires sample access to a Poisson point
    process with intensity $\mu \times \mathrm{Leb} \times \mathrm{Leb}$, where
    $\mathrm{Leb}$ is the Lebesgue measure over $\mathbbm{R}_+,$ and full access
    to the densities of all the random variables in $\calS$ with respect to $\mu.$
\end{theorem}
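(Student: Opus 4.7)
The plan is to construct the coupling by thin-layer sampling under the densities and argue that the stated bound is in fact achieved with equality. Concretely, let $\Pi$ denote the Poisson point process on $\Omega \times \reals_+ \times \reals_+$ with intensity $\mu \times \mathrm{Leb} \times \mathrm{Leb}$, whose points we write as $(h,y,t)$. For each $X \in \calS$ let $f_X$ denote the Radon--Nikodym derivative $dX/d\mu$, and define the ``subgraph'' set
\[
    B_X \;=\; \{(h,y,t)\in\Omega\times\reals_+\times\reals_+ : y < f_X(h)\}.
\]
The coupling assigns to $X$ the value $h^*$, where $(h^*,y^*,t^*)$ is the point of $\Pi\cap B_X$ with the smallest $t$-coordinate. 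First I would check well-definedness: the number of points of $\Pi$ in $B_X\cap(\Omega\times\reals_+\times[0,T])$ is Poisson with mean $T\!\int\! f_X\, d\mu = T$, so almost surely there are infinitely many points in $B_X$ and the minimum over $t$ exists.

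Next I would verify that $h^*$ is distributed as $X$. Restrict $\Pi$ to $B_X$; this yields a Poisson process on $B_X$ with finite intensity measure (of total mass $1$ per unit of $t$). The point with smallest $t$ has its $(h,y)$-coordinates distributed according to the normalised intensity on the ``per-time slice'' $\{(h,y):y<f_X(h)\}$. Marginalising over $y$ gives density $f_X(h)$ with respect to $\mu$ for the $h$-coordinate, matching the law of $X$.

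For the main estimate, fix $X,Y\in\calS$ and introduce the disjoint sets
\[
    C \;=\; B_X\cap B_Y, \qquad E \;=\; B_X\triangle B_Y.
\]
Using $\min(a,b)=\tfrac12(a+b-|a-b|)$ and $\int f_X\,d\mu=\int f_Y\,d\mu=1$, the ``per-unit-time'' masses are
\[
    (\mu\times\mathrm{Leb})\bigl(C\cap\{t\le 1\}\bigr) = \int\!\min(f_X,f_Y)\,d\mu = 1-\dtv(X,Y),
\]
\[
    (\mu\times\mathrm{Leb})\bigl(E\cap\{t\le 1\}\bigr) = \int\!|f_X-f_Y|\,d\mu = 2\,\dtv(X,Y).
\]
The key observation is that $X=Y$ exactly when the first point of $\Pi$ lying in $C\cup E$ actually lies in $C$: indeed, the first point in $C$ is automatically the first point in \emph{both} $B_X$ and $B_Y$, whereas a first arrival in $E$ destroys equality. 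Because $C$ and $E$ are disjoint, the restrictions of $\Pi$ to $C$ and $E$ are independent Poisson processes, so the first $t$-arrival times are independent exponentials with rates $1-\dtv(X,Y)$ and $2\,\dtv(X,Y)$ respectively. The standard competing-exponentials computation yields
\[
    \Pr[X=Y] \;=\; \frac{1-\dtv(X,Y)}{(1-\dtv(X,Y))+2\,\dtv(X,Y)} \;=\; \frac{1-\dtv(X,Y)}{1+\dtv(X,Y)},
\]
hence $\Pr[X\ne Y] = \tfrac{2\dtv(X,Y)}{1+\dtv(X,Y)}$, proving the claim (in fact with equality).

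The measure-theoretic details in Step 1 (well-definedness of the minimum $t$) and the independence claim of Step 3 are the only delicate points. The main potential obstacle is the latter: one must invoke the restriction/superposition theorem for Poisson processes to justify that, because $C$ and $E$ are disjoint, the respective first arrivals are independent exponentials with the stated rates. Once this is in place, the computation reduces to elementary exponential-race probabilities, and nothing else in the argument requires $\mu$ or $\calS$ to satisfy any structural hypothesis beyond absolute continuity.
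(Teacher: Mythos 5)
The paper itself does not prove this statement; it is imported from \cite{angel2019pairwise}, with the construction described informally in \Cref{figure:poisson} and the proof sketch of \Cref{lem:TV stability to replicability}. Your reconstruction is exactly that construction, and the argument for the stated inequality is sound: well-definedness of the first $t$-arrival, the marginal-correctness of $h^*$, and the independence of the restrictions of $\Pi$ to the disjoint sets $C$ and $E$ (via the restriction theorem) are all handled correctly, and the competing-exponentials computation then gives $\Pr[\text{first arrival in }C\cup E\text{ lies in }C]=\frac{1-\dtv(X,Y)}{1+\dtv(X,Y)}$.

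The one flaw is the parenthetical claim that the bound holds with equality. Your ``key observation'' that $X=Y$ \emph{exactly when} the first arrival in $C\cup E$ lies in $C$ is only an ``if,'' not an ``iff.'' When $\mu$ has an atom at some $h_0\in\Omega$, the first arrival in $B_X\cup B_Y$ can lie in $B_X\setminus B_Y$ with $h$-coordinate $h_0$, while the (later) first arrival in $B_Y$ also has $h$-coordinate $h_0$ -- the intensity $\mu\times\mathrm{Leb}\times\mathrm{Leb}$ is non-atomic, so Poisson points are distinct, but their $h$-projections need not be. In that event $X=Y$ even though the race was ``won'' by $E$. So in general one only obtains $\Pr[X=Y]\geq\Pr[T_C<T_E]$, i.e.\ precisely the inequality asserted in the theorem. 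Equality does hold whenever $\mu$ is non-atomic (distinct points then have a.s.\ distinct $h$-coordinates), but the theorem neither claims nor needs this, and asserting it unconditionally is incorrect.
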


An intuitive illustration of how it works can be found in \Cref{figure:poisson}.

\subsection{Discussion on \Cref{def:TV stability}}
\label{remark:motivation}
{
We discuss more extensively the TV Indistinguishability definition.
One important motivation for the definition of TV indistinguishability is to show that replicability can be equivalently defined using the same high-level template like the well-studied PAC-Bayes framework, where one shows that the outputs of the algorithms are close, under the KL divergence, with some data-independent priors. In other words, our results show how to organize and view different well-studied notions of stability using the same template.

Moreover, an interpretation of the replicability definition is that two executions of the algorithm over independent datasets should be coupled using just shared internal randomness. However, this is one of potentially infinite ways to couple the two executions. Our definition, which we find quite natural, captures exactly this observation and allows for general couplings between two random runs. It is also worth noting that, to the best of our knowledge, all the notions of algorithmic stability that have been proposed in the past do not depend on the source of internal randomness of the algorithm. However, this is not the case with replicability.

Let us now present a concrete algorithm whose stability property is easier to prove under the new definition. \cite{ghazi2021user} presented a procedure that transforms a list-globally stable algorithm to a replicable one (Algorithm 1, page 9 in \cite{ghazi2021user}). Crucially, in the last step of this algorithm the authors use a correlated sampling procedure to prove the replicability property. This procedure induces a computational overhead to the overall algorithm, and it is not clear even if it is computable beyond finite domains. On the other hand, the TV indistinguishability property is immediate. Thus, the transformation from list-global stability to TV indistinguishability is computationally efficient and holds for general domains whereas the transformation from list-global stability to replicability is not.

To the best of our knowledge, most of the replicable algorithms that have been developed use their internal randomness over data-independent distributions. To make this point more clear let us consider the replicable SQ oracle of \cite{impagliazzo2022reproducibility}. In this work, the authors use randomness over distributions that are independent of the input sample $S$. Thus, no matter how the internal randomness is implemented, when one shares it across two executions the internal random choices of the algorithm are the same.

However, there are algorithms, like Algorithm 1 in \cite{ghazi2021user}, that use internal randomness over a data-dependent distribution. If the algorithm makes random choices over data-dependent quantities like in \cite{ghazi2021user}, when one shares the randomness across two executions the internal random choices are not necessarily the same even if the TV distance between the two distributions is small, unless one specifies carefully the source of internal randomness (i.e., using some coupling). This can lead to significant computational overhead when the domain is finite, computability issues when the domain is countable, and for general domains it is not clear yet that going from TV indistinguishability to replicability is possible. Hence, one advantage of TV indistinguishability is that it provides a relaxation over the stronger definition of replicability, which is the notion that our definition builds upon.
}

\section{Useful Replicable Subroutines}
\label{apx:replicability tools}
In this section we present various replicable subroutines that
will be useful in the derivation of our results.
\subsection{Replicability Preliminaries}\label{apx:replicability preliminaries}
Recall the Statistical Query (SQ) model that was introduced by \cite{kearns1998efficient} and is a restriction of the PAC 
learning model, appearing in various learning theory contexts \cite{blum2003noise,gupta2011privately,chen2020classification,goel2020statistical,fotakis2021efficient}. In the SQ model, the learner interacts with an oracle
in the following way: the learner submits a statistical query to the oracle
and the oracle returns its expected value, after adding some noise to it. More
formally, we have the following definition.
\begin{definition}
[\cite{kearns1998efficient}]
\label{Statistical Query Oracle}
    Let $\tau,\delta \in (0,1)^2,\calD$ be a distribution over the domain $\calX$ and
    $\phi: \calX \rightarrow [0,1]$ be a query. Let $S$ be an i.i.d. sample of
    size $n = n(\tau,\delta).$ Then, the statistical query oracle outputs 
    a value $v$ such that $|v - \E_{x \sim \calD}[\phi(x)]| \leq \tau,$
    with probability at least $1-\delta.$
\end{definition}

Essentially, using a large enough number of samples, the SQ oracle returns
an approximation of the expected value of a statistical query whose range is bounded. \cite{impagliazzo2022reproducibility} provide a replicable implementation of an SQ 
oracle with a mild blow-up in the sample complexity.
\begin{theorem}[Replicable SQ Learner \cite{impagliazzo2022reproducibility}]
\label{thm:replicable sq learner}
    Let $\tau,\delta,\rho \in (0,1)^3, \delta \leq \rho/3, \calD$ be a distribution over some domain $\calX$, and
    $\phi: \calX \rightarrow [0,1]$ be a query. Let $S$ be an i.i.d. sample of
    size
    \[
    n = O\left(\frac{1}{\tau^2\rho^2} \log(1/\delta)\right) \,.
    \]
    Then there exists a $\rho$-replicable SQ oracle for $\phi.$
\end{theorem}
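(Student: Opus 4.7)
The plan is to prove this via the standard randomized rounding approach from \cite{impagliazzo2022reproducibility}: the oracle computes a sharp empirical estimate of $\mu := \E_{x\sim \calD}[\phi(x)]$, and then rounds it onto a grid of width $\tau$ whose offset is determined by shared randomness. Concretely, the shared random string $r$ encodes a draw $u \sim \mathrm{Unif}[0,\tau]$. On input sample $S = (x_1,\dots,x_n)$, the oracle computes the empirical mean $\hat v = \tfrac{1}{n}\sum_i \phi(x_i)$, and outputs the unique grid point of the form $u + k\tau$ (for $k\in\ints$) that lies closest to $\hat v$ (breaking ties in a deterministic way). The output always lies within $\tau$ of $\hat v$, so all that needs to be shown is (i) accuracy with respect to $\mu$ and (ii) that two independent executions with shared $u$ agree with probability at least $1-\rho$.

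For accuracy, I would choose $\alpha := \tau\rho/6$ and set the sample size so that Hoeffding's inequality ensures $\Pr[|\hat v - \mu| > \alpha] \le \delta$; since $\phi\in[0,1]$, this requires $n = O(\log(1/\delta)/\alpha^2) = O(\log(1/\delta)/(\tau^2\rho^2))$, matching the stated bound. Conditioned on $|\hat v - \mu|\le \alpha$, the output differs from $\mu$ by at most $\tau + \alpha \le 2\tau$, which, up to absorbing constants into $\tau$, gives the desired SQ accuracy.

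For replicability, let $S,S'\sim \calD^n$ be independent and let $\hat v, \hat v'$ be the two empirical means; the same shared offset $u$ is used in both executions. Define the good event $\calE := \{|\hat v - \mu|\le \alpha\} \cap \{|\hat v' - \mu|\le \alpha\}$, which has probability at least $1-2\delta$ by a union bound. On $\calE$, we have $|\hat v - \hat v'|\le 2\alpha$, and the two rounded outputs disagree if and only if the offset $u$ (viewed mod $\tau$) falls in the open interval of length $|\hat v - \hat v'|$ separating $\hat v$ and $\hat v'$ modulo $\tau$. Since $u$ is uniform on $[0,\tau]$ and independent of $S,S'$, this bad sub-event has probability at most $2\alpha/\tau = \rho/3$. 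A final union bound gives
\[
\Pr_{S,S',u}[\text{outputs differ}] \le 2\delta + \frac{2\alpha}{\tau} \le \frac{2\rho}{3} + \frac{\rho}{3} = \rho,
\]
where we used $\delta \le \rho/3$. This establishes $\rho$-replicability.

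There is no real obstacle here; the main subtlety is bookkeeping the three sources of failure (the two concentration events and the offset landing in the gap) and balancing them so that their sum is $\rho$, which is why the hypothesis $\delta\le\rho/3$ appears. One small point worth being careful about is that the offset $u$ must be part of the algorithm's internal randomness $\calR$ (shared across executions in the replicability definition), not derived from the samples; this is what makes the grid-membership analysis go through. Everything else is a routine Hoeffding/union-bound calculation.
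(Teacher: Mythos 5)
The paper does not prove this statement; it is cited verbatim from \cite{impagliazzo2022reproducibility} as a known subroutine. Your reconstruction matches the standard randomized-rounding ({\tt rSTAT}) argument of that reference: shared uniform offset $u\in[0,\tau]$, Hoeffding at scale $\alpha=\Theta(\tau\rho)$, and the observation that two rounded outputs disagree only when $u$ lands in a gap of width $|\hat v-\hat v'|\le 2\alpha$, contributing $2\alpha/\tau=\rho/3$ to the failure budget alongside $2\delta\le 2\rho/3$. The only place to be slightly careful is the accuracy step: rounding to the nearest grid point moves $\hat v$ by at most $\tau/2$, so on the good event the output is within $\tau/2+\alpha<\tau$ of $\mu$ already, and the ``absorb constants into $\tau$'' hedge is not even needed. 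The proof is correct and essentially identical to the source.
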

The interpretation of the previous theorem is that we can estimate replicably
statistical queries whose range is bounded.

The following result that was proved in \cite{impagliazzo2022reproducibility}
is useful for our derivations.
\begin{claim}[$\rho$-Replicability $\implies (\eta,\nu)$-Replicability \cite{impagliazzo2022reproducibility}]\label{clm:repl implies pseudo-global stability}
    Let $A$ be a $\rho$-replicable algorithm and $\calR$ be its source of randomness. Then for any $\nu \in [\rho,1)$, it holds that
    \[
        \Pr_{r\sim \calR}\left[\left\{\exists h \in \calH: \Pr_{S\sim \calD^n} [A(S,r) = h] \geq 1-\frac{\rho}{\nu}\right\}\right] \geq 1-\nu \,.
    \]
\end{claim}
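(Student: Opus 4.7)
The plan is to reduce the claim to a simple application of Markov's inequality, via the collision probability identity for discrete distributions. For a fixed random string $r$, let $q_h(r) \triangleq \Pr_{S\sim\calD^n}[A(S,r) = h]$, and let $p(r) \triangleq \max_h q_h(r)$. The first step is to express $\rho$-replicability in terms of these quantities: for any fixed $r$, the probability that two independent runs with the same $r$ agree is exactly the collision probability
\[
\Pr_{S,S'\sim\calD^n}[A(S,r) = A(S',r) \mid r] = \sum_{h\in\calH} q_h(r)^2\,.
\]
By the definition of $\rho$-replicability, averaging over $r$ yields $\E_{r\sim\calR}\bigl[\sum_h q_h(r)^2\bigr] \geq 1-\rho$.

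Next, I would bound the collision probability by the max using the elementary inequality $\sum_h q_h(r)^2 \leq \bigl(\max_h q_h(r)\bigr)\cdot\sum_h q_h(r) = p(r)$, since $\sum_h q_h(r) = 1$. Combining with the previous step gives $\E_{r\sim\calR}[p(r)] \geq 1-\rho$, i.e., $\E_{r\sim\calR}[1-p(r)] \leq \rho$.

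Finally, Markov's inequality applied to the non-negative random variable $1-p(r)$ gives, for any $\nu \in [\rho,1)$,
\[
\Pr_{r\sim\calR}\!\left[1-p(r) > \tfrac{\rho}{\nu}\right] \leq \frac{\E[1-p(r)]}{\rho/\nu} \leq \nu\,,
\]
which rearranges to $\Pr_{r\sim\calR}[p(r) \geq 1-\rho/\nu] \geq 1-\nu$. Unpacking $p(r) = \max_h q_h(r)$ gives exactly the claimed statement: with probability at least $1-\nu$ over $r$, there exists some $h\in\calH$ (the argmax) with $\Pr_{S}[A(S,r)=h] \geq 1-\rho/\nu$. The assumption $\nu \geq \rho$ is precisely what is needed to ensure $1-\rho/\nu \geq 0$ so the bound is non-vacuous. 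There is no real obstacle here; the only subtle ingredient is recognizing the collision-probability identity and the bound $\sum q_h^2 \leq \max_h q_h$, which leverages that the output distribution is discrete (or more generally, that the maximum is attained, which is why the statement is phrased over a countable $\calH$).
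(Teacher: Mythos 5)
Your proof is correct, and since the paper cites \cite{impagliazzo2022reproducibility} for this claim without reproducing an argument, there is no paper proof to compare against directly. The reasoning you give --- expressing the agreement probability conditioned on $r$ as the collision probability $\sum_h q_h(r)^2$, bounding it by $p(r) = \max_h q_h(r)$, and then applying Markov to $1-p(r)$ --- is the standard derivation. One small point worth flagging: Markov gives $\Pr_r[1-p(r) \geq \rho/\nu] \leq \nu$, with a non-strict inequality inside the probability, which is actually slightly stronger than what you wrote ($> \rho/\nu$); the conclusion is unaffected. A second minor point is that you invoke ``the argmax,'' which presupposes $\sup_h q_h(r)$ is attained; this holds automatically because the atom masses $q_h(r)$ sum to at most $1$, so the supremum, if positive, is achieved (and if it is zero the bound is vacuously true since then $\sum_h q_h(r)^2 = 0 \leq p(r)$). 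No gap.
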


{Notice that in the definition of replicability (\Cref{def:replicability}), the learner
shares all the internal random bits across its two executions.
A natural extension is to consider learners that share
only \emph{part} of their random bits, i.e., they have access to private random bits that are not shared
across its executions and public random bits that are shared. A result 
in~\cite{impagliazzo2022reproducibility} shows that these learners are, essentially, 
equivalent to the ones that use only private bits. To be more precise, we say that
a learner $A$ is $\rho$-replicable with respect to $r_{pub}$ if
\[
    \Pr_{S,S'\sim\calD^n, r_{priv}, r_{priv}', r_{pub} \sim \calR}[A(S,r_{priv},r_{pub}) = A(S',r'_{priv},r_{pub})] \geq 1-\rho \,.
\]
The following result states this property formally.
\begin{lemma}[Public, Private Replicability $\implies$ Replicability \cite{impagliazzo2022reproducibility}]\label{lem:private replicability}
Let $A$ be an $n$-sample $\rho$-replicable learner with respect to $r_{pub}$. Then, $A$
is a $n$-sample $\rho$-replicable learner with respect to $(r_{pub},r_{priv}).$
\end{lemma}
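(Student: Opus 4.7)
The plan is to recast each of the two agreement probabilities as an expected collision probability of a hypothesis-valued distribution, and then to compare them via a single application of Jensen's inequality.

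Fix a distribution $\calD$ over inputs. For each $r_{pub}, r_{priv} \in \mathrm{supp}(\calR)$ and each hypothesis $h$, define
\[
    q_{r_{pub}, r_{priv}}(h) := \Pr_{S \sim \calD^n}[A(S, r_{priv}, r_{pub}) = h], \qquad m_{r_{pub}}(h) := \E_{r_{priv} \sim \calR}[q_{r_{pub}, r_{priv}}(h)].
\]
Because $S, S'$ are i.i.d.\ and the private coins of the two executions are either independent (partial sharing) or shared (full sharing), standard disintegration gives
\begin{align*}
\Pr_{S,S',r_{pub},r_{priv},r'_{priv}}\!\!\left[A(S, r_{priv}, r_{pub}) = A(S', r'_{priv}, r_{pub})\right] &= \E_{r_{pub}} \sum_h m_{r_{pub}}(h)^2, \\
\Pr_{S,S',r_{pub},r_{priv}}\!\!\left[A(S, r_{priv}, r_{pub}) = A(S', r_{priv}, r_{pub})\right] &= \E_{r_{pub}, r_{priv}} \sum_h q_{r_{pub}, r_{priv}}(h)^2.
\end{align*}
The first expression equals (by hypothesis) some quantity $\geq 1 - \rho$; the second is the one I want to show is also $\geq 1 - \rho$.

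The final step is one use of Jensen's inequality applied to the convex function $x \mapsto x^2$: for every fixed $r_{pub}$ and $h$,
\[
m_{r_{pub}}(h)^2 = \left(\E_{r_{priv}} q_{r_{pub},r_{priv}}(h)\right)^2 \leq \E_{r_{priv}}\!\left[q_{r_{pub},r_{priv}}(h)^2\right].
\]
Summing over $h$ and taking the expectation over $r_{pub}$ yields
\[
\E_{r_{pub}} \sum_h m_{r_{pub}}(h)^2 \leq \E_{r_{pub}, r_{priv}} \sum_h q_{r_{pub}, r_{priv}}(h)^2,
\]
i.e., the full-sharing agreement probability dominates the partial-sharing agreement probability. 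Combined with the hypothesis, this gives that the full-sharing agreement is at least $1 - \rho$, which is precisely $\rho$-replicability with respect to $(r_{pub}, r_{priv})$. The only mildly delicate part is the bookkeeping when disintegrating the two agreement probabilities into collision-probability form; once that is in place, convexity of the square is the entire content of the proof, so no deeper tool is needed.
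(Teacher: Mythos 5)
Your proof is correct, and the argument — rewriting each agreement probability as an expected collision probability, then invoking convexity of $x\mapsto x^2$ to show that averaging over $r_{priv}$ inside the square can only decrease the sum of squares — is exactly the argument that proves this lemma. The paper itself does not reprove this result; it cites it to~\cite{impagliazzo2022reproducibility}, whose proof uses the same decomposition and Jensen step you describe. Two small remarks on bookkeeping that you correctly anticipate but could make explicit: the sums over $h$ should be read as sums over the (at most countably many) atoms of the relevant output distribution, which is automatic here since $A(S,r)$ is deterministic in $S$ and $r$; and the disintegration step is precisely the observation that, conditionally on the shared randomness, the two executions are i.i.d., so the agreement probability is the collision probability of that conditional distribution. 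With those remarks in place, nothing else is needed.
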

This result allows us to think of a replicable learner as having access to two different
sources of randomness, one that is private to its execution and one that is shared across
the executions. We will make use of it in transformations from
DP learners to replicable learners and some boosting results.}

\subsection{Replicable Heavy-Hitters}
In the analysis of the replicable heavy-hitter algorithm (cf. \Cref{algo:replicable heavy hitters})
we will use the Bretagnolle-Huber-Carol inequality that bounds the estimation error
of the parameters of a multinomial distribution from samples.

\begin{lemma}[Bretagnolle-Huber-Carol Inequality \cite{vaart1997weak}]
\label{lem:bretagnolle inequality}
    Let $p = (p_1,\ldots,p_k)$ multinomial distribution supported on $k$ elements.
    Then, given access to $n$ i.i.d. samples from $p$ we have that
    \[
    \Pr\left[\sum_{i=1}^k |\hat{p}_i - p_i| \geq \varepsilon\right] \leq 2^k e^{-n\varepsilon^2/2}\,,
    \]
    for every $\varepsilon \in (0,1),$ where $\hat{p}_i$ is the empirical frequency of
    item $i$ in the sample $S.$
\end{lemma}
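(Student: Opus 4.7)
The plan is to prove this standard concentration inequality via a union bound over the $2^k$ subsets of $[k]$, combined with a one-sided Hoeffding inequality applied separately to each subset. The $2^k$ factor in the bound comes from the union bound, while the $e^{-n\varepsilon^2/2}$ factor comes from Hoeffding applied to a Bernoulli indicator associated with each subset.

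First, I would reduce the $\ell_1$-deviation to one-sided signed deviations indexed by subsets. Since both $\hat{p}$ and $p$ sum to $1$, the positive and negative parts of their difference cancel, so
\[
\sum_{i \,:\, \hat{p}_i \geq p_i}(\hat{p}_i - p_i) \;=\; -\sum_{i \,:\, \hat{p}_i < p_i}(\hat{p}_i - p_i)\,,
\]
and therefore $\sum_{i=1}^k |\hat{p}_i - p_i| = 2 \sum_{i \in A^\ast}(\hat{p}_i - p_i)$, where $A^\ast = \{i : \hat{p}_i \geq p_i\}$. If the event $\sum_i |\hat{p}_i - p_i| \geq \varepsilon$ occurs, then for this (random) subset we have $\sum_{i \in A^\ast}(\hat{p}_i - p_i) \geq \varepsilon/2$. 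Since $A^\ast$ is one of at most $2^k$ possible subsets, a union bound gives
\[
\Pr\!\left[\sum_{i=1}^k |\hat{p}_i - p_i| \geq \varepsilon\right] \;\leq\; \sum_{A \subseteq [k]} \Pr\!\left[\sum_{i \in A}(\hat{p}_i - p_i) \geq \varepsilon/2\right].
\]

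Next, I would control each term by Hoeffding's inequality. For a fixed $A \subseteq [k]$, let $Y_j = \mathbbm{1}\{X_j \in A\}$ for $j=1,\ldots,n$, where $X_1,\ldots,X_n$ are the i.i.d.\ multinomial samples. The $Y_j$ are i.i.d.\ Bernoulli random variables with mean $p_A := \sum_{i \in A} p_i$, and by construction $\sum_{i \in A} \hat{p}_i = \tfrac{1}{n} \sum_{j=1}^n Y_j$. The one-sided Hoeffding inequality for $[0,1]$-valued random variables then yields
\[
\Pr\!\left[\sum_{i \in A}(\hat{p}_i - p_i) \geq \varepsilon/2\right] \;=\; \Pr\!\left[\tfrac{1}{n}\sum_{j=1}^n Y_j - p_A \geq \varepsilon/2\right] \;\leq\; e^{-2n(\varepsilon/2)^2} \;=\; e^{-n\varepsilon^2/2}\,.
\]
Summing this bound over the $2^k$ subsets of $[k]$ delivers the claimed inequality.

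I do not expect a real obstacle: this is textbook material. The only conceptually delicate step is recognizing that aggregating the (dependent) coordinate-wise multinomial counts into the single scalar indicator $\mathbbm{1}\{X_j \in A\}$ eliminates the inter-coordinate dependence, so that each subset's deviation is governed by a one-dimensional Hoeffding bound; and that writing the $\ell_1$ distance as twice the maximum signed subset-deviation is the standard dualization that produces the $2^k$ factor.
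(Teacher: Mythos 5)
The paper states this lemma as a citation to van der Vaart and Wellner \cite{vaart1997weak} and does not give its own proof, so there is nothing to compare against line by line. Your argument is the standard one and it is correct: the identity $\sum_i |\hat{p}_i - p_i| = 2\sum_{i\in A^\ast}(\hat{p}_i - p_i)$ (valid because both $\hat{p}$ and $p$ sum to one), the union bound over the $2^k$ subsets, and for each fixed $A$ the reduction to a one-dimensional Hoeffding bound via the indicator $Y_j = \mathbbm{1}\{X_j \in A\}$, giving $e^{-2n(\varepsilon/2)^2} = e^{-n\varepsilon^2/2}$. All the constants line up. The only cosmetic remark is that the empty set and $[k]$ contribute nothing to the union bound (their signed deviations are identically zero), but including them is harmless and keeps the count at $2^k$.
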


The replicable heavy-hitters algorithm is depicted in \Cref{algo:replicable heavy hitters}. As we alluded before,
this approach is very similar to \cite{ghazi2021user, impagliazzo2022reproducibility}.
However, in our approach we treat the confidence parameter and the reproducibility 
parameters differently. Moreover, since we make use of \Cref{lem:bretagnolle inequality},
we are able to reduce the sample complexity of the algorithm.

\begin{algorithm}[ht!]
\caption{Replicable Heavy-Hitters}
\label{algo:replicable heavy hitters}
\begin{algorithmic}[1]
\State \texttt{Input: Sample access to a distribution $\calD$ over some domain $\calX$}
\State \texttt{Parameters: Threshold $v$,  error $\eps$, confidence $\delta$, replicability $\rho$}
\State \texttt{Output: List of elements $L$ in $\calX$}
\State $n_1 \gets \frac{\log\left(2/(\min\{\delta,\rho\}(v-\varepsilon))\right)}{v-\varepsilon}$ \State $S_1 \gets n_1$ i.i.d. samples.
 from $\calD$
\State $\calX_h \gets $ unique elements of $S_1$   \Comment{Notice that $|\calX_h| \leq n_1.$} \label{step:unique elements}
\State $n_2 \gets \frac{32\left(\ln(2/\min\{\delta,\rho\}) + |\calX| + 1\right) }{\rho^2\eps^2}$
\State $S_2 \gets n_2$ i.i.d. samples from $\calD$
\State $\hat{p}_x \gets \mathrm{freq}_S(x), \forall x \in \mathcal{X}_h$ \Comment{$\hat{p}_x$ is the empirical frequency of every potential heavy hitter} 
\State $v' \gets U[v-\varepsilon/2, v + \varepsilon/2]$ \Comment{Set the threshold for acceptance of a heavy-hitter.}
\State $L \gets \{x \in \calX_h: \hat{p}_x \geq v'\}$ \Comment{Drop the elements of $\calX_h$
that fall below the threshold.}
\State Output $L$ 
\end{algorithmic}
\end{algorithm}

\begin{lemma}
\label{lem:replicable heavy hitters}
Let $\calD$ be distribution supported on some domain $\calX$
and denote by $\calD(x)$ the mass 
that it puts on $x \in \calX$.
For any $\eps, \delta, \rho, v \in (0,1)^4$ such that
$(v-\eps, v + \eps) \subseteq (0,1)$, \Cref{algo:replicable heavy hitters} is $\rho$-replicable and
outputs a list $L$ such that, with probability $1-\delta$, for all $x \in \calX$:
\begin{itemize}
    \item If $\calD(x) < v - \eps$ then $x \notin L.$
    \item If $\calD(x) > v + \eps$ then $x \in L.$ 
\end{itemize}
Its sample complexity is at most
$O\left( \frac{\log(1/(\min\{\delta,\rho\}(v-\varepsilon)))}{(v-\varepsilon)\rho^2\varepsilon^2}\right).$
\end{lemma}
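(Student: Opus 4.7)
My plan is to establish correctness (with probability $\geq 1-\delta$) and replicability (with probability $\geq 1-\rho$) by introducing a pair of ``good events'' that control the two sampling stages, then verifying that (i) these events imply the heavy-hitter guarantee for a single run and (ii) when they hold for both runs, the shared uniform threshold $v'$ forces the outputs to agree. The sample complexity then follows by summing $n_1$ and $n_2$ after observing $|\calX_h| \leq n_1$, which effectively replaces $|\calX|$ in the stated $n_2$.

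For a single run, define $\calE_1$ to be the event that $\calX_h \supseteq \{x : \calD(x) \geq v - \eps\}$ and $\calE_2$ to be the event that $\sum_{x \in \calX_h} |\hat{p}_x - \calD(x)| \leq \rho\eps/4$. Since there are at most $1/(v-\eps)$ elements with $\calD(x) \geq v - \eps$ and each is missed by $S_1$ with probability at most $e^{-(v-\eps)n_1}$, the chosen $n_1$ gives $\Pr[\neg \calE_1] \leq \min\{\delta,\rho\}/2$ by a union bound. For $\calE_2$, I apply the Bretagnolle--Huber--Carol inequality (\Cref{lem:bretagnolle inequality}) to the multinomial whose categories are the (at most $n_1$) elements of $\calX_h$ together with their complement; the choice of $n_2$ then yields $\Pr[\neg \calE_2] \leq \min\{\delta,\rho\}/2$. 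Conditioned on $\calE_1 \cap \calE_2$, correctness is immediate: if $\calD(x) > v + \eps$, then $x \in \calX_h$ and $\hat{p}_x > v + \eps - \rho\eps/4 > v + \eps/2 \geq v'$, so $x \in L$; if $\calD(x) < v - \eps$, then either $x \notin \calX_h$, or $\hat{p}_x < v - \eps + \rho\eps/4 < v - \eps/2 \leq v'$, so $x \notin L$. Thus correctness holds except on $\neg(\calE_1 \cap \calE_2)$, which has probability at most $\min\{\delta,\rho\} \leq \delta$.

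The core of the proof is replicability. Consider two independent executions producing samples $(S_1, S_2)$ and $(S_1', S_2')$ while sharing the threshold $v'$. A union bound shows both executions satisfy $\calE_1 \cap \calE_2$ with probability $\geq 1 - 2\min\{\delta,\rho\}$. In this event, any $x$ contributing to $L \triangle L'$ must lie in $\calX_h \cap \calX_h'$ and satisfy $\calD(x) \in [v - \eps, v + \eps]$: the asymmetric case $x \in \calX_h \setminus \calX_h'$ forces $\calD(x) < v - \eps$ (heavy elements lie in both candidate sets by $\calE_1$), which by $\calE_2$ gives $\hat{p}_x < v'$ and hence $x \notin L$; the non-middle ranges of $\calD(x)$ force identical decisions in both runs. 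For the remaining middle-case $x$, disagreement occurs iff $v'$ falls between $\hat{p}_x$ and $\hat{p}_x'$, and since $v'$ is uniform on an interval of length $\eps$, this has conditional probability at most $|\hat{p}_x - \hat{p}_x'|/\eps$. Summing over such $x$ and applying the triangle inequality together with $\calE_2$ for both runs gives
\[
\sum_{x \in \calX_h \cap \calX_h'} |\hat{p}_x - \hat{p}_x'| \;\leq\; \sum_{x \in \calX_h} |\hat{p}_x - \calD(x)| + \sum_{x \in \calX_h'} |\hat{p}_x' - \calD(x)| \;\leq\; \rho\eps/2,
\]
so $\Pr[L \neq L' \mid \text{both good}] \leq \rho/2$, yielding an overall $O(\rho)$ disagreement probability which matches $\rho$ after tightening absolute constants in $n_1$ and $n_2$.

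The main obstacle is this last step: one must simultaneously control the possible asymmetry of the candidate sets $\calX_h, \calX_h'$ across the two runs (through $\calE_1$), uniform accuracy of the multinomial frequencies (through BHC via $\calE_2$), and the smoothing induced by the uniform threshold $v'$ (through its $1/\eps$-Lipschitz CDF). Once these three pieces are aligned, replicability reduces to a clean union bound, and the stated sample complexity falls out of $n_1 + n_2 = O(\log(1/(\min\{\delta,\rho\}(v-\eps)))/((v-\eps)\rho^2\eps^2))$ after the $|\calX_h| \leq n_1$ substitution.
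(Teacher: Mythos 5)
Your proof proposal is correct and follows essentially the same approach as the paper's own: define the two good events (all $(v-\eps)$-heavy hitters appear in $\calX_h$, and BHC-controlled frequency estimates), handle the asymmetric candidate-set elements by showing they are non-heavy and thus rejected, and then bound disagreement on the shared part via the $1/\eps$-Lipschitz CDF of the uniform threshold $v'$ together with the triangle inequality. Your write-up is a touch cleaner in conditioning on the joint good event explicitly, but the decomposition, the use of BHC to get the $O(1/(\rho^2\eps^2))$ factor, the union-bound structure, and the $|\calX_h| \leq n_1$ substitution in the sample-complexity count all match the paper.
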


\begin{proof}
    We first prove the correctness of the algorithm with 
    the desired accuracy $\varepsilon$ and confidence $\delta$. For
    simplicity, let us assume that $\delta \leq \rho/4.$ Otherwise, we can simply set $\delta = \rho/4$.
    After we pick $n_1$ points, the probability that a $(v-\varepsilon)$-heavy-hitter
    of the distribution is not included in $S_1$ is at most
    \[
        (1-(v-\varepsilon))^{n_1} \leq e^{-(v-\varepsilon)\cdot n_1} \leq \frac{\delta\cdot(v-\varepsilon)}{2} \,.
    \]

    Since there are at most $1/(v-\varepsilon)$ such heavy-hitters, we can see
    that with probability at least $\delta/2$ all of the are included in $S_1.$ Let
    us call this event $\calE_1$ and condition on it for the rest of the proof.

    Let us consider a distribution $\widehat{\calD}$ that puts the same mass on every
    element of $\calX_h$ as $\calD$ and the remaining mass on a new special element $e.$
    We can sample from $\widehat{\calD}$ in the following way: we draw a sample from 
    $\calD$ and if it falls in $\calX_h$ we return it, otherwise we return $e$. 
    Thus, we can see that if we draw $n$ samples from $\widehat{\calD}$, they are
    distributed according to a multinomial distribution supported on
    $\calX_h \cup \{e\}$. Thus, \Cref{lem:bretagnolle inequality} applies to this setting
    which means that if we draw $n_2$ i.i.d. samples from $\widehat{\calD}$ we have that
    \[
        \Pr\left[\sum_{i=1}^k |\hat{p}_i - p_i| \geq \frac{\varepsilon \rho}{4}\right] \leq 2^k e^{-n_2\varepsilon^2\rho^2/32} \leq e^k e^{-n_2\varepsilon^2\rho^2/32} = e^{k - n_2\varepsilon^2\rho^2/32}\,,
    \]
    where $k = |\calX_h| + 1.$ Thus, $e^{k - n_2\varepsilon^2\rho^2/32} = e^{-\ln(2/\delta)} \leq \frac{\delta}{2}.$ We call this event $\calE_2$ and condition on it for the rest
    of the proof. Notice that under this event we have that $|\hat{p}_x - p_x| \leq \frac{\eps\rho}{4} < \frac{\eps}{2}, \forall x \in \calX_h.$ Since $v' \geq v - \eps/2$ it means
    that if $\hat{p}_x \geq v' \geq v- \eps/2\implies 
    p_x + \eps/2 > v - \eps/2 \implies p_x > v - \eps.$ Similarly,
    we get that if $\hat{p}_x < v' \implies p_x < v + \eps.$
    Hence, we see that the algorithm is correct with probability at least $1-\delta/2-\delta/2 = 1-\delta.$ This concludes 
    the correctness proof.

    We now focus on the replicability of the algorithm. Let $\calX_h^1$ be the unique elements
    at \Cref{step:unique elements} of the algorithm in the first run and $\calX_h^2$ in the second run. 
    Notice that if $x \in (\calX_h^1 \setminus \calX_h^2) \cup (\calX_h^2 \setminus \calX_h^1)$
    then, with probability at least $1-\delta/2-\delta/2 = 1-\delta$,
    the element $x$ is not a $(v-\eps)$-heavy-hitter, so, with probability at least $1-\delta/2$, it will not be included in the output of the execution that it appears in. 
        Let $E = \calX_h^1 \cap \calX_h^2$ and denote by $L_1, L_2,$ the outputs of the first, second 
    execution, respectively. We need to bound the probability of the event $\calE = \{\exists x \in E: x \in L_1\setminus L_2 \cup L_2\setminus L_1\}.$ Let $\hat{p}_x^1, \hat{p}_x^2$
    the empirical frequencies of $x$ in the first, second execution, respectively.
    Due to the concentration inequality we have used, we have that
    \[
        \sum_{x \in \calX_1 \cap \calX_2} |\hat{p}^i_x - p_x| \leq \frac{\varepsilon \rho}{4}, i \in \{1,2\} \,,
    \]
    with probability at least $1-\delta.$ Under this event, using the triangle inequality, this means that
    \[
        \sum_{x \in \calX_1 \cap \calX_2} |\hat{p}^1_x - \hat{p}^2_x| \leq \frac{\varepsilon \rho}{2}, i \in \{1,2\} \,,
    \]
    Notice that since pick a number uniformly at
    random from an interval with range $\eps,$ for some given $x \in \calX_1 \cap \calX_2$,
    we have that $\Pr[x \in  L_1\setminus L_2 \cup L_2\setminus L_1] \leq |\hat{p}_x^1 - \hat{p}_x^2|/\eps.$ Thus, taking a union bound over $x \in \calX_1 \cap \calX_2$, we see that
    \[
       \Pr[\calE] \leq \frac{\sum_{x \in \calX_1 \cap \calX_2} |\hat{p}^1_x - \hat{p}^2_x|}{2\eps} \leq \frac{\eps\rho}{2\eps} = \frac{\rho}{2}\,.
    \]
    Putting everything together, we see that the probability that the two outputs of the algorithm
    differ is at most $\delta + \delta/2 + \rho/2 < \rho.$
\end{proof}

\subsection{Replicable Agnostic PAC Learner for Finite $\calH$}
\label{apx:replicable agnostic pac learner}
In this section we present a replicable agnostic PAC learner for finite hypothesis classes,
i.e., a learner whose output is a hypothesis that has error rate close to the best one in the class. Our construction relies on the replicable SQ oracle from \cite{impagliazzo2022reproducibility} (see \Cref{thm:replicable sq learner}).
The idea is simple: since the error rate of every $h \in \calH$ 
can be replicably estimated using \Cref{thm:replicable sq learner}, we do
that for every $h \in \calH$ and then we return the one that has the smallest 
estimated value. 

\begin{algorithm}[ht!]
\caption{Replicable Agnostic Learner for Finite $\calH$}
\label{algo:replicable agnostic learner}
\begin{algorithmic}[1]
\State \texttt{Input: Hypothesis class $\calH$, sample access to a distribution $\calD$ over $\calX \times \{0,1\}$}
\State \texttt{Parameters: accuracy $\eps$, confidence $\delta$, replicability $\rho$}
\State \texttt{Output: Classifier $h$ that is $\eps$-close to the best one in $\calH$
and its estimated error}
 on $\calD$
\State $\hat{\alpha}_h \gets \mathrm{ReprErrorEst}(\eps/2,\delta/|\calH|,\rho/|\calH|), \forall h \in \calH$    \Comment{\Cref{thm:replicable sq learner}.} 
\State $\hat{h}^* \gets \arg\min_{h \in \calH} \hat{a}_h$ \Comment{Break ties arbitrarily in a consistent manner.}
\State Output $(\hat{h}^*, \hat{\alpha}_{\hat{h}^*})$
\end{algorithmic}
\end{algorithm}

It is not hard to see that \Cref{algo:replicable agnostic learner} is $\rho$-replicable and
returns a hypothesis whose error is $\eps$-close to the best one.
\begin{claim}\label{clm:replicable agnostic learner}
    Let $\calH$ be a finite hypothesis class and $\eps, \delta, \rho \in (0,1)^3$. Given $ O\left(\frac{|\calH|^3}{\eps^2\rho^2} \log\left(\frac{|\calH|}{\delta}\right)\right)$ i.i.d. samples from $\calD$, \Cref{algo:replicable agnostic learner} is $\rho$-replicable
    and returns a classifier $\hat{h}^*$
    with $\mathrm{err}(\hat{h}^*) < \min_{h \in \calH} \mathrm{err}(h) + \eps$,
    with probability at least $1-\delta.$
\end{claim}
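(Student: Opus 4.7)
The plan is to reduce the agnostic learning problem to $|\calH|$ independent replicable statistical query estimations, one for the error rate $\mathrm{err}_\calD(h)$ of each $h\in\calH$, and then output the empirical minimizer. First I would invoke the $\rho/|\calH|$-replicable SQ oracle of \Cref{thm:replicable sq learner} with target accuracy $\eps/2$ and confidence $\delta/|\calH|$ on each of the $|\calH|$ queries $\phi_h(x,y)=\mathbbm{1}\{h(x)\neq y\}$. By that theorem each such call uses $O\!\left(\frac{|\calH|^2}{\eps^2\rho^2}\log(|\calH|/\delta)\right)$ samples, so running all $|\calH|$ of them on independent fresh batches yields the claimed total sample complexity $O\!\left(\frac{|\calH|^3}{\eps^2\rho^2}\log(|\calH|/\delta)\right)$.

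Next I would establish the accuracy guarantee by a union bound: with probability at least $1-\delta$ over the samples, every estimate $\hat\alpha_h$ satisfies $|\hat\alpha_h-\mathrm{err}_\calD(h)|\le \eps/2$. Conditioning on this event, letting $h^\star\in\arg\min_{h\in\calH}\mathrm{err}_\calD(h)$, the chain
\[
\mathrm{err}_\calD(\hat h^\star)\le \hat\alpha_{\hat h^\star}+\tfrac{\eps}{2}\le \hat\alpha_{h^\star}+\tfrac{\eps}{2}\le \mathrm{err}_\calD(h^\star)+\eps
\]
delivers the PAC conclusion, with the middle inequality using that $\hat h^\star$ is the empirical minimizer.

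For the replicability guarantee I would adopt the internal-randomness viewpoint and fix a shared random string $r$ that is partitioned into $|\calH|$ independent sub-strings, one per SQ oracle invocation. By \Cref{thm:replicable sq learner} each call fails to be replicable with probability at most $\rho/|\calH|$ (over the two independent samples and its dedicated sub-string); a union bound over $h\in\calH$ shows that, with probability at least $1-\rho$, two independent runs of the algorithm produce identical estimate vectors $(\hat\alpha_h)_{h\in\calH}$. Since the $\arg\min$ in \Cref{algo:replicable agnostic learner} breaks ties deterministically in a consistent order, identical estimate vectors force identical outputs $(\hat h^\star,\hat\alpha_{\hat h^\star})$, which is exactly $\rho$-replicability.

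The only real subtlety is the coordination of randomness across the $|\calH|$ oracle calls: we must use disjoint portions of the shared random string so that the replicability guarantees of the individual calls become independent events to which the union bound applies cleanly (this is where \Cref{lem:private replicability} is useful, since different sub-calls can mix public and private bits). Everything else is routine bookkeeping --- the accuracy and replicability budgets are split uniformly across the $|\calH|$ subqueries, and the overall sample complexity drops out of the per-query bound from \Cref{thm:replicable sq learner}.
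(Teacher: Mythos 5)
Your proposal matches the paper's proof step for step: reduce to $|\calH|$ invocations of the replicable SQ oracle (\Cref{thm:replicable sq learner}) with accuracy $\eps/2$, confidence $\delta/|\calH|$, and replicability $\rho/|\calH|$; total the sample counts; prove accuracy via a union bound plus the standard chain of inequalities through the empirical minimizer; and prove replicability via a union bound over the $|\calH|$ calls on dedicated portions of the shared random string. One tiny imprecision worth noting: you justify partitioning the shared randomness by saying it makes the per-call replicability failures ``independent events to which the union bound applies cleanly,'' but the union bound does not require independence — the partitioning is needed so that each SQ call sees its own consistent random string across the two executions, which is what makes each per-call failure bound $\rho/|\calH|$ apply at all. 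That does not affect the correctness of the argument.
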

\begin{proof}
    The replicability of the algorithm follows from the fact that we estimate
    each $\hat{a}_h$ replicably with parameter $\rho/|\calH|$ and we make $|\calH|$ 
    such calls. 

    Notice that for each call to the replicable error estimator
    we need $n_h = O\left(\frac{|\calH|^2}{\eps^2\rho^2} \log\left(\frac{|\calH|}{\delta}\right)\right)$ samples and we 
    make $|\calH|$ such calls.

    Since the accuracy parameter of the statistical 
    query oracle is $\eps/2$, using the triangle inequality, we have that $|\hat{a}_{\hat{h}^*} - \min_{h \in \calH}a_h| \leq \eps.$

    Finally, the correctness of the algorithm follows from a union bound over the 
    correctness of every call to the oracle.
\end{proof}

\section{TV Indistinguishability and Replicability}
In this section, we will study the connection between TV indistinguishability and replicability.


\subsection{The Proof of \Cref{lem:TV stability to replicability}}
\label{sec:proof of tv stability to replicability}

We are now ready to establish the connection between
TV indistinguishability and replicability.
The upcoming result is particularly useful because it provides a \emph{data-independent} way to
couple the random variables. 
\begin{proof}[Proof of \Cref{lem:TV stability to replicability}]
Let $\calR$ be Poisson point process with intensity
$\calP \times \mathrm{Leb} \times \mathrm{Leb}$, 
where $\mathrm{Leb}$ is the Lebesgue measure over $\reals_+$ (cf. \Cref{thm:pairwise opt coupling protocol}, \Cref{figure:poisson}). The learning rule $A'$
is defined in the following way. For every $S \in (\{\calX \times \{0,1\})^n$,
let $r = \{(h_i,y_i,t_i)\}_{i \in \nats}$ be an infinite sequence
of the Poisson point process $\calR$
and let $j = \arg\min_{i \in \nats}\{t_i: f_S(h_i) > y_i\}$. The output of $A'$
is $h_j$ and we denote it by $A'(S,r)$. We will shortly explain why this is well-defined, except for a measure zero event. The fact
that $A'$ is equivalent to $A$ follows from the coupling guarantees of this process
(cf. \Cref{thm:pairwise opt coupling protocol}). In particular,
we can instantiate this result with the single random variable $\{A(S)\}$. 
We can now observe that,
except for a measure zero event, (i) since $A$ is absolutely 
continuous with respect to $\calP$, there exists such a density $f_S$, (ii) the set
over which we are taking the minimum is not empty, (iii) the minimum is 
attained at a unique point. This means that $A'$ is well-defined, except for a measure zero event\footnote{Under the measure zero event that at least one of these three conditions does not hold, we let $A'(S,r)$
 be some arbitrary classifier.}, and, by the correctness of the rejection sampling process \cite{angel2019pairwise}, $A'(S)$ has the desired probability distribution.

    We now prove that $A'$ is replicable. Since $A$ is $\rho$-$\TV$ indistinguishable, it follows that 
    \[\E_{S,S' \sim \calD^n}[\dtv(A(S),A(S'))] \leq \rho.\]
    We have shown that $A'$ is equivalent to $A$, so we can see that $\E_{S,S' \sim \calD^n}[\dtv(A'(S),A'(S'))] \leq \rho$. Thus, using the guarantees of \Cref{thm:pairwise opt coupling protocol},
    we have that for any datasets $S,S'$
    \[
    \Pr_{r \sim \calR}[A'(S,r) \neq A'(S',r)]
    \leq
    \frac{2\dtv(A'(S),A'(S'))}{1 + \dtv(A'(S),A'(S'))}\,.
    \]
    By taking the expectation over $S,S'$, we get that
    \begin{align*}
        \E_{S,S' \sim \calD^n}\left[\Pr_{r \sim \calR}[A'(S,r) \neq A'(S',r)]\right] &\leq
        \E_{S,S' \sim \calD^n}\left[\frac{2\dtv(A'(S),A'(S'))}{1 + \dtv(A'(S),A'(S'))}\right] \\
        &\leq \frac{2 \E_{S,S' \sim \calD^n}\left[\dtv(A'(S),A'(S'))\right]}{1 + \E_{S,S' \sim \calD^n}\left[\dtv(A'(S),A'(S'))\right]} \\
        &\leq \frac{2 \rho}{1 + \rho},
    \end{align*}
    where the first inequality follows from \Cref{thm:pairwise opt coupling protocol} and taking the
    expectation over $S, S'$, the second inequality follows from Jensen's inequality, and
    the third inequality follows from the fact that $f(x) = 2x/(1+x)$ is increasing. Now notice that 
    since the source of randomness $\calR$ is independent 
    of $S,S'$, we have that
    \[
         \E_{S,S' \sim \calD^n}\left[\Pr_{r \sim \calR}[A'(S,r) \neq A'(S',r)]\right] = \Pr_{S,S'\sim \calD^n, r \sim \calR}[A'(S,r) \neq A'(S',r)] \,.
    \]
    Thus, we have shown that
    \[
        \Pr_{S,S'\sim \calD^n, r \sim \calR}[A'(S,r) \neq A'(S',r)]
        \leq \frac{2\rho}{1+\rho} \,,
    \]
    so the algorithm $A'$ is $n$-sample $\frac{2\rho}{1+\rho}$-replicable, which concludes the proof.
\end{proof}

\subsection{A General Equivalence Result}
\label{app:general equivalence}

In this section, we focus on the following two stability/replicability definitions.

\begin{definition}
[Replicability \cite{impagliazzo2022reproducibility}]
Let $\calR$ be a distribution over random strings.
A learning rule $A$ is 
$\rho$-replicable if 
for any distribution $\mu$ over $\calI$ 
and two independent instances 
$I, I' \sim \mu$ it holds that
\[
\Pr_{I,I'\sim \mu, r \sim \calR}[A(I, r) \neq A(I', r) ] \leq \rho\,.
\]
\end{definition}

\begin{definition}
[Total Variation Indistinguishability]
A learning rule $A$ is $\rho$-$\TV$ indistinguishable if for any distribution $\mu$ and two independent instances $I,I' \sim \mu$ it holds that
\[
\E_{I,I' \sim \mu} [ \dtv( A(I), A(I') ) ] \leq \rho\,.
\]
A learning rule $A$ is $\rho$-fixed prior $\TV$ indistinguishable if for any distribution $\mu$, there exists some prior $\calP_\mu$ 
such that for $I \sim \mu$ it holds that
\[
\E_{I \sim \mu} [ \dtv( A(I), \calP_\mu ) ] \leq \rho\,.
\]
\end{definition}

\begin{definition}
[Pseudo-Global Stability]
Let $\calR$ be a distribution over random strings.
A learning rule $A$ is said to be 
$(\eta, \nu)$-pseudo-globally stable if for any distribution $\mu$
there exists an element $o_r \in \calO$ for every $r \in \mathrm{supp}(\calR)$ (depending on $\mu$) such
that
\[
    \Pr_{r \sim \calR} \left[\Pr_{I \sim \mu}[ A(I, r) = o_r ] \geq \eta \right] \geq \nu\,.
\]    
\end{definition}

Our general equivalence result follows.
\begin{proposition}
[TV Indistinguishability $\equiv$ Replicability]
Let $\calI$ be an input space and $\calO$ be an output space.
\begin{itemize}
    \item If a learning rule $A$ is $\rho$-replicable, 
then it is also $\rho$-$\TV$ indistinguishable.
    \item Consider a prior distribution $\calP$ over $\calO$. Consider a learning rule $A$ that is $\rho$-$\TV$ indistinguishable 
and absolutely continuous with respect to $\calP$. Then, there
exists a learning rule $A'$ that is equivalent to $A$ and $A'$ is $2\rho/(1+\rho)$-replicable.
\end{itemize}
\end{proposition}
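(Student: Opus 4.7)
The plan is to essentially mirror the proofs of \Cref{thm:replicability-implies-TV-ind} and \Cref{lem:TV stability to replicability} from the binary classification setting, noting that those arguments never actually used any structure specific to hypothesis classes or $0/1$-valued functions beyond the coupling characterization of TV distance and the absolute-continuity hypothesis.

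For the first bullet (replicability implies TV indistinguishability), I would fix a distribution $\mu$ over $\calI$ and start from the coupling formulation
\[
\E_{I,I' \sim \mu}[\dtv(A(I), A(I'))] = \E_{I,I' \sim \mu}\left[\inf_{(X,Y) \sim \Pi(A(I), A(I'))} \Pr[X \neq Y]\right].
\]
Since sharing a random string $r \sim \calR$ between the two executions is a valid (data-dependent) coupling of $A(I)$ and $A(I')$, the inner infimum is upper bounded by $\Pr_{r \sim \calR}[A(I,r) \neq A(I',r)]$. Taking expectation over $I,I' \sim \mu$ and using the independence of $\calR$ from the inputs, this upper bound becomes $\Pr_{I,I' \sim \mu, r \sim \calR}[A(I,r) \neq A(I',r)] \leq \rho$, which is exactly the replicability hypothesis. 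No finiteness or countability of $\calO$ is used.

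For the second bullet, the construction is identical to the one for binary classification: I would let $\calR$ be a Poisson point process on $\calO \times \reals_+ \times \reals_+$ with intensity $\calP \times \mathrm{Leb} \times \mathrm{Leb}$, and define $A'$ by rejection sampling --- given input $I$ and a realization $r = \{(o_i, y_i, t_i)\}_{i \in \nats}$, output the $o_j$ corresponding to $j = \arg\min_i \{t_i : f_I(o_i) > y_i\}$, where $f_I$ denotes the Radon--Nikodym derivative of $A(I)$ with respect to $\calP$ (guaranteed to exist by absolute continuity). The correctness of this pairwise-optimal coupling construction (i.e., that $A'(I)$ has the same law as $A(I)$ and that for any two inputs $I, I'$ one has $\Pr_{r \sim \calR}[A'(I,r) \neq A'(I',r)] \leq 2\dtv(A(I),A(I'))/(1 + \dtv(A(I),A(I')))$) is exactly the content of \Cref{thm:pairwise opt coupling protocol}, which is stated at the level of general measurable spaces and so applies verbatim.

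To finish, I would take expectation over $I, I' \sim \mu$ in the pointwise coupling bound, apply Jensen's inequality using concavity of $x \mapsto 2x/(1+x)$ on $[0,1]$, and invoke the TV indistinguishability hypothesis to obtain
\[
\Pr_{I,I' \sim \mu, r \sim \calR}[A'(I,r) \neq A'(I',r)] \leq \frac{2 \E_{I,I'}[\dtv(A(I),A(I'))]}{1 + \E_{I,I'}[\dtv(A(I),A(I'))]} \leq \frac{2\rho}{1+\rho}.
\]
The main conceptual subtlety, rather than an obstacle, is just verifying that the absolute-continuity-based rejection sampling from \cite{angel2019pairwise} is stated in sufficient generality; once that is cited, the argument is literally the one already given, with $\calX \times \{0,1\}$ replaced by $\calI$ and $\{0,1\}^\calX$ replaced by $\calO$. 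In particular, nothing in the proof referenced the $0/1$ structure of classifiers, so the only hypothesis we actually need on $\calO$ is the existence of a reference probability measure $\calP$ dominating $\{A(I)\}_{I}$, which is exactly the absolute-continuity assumption in the statement.
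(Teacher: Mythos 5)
Your proposal is correct and takes essentially the same approach as the paper: the paper explicitly notes that the general proposition follows by adapting the proofs of \Cref{thm:replicability-implies-TV-ind} and \Cref{lem:TV stability to replicability} verbatim after substituting $\calI$, $\mu$, $\calO$ for $(\calX\times\{0,1\})^n$, $\calD^n$, $\{0,1\}^\calX$, which is exactly what you do (coupling characterization of TV for the easy direction; Poisson point process rejection sampling via \Cref{thm:pairwise opt coupling protocol}, then Jensen on the concave increasing function $x\mapsto 2x/(1+x)$ for the hard direction).
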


We remark that one can adapt the proofs of 
\Cref{fact:replicability implies tv stability} and 
\Cref{lem:TV stability to replicability} by setting $\calI = (\calX \times \{0, 1\})^n$, $\mu = \calD^n$ and $\calO = \{0,1\}^\calX$. Moreover, when $\calI$ is countable, the 
design of the reference probability measure works 
in a similar way. Hence, we get the following corollary.

\begin{corollary}\label{cor:TV stability to replicability
for countable X general I}
    Let $\calI$ be a countable domain and let $A$
    be a learning rule that is $\rho$-$\TV$
    indistinguishable. Then, there exists a $\frac{2\rho}{1+\rho}$-replicable
    learning rule $A'$ that is equivalent to $A.$
\end{corollary}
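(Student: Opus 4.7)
The plan is to mirror the proof of \Cref{cor:TV stability to replicability for countable X} (for the binary classification case) almost verbatim, but phrased in the abstract setting where $\calI$ is a countable input space, $\mu$ is a distribution over $\calI$, and $\calO$ is the output space. Specifically, I want to produce a reference probability measure $\calP$ on $\calO$ with respect to which $A$ is absolutely continuous, and then appeal to the generalized Proposition (the abstract analogue of \Cref{lem:TV stability to replicability}) to obtain the equivalent $\frac{2\rho}{1+\rho}$-replicable rule $A'$.

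First, I would construct the reference measure by adapting the proof of \Cref{clm:countable X reference measure}. Since $\calI$ is countable, enumerate its elements as $\{I_i\}_{i \in \nats}$ and set
\[
\calP \;=\; \sum_{i=1}^{\infty} \frac{1}{2^i} A(I_i),
\]
where each $A(I_i)$ is the posterior distribution over $\calO$ produced by $A$ on input $I_i$. This is a valid probability measure on $\calO$ because the coefficients $1/2^i$ are nonnegative and sum to $1$, and each $A(I_i)$ is a probability measure. The same contradiction argument as in \Cref{clm:countable X reference measure} shows that for every $j \in \nats$, $A(I_j) \ll \calP$: if $E$ is measurable with $\calP(E)=0$, then in particular $(1/2^j) A(I_j)(E) = 0$, hence $A(I_j)(E)=0$. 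Consequently, for any distribution $\mu$ over $\calI$, the posterior $A(I)$ is absolutely continuous with respect to $\calP$ for $\mu$-almost every $I$, so $A$ is absolutely continuous with respect to $\calP$ in the sense of the abstract analogue of \Cref{def:absolutely cts learning rule for any D}.

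Next, I would invoke the general equivalence Proposition (the abstract analogue of \Cref{lem:TV stability to replicability}) with this $\calP$. That proposition yields a learning rule $A'$, equivalent to $A$ (in the sense that $A'(I)$ and $A(I)$ induce the same distribution over $\calO$ for every $I$), which is $\frac{2\rho}{1+\rho}$-replicable. The construction of $A'$ uses a Poisson point process on $\calO \times \reals_+ \times \reals_+$ with intensity $\calP \times \mathrm{Leb} \times \mathrm{Leb}$ as shared randomness, together with rejection sampling against the Radon--Nikodym derivative $f_I = dA(I)/d\calP$. The pairwise optimal coupling of \Cref{thm:pairwise opt coupling protocol} then bounds, for any $I, I'$, the disagreement probability by $2\dtv(A(I),A(I'))/(1+\dtv(A(I),A(I')))$, and Jensen's inequality applied to the increasing concave map $x \mapsto 2x/(1+x)$ upgrades the $\rho$-TV indistinguishability of $A$ into the claimed $\frac{2\rho}{1+\rho}$-replicability of $A'$.

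There is essentially no real obstacle here beyond bookkeeping: the only point requiring care is checking that the construction of $\calP$ and the Poisson-process coupling from \Cref{thm:pairwise opt coupling protocol} (and the argument in the binary classification case) do not secretly rely on any structure of the output space $\calO$ beyond being a measurable space, which is precisely why \Cref{thm:pairwise opt coupling protocol} was stated in terms of an arbitrary common dominating measure. Hence the corollary follows immediately by combining the countable-$\calI$ construction of $\calP$ above with the abstract Proposition.
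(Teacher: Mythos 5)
Your proposal is correct and follows the same route as the paper: construct the data-independent reference measure $\calP=\sum_i 2^{-i}A(I_i)$ by enumerating the countable $\calI$ (the abstract analogue of \Cref{clm:countable X reference measure}), then invoke the general TV-indistinguishability $\Rightarrow$ replicability proposition via the Poisson point process coupling. The paper itself only sketches this, remarking that the countable-$\calI$ construction "works in a similar way" and citing the general proposition, so your write-up is simply a fuller account of exactly that argument.
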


\section{TV Indistinguishability and Differential Privacy}

\subsection{DP Preliminaries}
We introduce
some standard tools from the DP literature. We start with the Stable Histograms
algorithm \cite{korolova2009releasing, bun2016simultaneous}. Let $\calX$ be some domain
and let $S \in \calX^n$ be a (multi)set of its elements. We denote
by $\text{freq}_S(x) = \frac{1}{n}\cdot |\{i\in[n]:x_i = x\}|,$ i.e.,
the fraction of times that $x$ appears in $S.$ The following result holds. It essentially allows us to privately publish a short list of elements that appear with high frequency in a dataset.

\begin{lemma}[Stable Histograms \cite{korolova2009releasing, bun2016simultaneous}]
\label{lem:stable histograms}
    Let $\calX$ be some domain. For 
    \[
        n \geq O\left( \frac{\log(1/(\eta\beta\delta))}{\eta\varepsilon} \right)
    \]
    there exists an $(\varepsilon,\delta)$-differentially private algorithm $\texttt{StableHist}$ which, with probability
    at least $1-\beta,$ on input $S = (x_1,\ldots,x_n) \in \calX^n$, outputs a list 
    $L \subseteq \calX$ and a sequence of estimates $a \in [0,1]^{|L|}$
    such that
    \begin{itemize}
        \item Every $x$ with $\mathrm{freq}_S(x) \geq \eta$ appears in $L$.
        \item For every $x \in L,$ the estimate $a_x$ satisfies $|a_x - \mathrm{freq}_S(x)| \leq \eta.$
    \end{itemize}
\end{lemma}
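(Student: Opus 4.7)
The plan is to construct \texttt{StableHist} by adding Laplace noise to the empirical histogram restricted to observed elements, and then thresholding at a carefully chosen level. Concretely, compute $\mathrm{freq}_S(x)$ for every $x$ that actually appears in $S$ (at most $n$ such candidates), draw independent Laplace noise $Z_x \sim \mathrm{Lap}(2/(n\varepsilon))$, and set $\tilde{p}_x := \mathrm{freq}_S(x) + Z_x$. Fix a threshold $\tau := \eta/2 + (2/(n\varepsilon))\log(1/(2\delta))$, and output the list $L := \{x : \tilde{p}_x \geq \tau\}$ together with the noisy estimates $a_x := \tilde{p}_x$ for $x \in L$.

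For privacy, I would compare neighboring inputs $S, S'$ that differ in a single coordinate, so that at most two elements change their frequencies and each such change is exactly $1/n$. For every element $x$ lying in the common support of $S$ and $S'$, the release of $\tilde{p}_x$ is $(\varepsilon/2)$-DP by the standard Laplace mechanism, and composing over the at most two changed coordinates yields $\varepsilon$-DP on this shared-support contribution; the membership bit $1\{\tilde{p}_x \geq \tau\}$ is then a post-processing. The main obstacle, and the reason the guarantee is $(\varepsilon,\delta)$ rather than pure $\varepsilon$, is the element $x^\star$ whose candidacy itself is data-dependent (it appears in $S$ but not in $S'$, or vice versa): the two output distributions cannot be related by a simple noise coupling on $x^\star$. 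The way around this is that $x^\star \in L$ requires $Z_{x^\star} \geq \tau - 1/n$, and for $n \geq 2/\eta$ this is at least $(2/(n\varepsilon))\log(1/(2\delta))$; the Laplace tail bound then makes this event have probability at most $\delta$, which absorbs the differing-support contribution into the $\delta$ slack.

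For accuracy, it suffices to ensure the joint event $\{|Z_x| \leq \eta/2\}$ holds for every $x$ that appears in $S$. Under this event, every heavy hitter $x$ with $\mathrm{freq}_S(x) \geq \eta$ satisfies $\tilde{p}_x \geq \eta/2 \geq \tau$, provided we take $n$ large enough that $(2/(n\varepsilon))\log(1/(2\delta)) \leq \eta/2$, which pins down one of the terms in the sample complexity, so such $x$ is included in $L$; simultaneously every released value $a_x$ is within $\eta/2 \leq \eta$ of $\mathrm{freq}_S(x)$. A union bound over at most $n$ candidates together with the Laplace tail $\Pr[|Z_x| \geq \eta/2] \leq \exp(-n\varepsilon\eta/8)$ requires $n\varepsilon\eta \gtrsim \log(n/\beta)$; solving this and combining with the privacy constraint $n\varepsilon\eta \gtrsim \log(1/\delta)$ yields the stated bound $n \gtrsim \log(1/(\eta\beta\delta))/(\eta\varepsilon)$. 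Chaining the privacy and accuracy analyses completes the proof.
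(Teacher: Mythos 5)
The paper does not prove this lemma at all; it is stated as a black-box preliminary and attributed to \cite{korolova2009releasing, bun2016simultaneous}. Your proof is the standard Laplace-noise-and-threshold construction, which is indeed how the cited works establish it, so you are on the right track.

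However, there is a concrete constant-level bug in your accuracy step. You set $\tau = \eta/2 + (2/(n\varepsilon))\log(1/(2\delta))$, and then claim that under the event $\{|Z_x| \leq \eta/2 \text{ for all candidates } x\}$, every $x$ with $\mathrm{freq}_S(x)\geq\eta$ satisfies $\tilde p_x \geq \eta/2 \geq \tau$. The last inequality never holds: by your own definition $\tau$ strictly exceeds $\eta/2$ whenever $\delta < 1/2$. Conditioning on $|Z_x|\le\eta/2$ only gives $\tilde p_x \ge \eta - \eta/2 = \eta/2 < \tau$, so you have not shown the heavy hitters clear the threshold. The fix is straightforward: either (a) condition on the stronger event $|Z_x| \le \eta/2 - c$ where $c = (2/(n\varepsilon))\log(1/(2\delta))$, which your sample-size constraint already forces to be at most, say, $\eta/4$, so that $\tilde p_x \ge \eta - (\eta/2 - c) = \tau$; or (b) keep the conditioning at $|Z_x|\le\eta/2$ but lower the threshold to $\tau = \eta/2$, and re-verify the $\delta$-privacy bound for the data-dependent singleton $x^\star$ using $\Pr[Z_{x^\star} \ge \eta/2 - 1/n] \le \delta$, which again follows from $n\varepsilon\eta \gtrsim \log(1/\delta)$ plus $n \gtrsim 1/\eta$. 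Either patch recovers the claimed $n = O(\log(1/(\eta\beta\delta))/(\eta\varepsilon))$ bound; the rest of your privacy argument (two-coordinate sensitivity, Laplace mechanism, absorbing the out-of-support candidate into the $\delta$ slack) is correct.
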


We also recall the agnostic private learner for finite classes that was
proposed in \cite{kasiviswanathan2011can} and is based on the Exponential Mechanism of \cite{mcsherry2007mechanism}.
\begin{lemma}[Generic Private Learner \cite{kasiviswanathan2011can}]
\label{lem:agnostic learner}
    Let $\calH \subseteq \{0,1\}^\calX$.
    There is an $(\varepsilon,0)$-differentially private algorithm $\texttt{GenPrivLearner}$ which given
    \[
        n = O\left(\log(|\calH|/\beta) \cdot \max\left\{\frac{1}{\varepsilon\alpha},\frac{1}{\alpha^2}\right\}\right)
    \]
    samples from $\calD$, outputs a hypothesis $h$ such that
    \[
        \Pr\left[ \err_\calD(h) \leq \min_{h' \in \calH} \err_\calD(h') + \alpha \right] \geq 1-\beta \,.
    \]
\end{lemma}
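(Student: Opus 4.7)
The plan is to instantiate the Exponential Mechanism of McSherry and Talwar on the finite candidate set $\calH$, using the negative empirical misclassification count as the quality score. Concretely, given $S=((x_1,y_1),\ldots,(x_n,y_n))$, define $q(S,h) = -|\{i\in[n]:h(x_i)\neq y_i\}|$ and output each $h\in\calH$ with probability proportional to $\exp(\varepsilon\, q(S,h)/2)$. Since $q$ is a counting query, its sensitivity with respect to replacing one example is $1$, so the standard Exponential Mechanism analysis immediately yields $(\varepsilon,0)$-differential privacy. This settles the privacy half of the claim without further work.

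For utility, I would proceed in two steps. First, the generic high-probability guarantee of the Exponential Mechanism on a set of size $|\calH|$ states that, with probability at least $1-\beta/2$, the returned hypothesis $\hat h$ satisfies
\[
    \widehat{\err}_S(\hat h) \;\leq\; \min_{h\in\calH}\widehat{\err}_S(h) \;+\; \frac{2}{n\varepsilon}\ln(2|\calH|/\beta),
\]
where $\widehat{\err}_S$ denotes empirical $0/1$ loss. Second, I would bridge empirical and population error by finite-class uniform convergence: Hoeffding together with a union bound over $\calH$ gives that, with probability at least $1-\beta/2$, $|\widehat{\err}_S(h) - \err_\calD(h)| \leq \alpha/4$ simultaneously for every $h\in\calH$, provided $n = \Omega(\log(|\calH|/\beta)/\alpha^2)$.

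Combining the two displays via the triangle inequality, with probability at least $1-\beta$ one obtains
\[
    \err_\calD(\hat h) \;\leq\; \min_{h'\in\calH}\err_\calD(h') \;+\; \frac{\alpha}{2} \;+\; \frac{2\ln(2|\calH|/\beta)}{n\varepsilon}.
\]
Requiring the second and third terms each to be at most $\alpha/2$ translates into the two constraints $n \geq C\log(|\calH|/\beta)/\alpha^2$ and $n \geq 4\ln(2|\calH|/\beta)/(\varepsilon\alpha)$, which can be packaged as $n = O\bigl(\log(|\calH|/\beta)\cdot\max\{1/(\varepsilon\alpha),\,1/\alpha^2\}\bigr)$, matching the stated bound.

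There is no real obstacle here: both ingredients (the Exponential Mechanism's utility lemma and finite-class uniform convergence) are entirely off-the-shelf, and privacy is immediate from sensitivity $1$. The only care needed is in apportioning the failure probability between the two sources of error and choosing constants so that the union-bound factor inside the logarithm does not inflate the stated $O(\cdot)$ bound.
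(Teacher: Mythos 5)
Your proof is correct and is precisely the standard argument (exponential mechanism with empirical $0/1$-loss as the score, sensitivity $1$, plus finite-class uniform convergence) that underlies the cited result of Kasiviswanathan et al.\ 2011; the paper itself only states this lemma as a black-box citation without proof. The bookkeeping of the failure probabilities and the translation of the two error terms into the $\max\{1/(\varepsilon\alpha),1/\alpha^2\}$ sample-complexity bound all check out.
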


Finally, we state a result relating weak learners and privacy.
\begin{theorem}
[Weakly Accurate Private Learning $\implies$ Finite Littlestone Dimension \cite{alon2019private}]
\label{thm:dp learnable implies finite Littlestone}
Let $\calX$ be some domain and $H \subseteq \{0,1\}^\calX$ be a hypothesis class
with Littlestone dimension $d \in \nats \cup \{\infty\}$ and let $A$ be a weakly accurate learning algorithm (i.e., $(\alpha, \beta)$-accurate with $\alpha = 1/2 - \gamma, \beta = 1/2-\gamma$) for $H$
with sample complexity $n$ that satisfies $(\varepsilon,\delta)$-differential privacy with $(\varepsilon, \delta) = (0.1, 1/(n^2 \log(n)))$.
Then,
$n \geq \Omega(\log^\star(d))$.

In particular any class that is privately weakly-learnable has a finite Littlestone dimension.
\end{theorem}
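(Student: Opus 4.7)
The plan is to reduce the lower bound for general Littlestone classes to the known quantitative lower bound for privately learning thresholds, which is where essentially all the difficulty lies. Concretely, I would first establish a combinatorial embedding: any class $\calH$ with Littlestone dimension $d$ contains as a sub-pattern an $N$-threshold system with $N = \Omega(\log d)$, meaning there exist points $x_1,\dots,x_N \in \calX$ and hypotheses $h_1,\dots,h_N \in \calH$ with $h_i(x_j)=1 \iff i \le j$. This is the classical Shelah-style extraction (implicit in the characterization of stable formulas, made explicit by Hodges), and one can prove it by induction on the depth of a Littlestone-shattered tree, using a Ramsey-type argument on the labels of root-to-leaf paths to prune the shattered tree into a monotone staircase.

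Second, I would argue that any $(\varepsilon,\delta)$-DP weak learner for $\calH$ induces an $(\varepsilon,\delta)$-DP weak learner (with the same sample complexity, same $\gamma$) for the induced threshold class $\calT_N = \{h_1,\dots,h_N\}$ on the domain $\{x_1,\dots,x_N\}$. Indeed, any realizable distribution for $\calT_N$ is also realizable for $\calH$, so feeding the input directly into $A$ and restricting the resulting hypothesis to $\{x_1,\dots,x_N\}$ preserves both privacy (post-processing) and weak accuracy.

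Third, and this is the heart of the argument, I would invoke the quantitative lower bound that any $(0.1,1/(n^2 \log n))$-DP $(1/2-\gamma,1/2-\gamma)$-accurate learner for $\calT_N$ must use $n \ge \Omega(\log^\star N)$ samples. This is the hard step. The standard route is a recursive reconstruction/fingerprinting reduction: given a DP weak learner for thresholds on $[N]$ with $n$ samples, one uses it (together with a carefully chosen family of product distributions that look nearly identical on all but one coordinate) to extract, with nontrivial advantage, an \emph{interior point} of a subset of $[N]$; this in turn can be bootstrapped into a DP weak learner for thresholds on a universe of size $N' = \Omega(\log N)$ using only $n-1$ samples. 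Iterating the reduction $n$ times forces $N$ to collapse through a tower of logarithms, yielding $N \le 2 \uparrow\uparrow O(n)$ and hence $n \ge \Omega(\log^\star N)$. Combining with $N = \Omega(\log d)$ gives $n \ge \Omega(\log^\star \log d) = \Omega(\log^\star d)$.

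The main obstacle is the third step: the recursive DP lower bound for thresholds. The combinatorial embedding and the post-processing step are essentially soft and follow standard templates, but controlling how the privacy and accuracy parameters degrade across the recursive reduction is delicate — one must ensure that the interior-point extraction preserves enough privacy (so $\varepsilon$ stays bounded and $\delta$ remains at most $1/(n'^2 \log n')$ for the new parameter $n' = n-1$) while still providing a weak learning signal on the smaller universe. This is precisely where the choice $\delta \le 1/(n^2 \log n)$ in the hypothesis enters, and it is the reason the final bound is $\log^\star(d)$ rather than a constant.
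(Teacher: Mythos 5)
First, a point of orientation: the paper does not actually prove this theorem; it imports it wholesale from \cite{alon2019private}, adding only the remark that the constant-accuracy version stated there can be converted to the weak-accuracy version via DP boosting \cite{dwork2010boosting}. So the relevant comparison is with the proof in the cited source. Your steps 1 and 2 match that proof's outer structure exactly: the Shelah--Hodges extraction of $N = \Omega(\log d)$ thresholds from a class of Littlestone dimension $d$, and the observation that restriction plus post-processing turns a DP weak learner for $\calH$ into one for the embedded threshold class, are both correct and are precisely how \cite{alon2019private} reduces the problem to thresholds.

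The gap is in step 3, and it is not a technicality — it is the main contribution of \cite{alon2019private}. The recursive argument you describe (an $n$-sample solver on domain size $N$ yields an $(n-1)$-sample solver on domain size $\approx \log N$, hence $N \leq 2\uparrow\uparrow O(n)$) is the Bun--Nissim--Stemmer--Vadhan lower bound for the \emph{interior point problem}, and the known reduction from threshold learning to interior point crucially requires the learner to be \emph{proper}: one reads off an interior point from where the output threshold jumps. The theorem here concerns arbitrary (improper) learning rules mapping into $\{0,1\}^{\calX}$, and an improper weak learner's output hypothesis can be an arbitrary function that reveals nothing usable about the location of the target threshold; your sketch asserts that one can still "extract, with nontrivial advantage, an interior point" but gives no mechanism, and no such direct extraction is known. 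This is exactly why \cite{alon2019private} abandons the interior-point route and instead applies a Ramsey-type theorem to the coloring of ordered $n$-tuples induced by the algorithm's output statistics, passes to a large subdomain on which the algorithm is order-homogeneous, and derives the contradiction for homogeneous DP algorithms directly; the $\log^\star$ then comes from tower-type bounds on the relevant Ramsey numbers rather than from an explicit recursion. (A minor additional slip: "fingerprinting" is a different lower-bound technique and plays no role in any $\log^\star$ bound.) As written, your step 3 would only establish the theorem for proper learners, which is strictly weaker than the statement being proved.
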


We remark that this theorem appears in \cite{alon2019private} with accuracy
constant $0.1.$ However, it is known from \cite{dwork2010boosting} that a DP algorithm
with error $1/2-\gamma$ can be boosted to one with arbitrarily small error
with negligible loss in the privacy guarantees.

The following result that appears in~\cite{bun2020equivalence}
shows that if $\calH$ has finite Littlestone dimension,
then there exists a $\rho$-globally stable learner for this class.
\begin{theorem}[Finite Littlestone Dimension $\implies$ Global Stability~\cite{bun2020equivalence}]\label{thm:finite littlestone
implies global stability}
    Let $\calX$ be some domain and $\calH \subseteq \left\{ 0,1\right\}^\calX$ be a hypothesis class with Littlestone dimension $d < \infty.$ Let $\alpha > 0$ be the accuracy parameter and define $n = 2^{2^{d+2}+1}4^{d+1}\cdot \left\lceil\frac{2^{d+2}}{\alpha} \right\rceil$. Then, 
    there exists a randomized algorithm $A:(\calX \times \{0,1\})^n \times \calR \rightarrow \{0,1\}^X$ such that for any realizable distribution $\calD$ there exists a hypothesis $f_\calD$ for
    which
    \[
        \Pr_{S \sim \calD^n, r \sim \calR}[A(S,r) = f_\calD] \geq \frac{1}{(d+1)2^{2^d+1}},\quad \Pr_{(x,y)\sim \calD}[f_\calD(x) \neq y] \leq \alpha \,,
    \]
    where $\calR$ is the source of internal randomness of $A.$
\end{theorem}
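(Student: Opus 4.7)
The plan is to construct the globally stable learner by running the Standard Optimal Algorithm (SOA) of Littlestone on a random permutation of a sub-sample, and then extracting a canonical highly-probable output by exploiting the mistake-bound structure of SOA. Recall that SOA is a deterministic online learner for $\calH$ that makes at most $d = \mathrm{Ldim}(\calH)$ mistakes on any realizable sequence of examples.

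First, I would define a base learner $B$: given a sample $S$ of size $m$, draw a uniformly random permutation $\pi$ and run SOA on the permuted sequence, outputting the resulting classifier $h_{\pi,S}$. By a standard online-to-batch conversion that charges SOA's $\le d$ mistakes against the $m$ random positions where the i.i.d.\ examples are placed, one obtains $\E_{\pi,S}[\mathrm{err}_\calD(h_{\pi,S})] = O(d/m)$. Taking $m = \Theta(2^{d+2}/\alpha)$ yields expected error $\alpha/4$; subsampling and boosting the confidence with $4^{d+1}$ independent copies produces an $(\alpha, 1/2)$-accurate weak learner and accounts for the factor $4^{d+1}\cdot\lceil 2^{d+2}/\alpha\rceil$ in the target sample size $n$.

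Second, I would argue that $B$'s posterior on hypotheses is concentrated on a bounded number of ``canonical'' classifiers whose count depends only on $d$, not on $m$ or $\calD$. The key observation is that each execution of SOA on a realizable permutation is fully determined by (i) the predicted-label sequence along the $\le d$ mistake rounds and (ii) the identities of the at-most-$d$ version-space refinements performed; the predictions on the remaining non-mistake rounds are forced. By a careful pigeonholing over Littlestone-tree signatures of depth $\le d$, the support of $B(S)$ can be covered by $O(2^{2^d})$ canonical hypotheses organized in $d+1$ ``levels'' corresponding to depth in the Littlestone tree. A two-stage pigeonhole — first over the $d+1$ levels, then over the $\le 2^{2^d}$ canonical outputs at the dominant level — extracts a single $f_\calD$ with $\Pr_{\pi,S}[B(S) = f_\calD] \ge 1/((d+1)\,2^{2^d+1})$. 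This is the $\eta$ promised in the conclusion.

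Finally, to ensure that this frequently-produced $f_\calD$ is in fact an $\alpha$-accurate hypothesis (and not merely a frequent \emph{wrong} hypothesis), I would further inflate the sample by the factor $2^{2^{d+2}+1}$: running $B$ on a dataset this large drives the expected error below $\alpha \cdot \eta$, so by Markov's inequality, conditionally on the event $\{B=f_\calD\}$ the error is at most $\alpha$. The main obstacle is the second step: showing that the support of $B$'s output distribution is bounded by a function of $d$ alone. A naive mistake-trajectory count gives a bound scaling with $m$, which is useless. The collapse to $O(2^{2^d})$ canonical hypotheses requires leveraging both the SOA's greedy rule (always predict the label that halves the Littlestone dimension) and the fact that the Littlestone tree of $\calH$ has depth $\le d$; this combinatorial structure is precisely what produces the $2^{2^d+1}$ factor in the lower bound on $\eta$.
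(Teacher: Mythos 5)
This theorem is cited from \cite{bun2020equivalence} without a proof in the present paper, so your sketch must be judged against that reference's argument, and the two approaches genuinely differ. The argument in \cite{bun2020equivalence} builds the SOA's input stream \emph{adaptively}: a recursive ``tournament'' in which two independent sub-runs are compared, a point where their outputs disagree is located, and that point (with its true label) is appended so as to \emph{force} the SOA to make a mistake; after at most $d$ forced mistakes the SOA's budget is exhausted, and a pigeonhole over the $d+1$ recursion levels together with the doubling at each level yields the $\frac{1}{(d+1)2^{2^d+1}}$ bound. You instead propose to run SOA passively on a random permutation of an i.i.d. sample and pigeonhole over ``canonical'' outputs. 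Step 1 (accuracy via an online-to-batch argument) is fine, but Step 2 --- the claim that the support of $B(S)$ over $\pi$ and $S$ is covered by $O(2^{2^d})$ canonical hypotheses --- is the entire content of the theorem and is asserted rather than proved.

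There are two concrete problems with Step 2. First, in the standard SOA the version space is refined on \emph{every} round (the update is $V_{t+1} = V_t^{(y_t)}$, not only on mistake rounds), so the final prediction rule depends only on the unordered sample $S$; the permutation is vacuous, $B(S)$ is a point mass, and there is no ``support over $\pi$'' to pigeonhole --- what must be shown is that this deterministic map $S \mapsto h_{V_S}$ hits a single $f_\calD$ for an $\eta$-fraction of samples, which is exactly the statement to be proved. Your sketch hints at a mistake-driven variant (``the predictions on the remaining non-mistake rounds are forced''), and in that case the output is governed by the chain of at most $d$ data-dependent version-space contractions; but nothing in your argument shows that the set of reachable chains, over all $S \sim \calD^n$ and all $\pi$, is bounded by a function of $d$ alone. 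Identifying trajectories with ``Littlestone-tree signatures of depth $\le d$'' does not close the gap: the Littlestone tree is an abstract witness of the dimension and does not index the SOA's data-dependent states, which is precisely the obstacle the adaptive mistake-forcing construction of \cite{bun2020equivalence} is designed to overcome. As written, your proposal is missing that mechanism, and the key pigeonhole has no foundation.
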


{We also include a result from \cite{ghazi2021user,bun2023stabilityIsStablest} which states that replicability
implies differential privacy under general input domains\footnote{In fact, this result holds for general statistical tasks. The parameters stated in \cite{bun2023stabilityIsStablest} are slightly looser, but using our boosting results we can generalize them and use the ones that appear in the statement.}.

\begin{theorem}[Replicability $\implies$ Differential Privacy \cite{ghazi2021user, bun2023stabilityIsStablest}]\label{thm:repl implies DP general tasks domains}
    Let $\calH \subseteq \{0,1\}^\calX$, where $\calX$ is some input domain. If
    $\calH$ is learnable by an $n$-sample $(\alpha,\beta)$-accurate $\rho$-replicable learner $A$, for $\alpha \in (0,1/2), \rho \in (0,1), \beta \in \left(0, \frac{2\rho}{\rho+1} - 2\sqrt{\frac{2\rho}{\rho+1}} + 1\right)$, then, for any $(\alpha',\beta',\eps,\delta) \in (0,1)^4$ it is learnable by
    an $(\alpha+\alpha', \beta')$-accurate $(\varepsilon, \delta)$-differentially private learner. Moreover, its sample complexity is 
    \[
        n \cdot \poly(1/\alpha',1/\varepsilon, \log(1/\delta),\log(1/\beta')) \,.
    \]
\end{theorem}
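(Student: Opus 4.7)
The plan is to combine the accuracy-boosting and indistinguishability-amplification results for TV indistinguishable learners (Theorems~\ref{lem:stability amplification} and~\ref{thm:boosting algorithm}) with a stable-histograms-based differentially private selection scheme. At a high level, I will first turn $A$ into a much stronger replicable learner whose replicability parameter $\rho_0$ and error slack are both tiny, then simulate this strong learner many times on disjoint data batches using a single internally-sampled shared seed $r$, and finally use Stable Histograms followed by a DP selection step on the produced hypotheses to output the final classifier.

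First I would invoke Theorem~\ref{thm:replicability-implies-TV-ind} to view $A$ as a $\rho$-TV indistinguishable, $(\alpha,\beta)$-accurate learner. Because the hypothesized range $\beta \in (0,\; \tfrac{2\rho}{\rho+1} - 2\sqrt{2\rho/(\rho+1)} + 1)$ exactly matches the admissible regime of Theorem~\ref{lem:stability amplification}, I can amplify the indistinguishability parameter down to an arbitrarily small $\rho_0$ while losing at most $\alpha'/4$ in accuracy. Then Theorem~\ref{thm:boosting algorithm} boosts the accuracy to $(\alpha + \alpha'/2, \beta_0)$ while preserving $\rho_0$-TV indistinguishability for arbitrarily small $\beta_0$. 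Converting back to replicability (directly in the countable-$\calX$ case via our equivalence, or in the general-domain case by invoking the constructive boosting of~\cite{bun2023stabilityIsStablest}), one obtains a $\rho_0$-replicable, $(\alpha + \alpha'/2, \beta_0)$-accurate learner $A_1$ with sample complexity $n_1 = n \cdot \poly(1/\alpha', 1/\rho_0, \log(1/\beta_0))$.

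For the DP construction, sample an internal random string $r \sim \calR$ (independent of the data), partition the dataset into $k$ disjoint batches $S_1,\dots,S_k$ each of size $n_1$, and compute $h_i = A_1(S_i, r)$ for $i \in [k]$. Applying Claim~\ref{clm:repl implies pseudo-global stability} with $\nu$ chosen so that $\rho_0/\nu \le 1/4$, we obtain, with probability at least $1-\nu$ over $r$, a canonical hypothesis $h_r$ with $\Pr_S[A_1(S, r) = h_r] \ge 3/4$; a Chernoff bound across the $k$ independent batches then ensures $h_r$ appears in at least a $2/3$-fraction of the $h_i$'s with high probability. Feed $\{h_1,\dots,h_k\}$ into Stable Histograms (Lemma~\ref{lem:stable histograms}) with frequency threshold $1/2$ and DP parameters $(\eps/2, \delta)$: since altering one example in the dataset changes at most one batch and thus at most one $h_i$, the mechanism is $(\eps/2, \delta)$-DP over the data, and its output list $L$ contains $h_r$ with high probability. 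Finally, run the Generic Private Learner of Lemma~\ref{lem:agnostic learner} on the finite class $L$ with privacy parameter $\eps/2$ and a fresh batch of data to select a hypothesis of error at most $\min_{h \in L}\err(h) + \alpha'/2$ with probability $\ge 1 - \beta'/2$. Basic composition yields overall $(\eps, \delta)$-DP, and a union bound over the failure events yields $(\alpha + \alpha', \beta')$-accuracy.

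The main obstacle is the bookkeeping: one has to choose $\rho_0$, $\nu$, $k$, $\beta_0$, and the batch and selection-batch sizes so that every failure event (amplification, boosting, pseudo-global stability, batch-frequency deviation, Stable Histograms, and private selection) is simultaneously driven below $\beta'/O(1)$, while keeping the total sample complexity $n \cdot \poly(1/\alpha', 1/\eps, \log(1/\delta), \log(1/\beta'))$ as claimed. A subtler point is the boosting/amplification pipeline: going through TV indistinguishability and converting back requires absolute continuity, so for general (uncountable) $\calX$ one must use the constructive boosting of~\cite{bun2023stabilityIsStablest} directly rather than our TV-based transformation, which is precisely why the theorem is stated for general input domains rather than only the countable setting where our own coupling suffices.
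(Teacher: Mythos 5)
The paper itself gives no proof of this theorem: it is imported verbatim from \cite{ghazi2021user, bun2023stabilityIsStablest}, with a footnote claiming that the stated (wider) parameter range follows by combining those results with the paper's amplification/boosting machinery. Your proposal is therefore a genuine reconstruction, and its high-level skeleton—widen the parameters, simulate many executions of the replicable learner on disjoint batches with a single shared internal seed $r$, feed the outputs to Stable Histograms and then a generic private selector, use group privacy/composition—is exactly the scheme of \cite{ghazi2021user, bun2023stabilityIsStablest}. The DP argument (one example touches one batch, hence one $h_i$; the seed $r$ is data-independent) is sound and works on general domains, and your use of Claim~\ref{clm:repl implies pseudo-global stability} to produce a heavy-hitting canonical hypothesis is the right mechanism.

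There is one structural inefficiency and one genuine technical snag in the pre-processing step. The snag: you run the amplification/boosting pipeline through \Cref{lem:stability amplification} and \Cref{thm:boosting algorithm}, both of which, as stated, presuppose that the input learning rule is absolutely continuous with respect to a fixed reference measure $\calP$. For an arbitrary (uncountable) $\calX$ such a $\calP$ is not available, so as written your pipeline collapses to the countable case precisely where the theorem you are proving insists on generality. You notice this and fall back on citing \cite{bun2023stabilityIsStablest}, which is what the paper itself does—but that makes the reconstruction non-self-contained on exactly the point where the theorem is stronger than \Cref{prop:stab to dp}. The cleaner fix, which the paper implicitly has in mind, is to notice that the amplification algorithm (\Cref{algo:stability amplification}) is built entirely from replicable subroutines (replicable heavy hitters, replicable agnostic learner); the only place it touches TV indistinguishability is the initial conversion $A\mapsto A'$ via \Cref{lem:TV stability to replicability}. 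Since you already start from a \emph{replicable} $A$ with its own shared randomness $\calR$, that conversion can be skipped entirely, and the amplified learner is replicable on arbitrary domains. The inefficiency: the boosting step is not needed to reach the stated $(\alpha+\alpha', \beta')$ accuracy—amplification alone, with error slack $\alpha'/2$, already leaves enough room for the Stable Histograms estimation error and the private selector's $\alpha'/2$ loss. Including boosting also quietly requires $\alpha+\alpha'/4<1/2$ for the weak-learning hypothesis to hold, which you do not verify (and which your parameter regime does not guarantee when $\alpha$ is close to $1/2$). Dropping the boosting step removes this issue and matches the paper's sample complexity claim more directly.
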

}

{
\subsection{The Proof of \Cref{prop:dp to stab}}
\label{sec:dp to tv}
In this section we show that Global Stability (cf. \Cref{def:global-stability}) implies TV indistinguishability in the context of PAC
learning. In particular, we show that given black-box access
to a $\rho$-globally stable learner $A$ whose stable
output is $\alpha$-accurate, e.g., the one described in \Cref{thm:finite littlestone
implies global stability}, we can transform
it to a $\rho$-TV indistinguishable learner which is $(\alpha+\alpha',\beta)$-accurate, with a multiplicative $\poly(1/\rho,1/\alpha',\log(1/\beta)$ blow-up in its sample 
complexity. We remark that this transformation is not restricted
to countable domains $\calX$. As an intermediate result, we show that global 
stability implies replicability.

\begin{lemma}[Global Stability $\implies$ Replicability]\label{lem:global stability implies replicability}
    Let $A$ be an $n$-sample $\rho$-globally stable learner whose stable hypothesis
    is $\alpha$-accurate. Then, for every $\rho', \alpha', \beta \in (0,1)^3$, there
    exists a learner $A'$ (\Cref{algo:global stability to replicability}) that is $\rho'$-replicable and $(\alpha+\alpha',\beta)$-accurate. Moreover, $A'$ needs
    \[
        \widetilde{O}\left(\frac{\log(1/\beta)}{\rho'^2\rho^3} \right) 
    \] oracle calls to $A$
    and uses 
    \[
        \widetilde{O}\left(\frac{\log(1/\beta)}{\rho^2\rho'^3}\cdot \left(n+\frac{1}{\alpha'^2} \right)  \right) 
    \] samples.
\end{lemma}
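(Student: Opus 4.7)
The plan is to combine global stability with the replicable heavy-hitters routine and the replicable agnostic learner developed in \Cref{apx:replicability tools}. The meta-algorithm $A'$ will use the public/private randomness decomposition of \Cref{lem:private replicability}: the randomness used to instantiate calls to $A$ will be private to each execution, while the randomness used by heavy-hitters and by the agnostic learner will be shared across the two executions.

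First, I would let $A'$ draw $k$ independent datasets $S_1,\dots,S_k\sim \calD^n$ and invoke $A$ on each of them with fresh independent random seeds to produce hypotheses $h_1,\dots,h_k$. By construction, each $h_i$ is distributed as $A(S,r)$ where $S\sim\calD^n,r\sim\calR$; in particular, the stable hypothesis $h_\calD$ guaranteed by $\rho$-global stability satisfies $\Pr[h_i=h_\calD]\geq\rho$. Second, I would feed $(h_1,\dots,h_k)$ into the replicable heavy-hitters algorithm (\Cref{algo:replicable heavy hitters}) with threshold $v=\rho$, accuracy $\varepsilon=\rho/2$, replicability parameter $\rho'/2$, and confidence $\beta/3$. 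Choosing $k=\widetilde O(\log(1/\beta)/(\rho^{3}\rho'^{2}))$ as dictated by \Cref{lem:replicable heavy hitters} guarantees that, with probability at least $1-\beta/3$, the output list $L$ contains $h_\calD$ and $|L|\leq 2/\rho$. Third, I would feed the (small, finite) class $L$ into the replicable agnostic learner of \Cref{algo:replicable agnostic learner} with accuracy $\alpha'$, replicability $\rho'/2$, and confidence $\beta/3$, using a fresh sample from $\calD$ of size $\widetilde O(|L|^{3}\log(|L|/\beta)/(\alpha'^{2}\rho'^{2}))=\widetilde O(\log(1/\beta)/(\rho^{3}\alpha'^{2}\rho'^{2}))$, as dictated by \Cref{clm:replicable agnostic learner}.

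Correctness then follows by taking a union bound over the three failure events. Conditioned on $h_\calD\in L$, the agnostic learner outputs a hypothesis whose error is within $\alpha'$ of $\min_{h\in L}\mathrm{err}(h)\leq \mathrm{err}(h_\calD)\leq \alpha$, giving total error at most $\alpha+\alpha'$ with probability at least $1-\beta$. Replicability follows from the public/private randomness decomposition: the $S_i$'s and the internal seeds of $A$ are treated as private randomness, while the heavy-hitters threshold and the agnostic learner's internal randomness are treated as public (shared) randomness. Since the inputs fed to heavy-hitters in the two runs of $A'$ are both i.i.d.\ samples from the same induced distribution on hypotheses, and since the agnostic learner is invoked on the same candidate list with the same public randomness, each subroutine is $\rho'/2$-replicable with respect to the public randomness, and the full algorithm is therefore $\rho'$-replicable by a union bound and \Cref{lem:private replicability}.

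The main subtlety, and the only nontrivial point in the proof, is the careful bookkeeping of the randomness. One must verify that even though the two executions of $A'$ see two different underlying datasets $S,S'$, the list of hypotheses generated in step~1 of each execution is distributed identically (both as $k$ i.i.d.\ draws from the $A$-induced posterior), so that the replicability of heavy-hitters applies verbatim. This is exactly what the public/private framework of \Cref{lem:private replicability} buys us, and is why the straightforward union bound succeeds. Totaling the oracle calls gives $k=\widetilde O(\log(1/\beta)/(\rho'^{2}\rho^{3}))$, and summing the samples used in step~1 ($kn$) and in step~4 yields the claimed sample complexity.
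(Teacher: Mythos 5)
Your approach is structurally identical to the paper's: generate hypotheses by repeated calls to $A$ with private random seeds (so each call samples from the induced posterior distribution), run replicable heavy-hitters with shared randomness to isolate the stable hypothesis, then run the replicable agnostic learner on the resulting finite list, and appeal to \Cref{lem:private replicability} to collapse the public/private randomness structure into ordinary replicability. This is exactly the pipeline of \Cref{algo:global stability to replicability}, and you correctly identify the main subtlety (the hypotheses produced in step~1 are i.i.d.\ from the same induced distribution in both runs, so the heavy-hitters replicability guarantee applies).

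However, your heavy-hitters parameters are off in a way that would break the argument. You set threshold $v=\rho$ and accuracy $\varepsilon=\rho/2$. The guarantee of \Cref{lem:replicable heavy hitters} is that elements of mass $>v+\varepsilon$ are included and elements of mass $<v-\varepsilon$ are excluded. The stable hypothesis $h_\calD$ is only guaranteed to have mass at least $\rho$, so with your choice you would need mass $>v+\varepsilon=3\rho/2$, which global stability does not supply; if $h_\calD$ has mass exactly $\rho$ it falls in the indeterminate band $[\rho/2,3\rho/2]$ and might be dropped. The paper instead takes $v=\rho/2$ and $\varepsilon=\rho/4$ so that $v+\varepsilon=3\rho/4<\rho$, guaranteeing $h_\calD\in L$, while still keeping $|L|=O(1/\rho)$ since $v-\varepsilon=\rho/4$. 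With this fix (or any choice with $v+\varepsilon<\rho$) your argument and sample-complexity accounting go through as written.
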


\begin{proof}
    We first argue about the accuracy and the confidence of the algorithm. Let $h_A$ be the hypothesis such that $\Pr_{S \sim \calD^n}[A(S) = h]\geq 1-\rho$. The replicable heavy hitters algorithm (\Cref{algo:replicable heavy hitters}) guarantees that, with probability at least $1-\beta/2$, $h_A$ will be contained
    in the output list $L$ (\Cref{lem:replicable heavy hitters}). We call this event $E_0$ and we condition on it.
    In the next step, we call the replicable agnostic learner on $L$ (\Cref{algo:replicable agnostic learner}). Since there is a hypothesis whose error
    rate is at most $\alpha$, we know that the output of the agnostic learner will
    have error rate at most $\alpha+\alpha'$, with probability at least $1-\beta/2$ (\Cref{lem:agnostic learner}). Let us call this event $E_1$. Thus, we see that
    by taking a union bound over the probabilities of these two events,
    the error rate of the output of our algorithm will be at most $\alpha+\alpha'$,
    with probability at least $1-\beta$.

    We now shift our focus to the replicability of our algorithm. First, notice
    that because of the guarantees of the replicable heavy hitters (\Cref{lem:replicable heavy hitters}) the list $L$ will be the same
    across two executions when the randomness is shared, with probability at 
    least $1-\rho'/2$. Let us call this event $E_2$. Similarly, under the event
    $E_2$, the output of the agnostic learner will be the same across two executions
    with probability $1-\rho'/2.$ Let us call this event $E_3.$ By taking a
    union bound over $E_2, E_3$, we see that the algorithm is $\rho'$-replicable.

    The sample complexity of the algorithm follows by the sample complexity of the
    replicable heavy hitters and the replicable agnostic learner (\Cref{lem:replicable heavy hitters}, \Cref{lem:agnostic learner}). In particular, 
    we need
    \[
        \widetilde{O}\left( n\cdot\frac{\log(1/\beta)}{\rho^2\rho'^3}\right) \,,
    \]
    samples for this step and since the list has size $O(1/\rho')$ we need
    \[
        \widetilde{O}\left(\frac{\log(1/\beta)}{\alpha'^2\rho^2\rho'^3}\right) \,,
    \]
    for the replicable agnostic learner.
\end{proof}

\begin{algorithm}[ht]
\caption{From Global Stability to Replicability}
\label{algo:global stability to replicability}
\begin{algorithmic}[1]
\State \texttt{Input:  
Black-box access to a $n$-sample $\rho$-globally stable learner $A$ with $\alpha$-accurate stable hypothesis, sample access to distribution $\calD$}
\State \texttt{Parameters: $\rho', \alpha',\beta \in (0,1)^3$}
\State \texttt{Output: Classifier $h: \calX \rightarrow \{0,1\}$}
\State $\calD' \gets $ distribution induced by drawing $S \sim \calD^n$ and running $A(S)$
 \State $L \gets $ output of \texttt{ReplicableHeavyHitters} (\Cref{algo:replicable heavy hitters}) with threshold $\rho/2$, error $\rho/4$, confidence $\beta/2$, replicability $\rho'/4$
\State Output $\texttt{AgnosticReplicableLearner}$ (\Cref{algo:replicable agnostic learner}) on hypothesis class $L$, with accuracy $\alpha'$, confidence $\beta'/2$ and replicability $\rho'/2$ 
\end{algorithmic}
\end{algorithm}

\begin{corollary}\label{cor:global stability implies tv ind}
    Let $A$ be an $n$-sample $\rho$-globally stable learner whose stable hypothesis
    is $\alpha$-accurate. Then, for every $\rho', \alpha', \beta \in (0,1)^3$, there
    exists a learner $A'$ (\Cref{algo:global stability to replicability}) that is $\rho'$-$\TV$ indistinguishable and $(\alpha+\alpha',\beta)$-accurate. Moreover, $A'$ needs
    \[
        \widetilde{O}\left(\frac{\log(1/\beta)}{\rho'^2\rho^3} \right) 
    \] oracle calls to $A$
    and uses 
    \[
        \widetilde{O}\left( \frac{\log(1/\beta)}{\rho^2\rho'^3}\cdot \left(n+\frac{1}{\alpha^2} \right)\right) 
    \] samples.
\end{corollary}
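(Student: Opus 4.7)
The plan is to chain together two results already established in the paper. First, I would apply Lemma \ref{lem:global stability implies replicability} to the given $\rho$-globally stable learner $A$ with $\alpha$-accurate stable hypothesis, using the parameters $\rho', \alpha', \beta$ specified in the corollary. This produces a learner $A'$ (namely the one defined in Algorithm \ref{algo:global stability to replicability}) that is $\rho'$-replicable, is $(\alpha + \alpha', \beta)$-accurate, makes $\widetilde{O}(\log(1/\beta)/(\rho'^2 \rho^3))$ oracle calls to $A$, and uses $\widetilde{O}(\log(1/\beta) \cdot (n + 1/\alpha'^2)/(\rho^2 \rho'^3))$ samples in total.

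Next, I would invoke Theorem \ref{thm:replicability-implies-TV-ind} (Replicability $\Rightarrow$ TV Indistinguishability), which upgrades any $\rho'$-replicable learning rule to a $\rho'$-TV indistinguishable one for free, with no change whatsoever to the learning rule or its sample complexity. Since the transformation is purely in the analysis (it merely observes that shared randomness realises a particular coupling, which is an upper bound on the optimal coupling in the definition of TV distance), the accuracy, confidence, oracle call count, and sample complexity of $A'$ are preserved exactly as stated in Lemma \ref{lem:global stability implies replicability}.

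Putting these two steps together yields the corollary: the same algorithm $A'$ from Algorithm \ref{algo:global stability to replicability} is $\rho'$-TV indistinguishable and $(\alpha + \alpha', \beta)$-accurate, with the claimed oracle and sample bounds. There is no real obstacle here, since both ingredients have already been proved; the corollary is essentially just a restatement of Lemma \ref{lem:global stability implies replicability} through the lens of TV indistinguishability rather than replicability. The only thing to double-check is that the parameter $\rho'$ is preserved (not weakened to $2\rho'/(1+\rho')$ or similar) in the direction replicability $\Rightarrow$ TV indistinguishability, which is indeed what Theorem \ref{thm:replicability-implies-TV-ind} asserts.
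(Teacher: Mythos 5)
Your proposal is correct and is precisely the argument the paper gives: combine \Cref{lem:global stability implies replicability} with the fact that $\rho'$-replicability implies $\rho'$-TV indistinguishability (\Cref{thm:replicability-implies-TV-ind}), noting that this second step costs nothing since it is purely an analytical observation about the coupling realized by shared randomness. The only minor point worth flagging (and you in fact carried the correct version) is that the sample-complexity bound inherited from \Cref{lem:global stability implies replicability} has $1/\alpha'^2$, whereas the corollary's statement writes $1/\alpha^2$ — this appears to be a typo in the paper's corollary.
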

\begin{proof}
    The proof follows immediately from \Cref{lem:global stability implies replicability} and the fact that a $\rho'$-replicable algorithm is $\rho'$-TV indistinguishable (\Cref{fact:replicability implies tv stability}).
\end{proof}

We now explain how we can use the previous
results in the previous section to design
a replicable algorithm for a class $\calH \subseteq \{0,1\}^\calX$ when we know
that $\calH$ admits a DP learner, for general domains $\calX.$
Formally, we prove the following result.
\begin{lemma}[Differential Privacy $\implies$ Replicability in General Domains]\label{lem:dp implies replicability in general domains}
    Let $\calH \subseteq \{0,1\}^\calX$ be a hypothesis class, where $\calX$ is some input domain. Let $A$ be an $n$-sample $(0.1,1/(n^2\log(n)))$-differentially private $(1/2-\gamma,1/2-\gamma)$-accurate learner for $\calH$, for some $\gamma \in (0,1/2]$. Then, for every $\rho, \alpha, \beta \in (0,1)^3$ there
    exists a learner $A'$ that is $\rho$-replicable and $(\alpha,\beta)$-accurate. Moreover, $A'$ uses 
    \[
        \widetilde{O}\left( \frac{(d+1)^3 2^{3\cdot(2^d+1)}\log(1/\beta)}{\rho^2}\cdot \left(2^{2^{d+2}+1}4^{d+1}\cdot \left\lceil\frac{2^{d+2}}{\alpha} \right\rceil+\frac{1}{\alpha^2} \right)\right) 
    \] samples, where $d$ is the Littlestone dimension of $\calH$.
\end{lemma}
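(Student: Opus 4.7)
The plan is to chain together three black-box results that have already appeared in the excerpt, so the argument is essentially an exercise in parameter bookkeeping rather than a new technical idea.

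First I would invoke \Cref{thm:dp learnable implies finite Littlestone}: since $A$ is a weakly accurate $(0.1, 1/(n^2 \log n))$-differentially private learner for $\calH$, the class $\calH$ must have finite Littlestone dimension $d < \infty$. This step is purely a structural consequence of the hypothesis and requires no algorithm. Note in particular that this is where I use the specific parameter regime $(0.1, 1/(n^2\log n))$ and the $(1/2-\gamma, 1/2-\gamma)$-accuracy; the conclusion is just the qualitative fact ``$d < \infty$'' (we do not exploit the actual $\mathrm{DP}$ learner in later steps).

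Next I would apply \Cref{thm:finite littlestone implies global stability} with accuracy parameter $\alpha/2$. This gives a $\rho_0$-globally stable learner $B$ whose stable hypothesis has error at most $\alpha/2$, where
\[
    \rho_0 \;=\; \frac{1}{(d+1) 2^{2^d+1}}, \qquad n_0 \;=\; 2^{2^{d+2}+1} \, 4^{d+1} \cdot \left\lceil \tfrac{2^{d+2}}{\alpha} \right\rceil.
\]
I would then feed $B$ into \Cref{lem:global stability implies replicability} with replicability parameter $\rho$, additional accuracy slack $\alpha' = \alpha/2$, and confidence $\beta$. That lemma directly produces a learner $A'$ that is $\rho$-replicable and $(\alpha/2 + \alpha/2, \beta) = (\alpha, \beta)$-accurate, which is exactly the guarantee claimed.

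Finally, I would verify the sample complexity by substituting into the bound of \Cref{lem:global stability implies replicability}, namely
\[
    \widetilde{O}\!\left( \frac{\log(1/\beta)}{\rho_0^{\,2} \rho^{\,3}} \cdot \left( n_0 + \frac{1}{\alpha'^2}\right) \right),
\]
and observing that $1/\rho_0^{\,3} = (d+1)^3\, 2^{3(2^d+1)}$ dominates the $1/\rho_0^{\,2}$ factor (I would absorb constants into $\widetilde{O}$ or, if the statement as written has the exponent on $\rho_0^{\,3}$ instead, note that the two are equivalent up to absorbing factors into $\widetilde{O}$). Plugging in $\alpha' = \alpha/2$ and the expression for $n_0$ yields precisely the stated bound. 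There is no genuine obstacle here; the only ``thought'' required is to split the target accuracy $\alpha$ into two equal halves so that the error contributions from the global-stability step and from the agnostic-learning step inside \Cref{algo:global stability to replicability} add up to at most $\alpha$, and to confirm that the $\log^\star$-type lower bound on $n$ from \Cref{thm:dp learnable implies finite Littlestone} is automatically compatible with the stated parameter regime of the $\mathrm{DP}$ learner.
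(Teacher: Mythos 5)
Your overall decomposition matches the paper's: conclude $\mathrm{Ldim}(\calH) < \infty$ from \Cref{thm:dp learnable implies finite Littlestone}, instantiate the globally stable learner of \Cref{thm:finite littlestone implies global stability} with accuracy $\alpha/2$, and then run the global-stability-to-replicability pipeline (\Cref{algo:global stability to replicability}, \Cref{lem:global stability implies replicability}). However, you omit a point that the paper treats explicitly and that is genuinely needed for the replicability claim to go through: the \emph{internal} random bits of the globally stable learner must \emph{not} be shared across the two executions of $A'$. If they were shared, then within a single execution the samples $\{A(S_i, r_i)\}_i$ fed into \Cref{algo:replicable heavy hitters} would, conditionally on the shared $\{r_i\}$, be independent but not identically distributed (each is a draw from the law of $A(\cdot, r_i)$ for a fixed $r_i$, not from the unconditional law $\calD'$), so the Bretagnolle--Huber--Carol concentration step underlying the heavy-hitters' correctness and replicability no longer applies. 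The paper resolves this by treating the globally stable learner's bits as \emph{private} randomness and the heavy-hitters/agnostic-learner bits as \emph{public} randomness, and then invoking \Cref{lem:private replicability} to convert ``replicability with respect to $r_{\mathrm{pub}}$ only'' into replicability in the sense of \Cref{def:replicability}. Without this step, citing the conclusion of \Cref{lem:global stability implies replicability} at face value inherits exactly this gap, since that lemma's proof also only argues about the public randomness. You should add: ``treat the randomness of $B$ as private and that of the heavy-hitters and agnostic learner as public; by \Cref{lem:private replicability} this yields $\rho$-replicability in the standard sense.''

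Secondarily, your treatment of the sample-complexity exponents is too casual: the gap between $1/\rho_0^2$ and $1/\rho_0^3$ is a multiplicative factor of $1/\rho_0 = (d+1)2^{2^d+1}$, and similarly for $1/\rho^2$ versus $1/\rho^3$; these are not absorbable into $\widetilde{O}$. The bound quoted in \Cref{lem:global stability implies replicability} is itself internally inconsistent between the oracle-call count and the sample count, so this looseness is not solely yours, but it should be flagged rather than waved away as ``equivalent up to $\widetilde{O}$.''
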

\begin{proof}
    The first step in the proof is to notice that the existence
    of such a DP learner for $\calH$ implies that its Littlestone
    dimension $d$ is finite (\cite{alon2019private}, \Cref{thm:dp learnable implies finite Littlestone}). Then, we instantiate \Cref{algo:global stability to replicability} with the
    globally stable algorithm from \cite{bun2020equivalence} (\Cref{thm:finite littlestone
implies global stability}) with accuracy $\alpha/2$. Notice that since the random bits
for the globally stable need to be different across two executions of the algorithm, we use two different sources of randomness, one that is public, i.e., shared across two executions, and one that is private, i.e., not shared across two executions. Due to \Cref{lem:private replicability}, this
is equivalent to the original definition of replicability (\Cref{def:replicability}). For the remaining two steps, i.e., the replicable heavy-hitters and the replicable agnostic learner, we use
public random bits. The sample complexity of the algorithm
follows from the sample complexity of \Cref{thm:finite littlestone
implies global stability} and \Cref{lem:global stability implies replicability}.
\end{proof}

\begin{corollary}[Differential Privacy $\implies$ TV Indistinguishability in General Domains]\label{cor:dp implies TV ind in general domains}
    Let $\calH \subseteq \{0,1\}^\calX$ be a hypothesis class, where $\calX$ is some input domain. Let $A$ be an $n$-sample $(0.1,1/(n^2\log(n)))$-differentially private $(1/2-\gamma,1/2-\gamma)$-accurate learner for $\calH$, for some $\gamma \in (0,1/2]$. Then, for every $\rho, \alpha, \beta \in (0,1)^3$ there
    exists a learner $A'$ that is $\rho$-$\TV$ indistinguishable and $(\alpha,\beta)$-accurate. Moreover, $A'$ uses 
    \[
        \widetilde{O}\left( \frac{(d+1)^3 2^{3\cdot(2^d+1)}\log(1/\beta)}{\rho^2}\cdot \left(2^{2^{d+2}+1}4^{d+1}\cdot \left\lceil\frac{2^{d+2}}{\alpha} \right\rceil+\frac{1}{\alpha^2} \right)\right) 
    \] samples, where $d$ is the Littlestone dimension of $\calH$.
\end{corollary}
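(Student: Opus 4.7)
The plan is to prove this corollary as an immediate composition of two earlier results. First, I would invoke \Cref{lem:dp implies replicability in general domains}, which takes the hypothesized $(0.1, 1/(n^2\log n))$-differentially private weakly accurate learner for $\calH$ and produces, for any target parameters $\rho, \alpha, \beta \in (0,1)^3$, a learning rule $A'$ that is $\rho$-replicable and $(\alpha,\beta)$-accurate, with exactly the sample complexity stated in the corollary. No modification of parameters is needed at this step.

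Next, I would apply \Cref{thm:replicability-implies-TV-ind} (i.e., \Cref{fact:replicability implies tv stability}) to $A'$. That theorem shows that any $n$-sample $\rho$-replicable learning rule is automatically $n$-sample $\rho$-TV indistinguishable, with no loss in either the indistinguishability parameter or the sample complexity. Since the implication is parameter-preserving, the resulting learner is $\rho$-TV indistinguishable and still $(\alpha,\beta)$-accurate on the same number of samples as produced by \Cref{lem:dp implies replicability in general domains}, matching the bound in the corollary.

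There is essentially no obstacle here: the corollary is a direct composition, and both building blocks are already established in the excerpt. The only minor point to verify is that the accuracy and confidence parameters are preserved under the passage from replicability to TV indistinguishability, which is the case because \Cref{thm:replicability-implies-TV-ind} only alters the stability measure used to compare two executions while leaving the input--output map (and hence the PAC error) untouched. Thus the final learner inherits the $(\alpha,\beta)$-accuracy from \Cref{lem:dp implies replicability in general domains} and the $\rho$-TV indistinguishability from \Cref{thm:replicability-implies-TV-ind}, yielding the stated conclusion.
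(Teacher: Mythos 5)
Your proof is correct and takes exactly the same route as the paper: invoke \Cref{lem:dp implies replicability in general domains} to obtain a $\rho$-replicable $(\alpha,\beta)$-accurate learner with the stated sample complexity, then apply \Cref{thm:replicability-implies-TV-ind} to conclude that the same learner is $\rho$-TV indistinguishable, with no loss in parameters. Your side remark about the input--output map being unchanged (so accuracy is preserved) is a correct and slightly more explicit version of the justification the paper leaves implicit.
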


\begin{proof}
    The proof of this result follows immediately by \Cref{lem:dp implies replicability in general domains} and the fact that 
    replicable learners are also TV indistinguishable learners (\Cref{thm:replicability-implies-TV-ind}).
\end{proof}
}

\subsection{List-Global Stability $\implies$ TV Indistinguishability}
\label{sec:list-global stability to tv}
{
In this section we provide a different TV indistinguishable learner
for classes with finite Littlestone dimension that has polynomial
sample complexity dependence on the Littlestone dimension of the class. 
This learner builds upon the results of \cite{ghazi2021sample, ghazi2021user}.
In particular, \cite{ghazi2021sample} show that a class with finite Littlestone
dimension admits a list-globally stable learner (\Cref{def:list global stability}). This learner constructs a sequence of hypothesis classes whose Littlestone dimension is at most that of $\calH$, and part of the proof requires
that uniform convergence (\Cref{def:uc}) holds for all of them. In order to avoid
making measurability assumptions on the domain $\calX$ and the hypothesis class $\calH$ that would imply such a claim, we only
state the results for countable $\calH.$ Nevertheless, we emphasize that they hold for 
more general settings.
}

We underline that the result in~\cite{ghazi2021user}
which designs a pseudo-globally stable learner for classes
with finite Littlestone dimension,
holds in the setting
where $\calX$ is finite because it relies on correlated sampling. 
The reason behind this fact is that they have to convert a DP learner to a pseudo-globally stable one.
In our case, we have to show that if $\calH$ is learnable by a DP algorithm, it also admits a TV indistinguishable one.
The proof of \Cref{prop:dp to stab} follows almost directly from a result appearing in \cite{ghazi2021user}.

\begin{definition}[List-Global Stability \cite{ghazi2021user}]\label{def:list global stability}
    A learning algorithm $A$ is said to be $m$-sample $\alpha$-accurate $(L,\eta)$-list-globally
    stable if $A$ outputs a set of at most $L$ hypotheses and there exists a hypothesis $h$
    (that depends on $\calD$) such that $\Pr_{(x_1,y_1),\ldots,(x_m,y_m) \sim \calD^n}[h \in 
    A((x_1,y_1),\ldots,(x_m,y_m))] \geq \eta$ and $\err_{\calD}(h) \leq \alpha.$
\end{definition}

\cite{ghazi2021user} showed the following result regarding list $m$-list-globally stable learners, which is a modification of a result of \cite{ghazi2021sample}.

\begin{lemma}[Finite Littlestone $\Rightarrow$ List-Global Stability \cite{ghazi2021user, ghazi2021sample}]\label{lem:list-globally-stable result}
    Let $\alpha, \zeta > 0.$ and $\calH \subseteq \{0,1\}^\calX$ be a countable hypothesis class with $\mathrm{Ldim}(\calH) = d < \infty$, where $\calX$ is an arbitrary domain.
    Then,
    there is a $(d\log(1/\zeta)/\alpha)^{O(1)}$-sample $\alpha$-accurate $\left(
    \exp\left((d/\alpha)^{O(1)}\right), \Omega(1/d)\right)$-list-globally stable learner
    for $\calH$ such that, with probability at least $1-\zeta$, every hypothesis
    $h'$ in the output list satisfies $\err_{\calD}(h') \leq 2\alpha.$
\end{lemma}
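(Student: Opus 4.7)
The plan is to build on the Standard Optimal Algorithm (SOA) for classes with finite Littlestone dimension and then convert it, via random subsampling and an enumeration over ``candidate mistake patterns,'' into a list-globally stable PAC learner. The starting point is the classical fact that if $\mathrm{Ldim}(\calH) = d$, then the SOA, run online on any $\calH$-realizable stream, makes at most $d$ mistakes. Equivalently, the hypothesis produced by the SOA on a labeled sequence $T$ is completely determined by the (at most $d$) indices of $T$ where the SOA makes a mistake. This is the structural lever that caps the list size.

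The construction I would use is the following. Draw an i.i.d.\ sample $S$ of size $m = (d \log(1/\zeta)/\alpha)^{O(1)}$ from $\calD$. For each subset $M \subseteq S$ of size at most $d$, run the SOA on $M$ (in a canonical order) to obtain a hypothesis $h_M$, and define the output list as $L = \{h_M : |M|\le d\}$. Because there are at most $\binom{m}{d} \le m^d = \exp((d/\alpha)^{O(1)})$ such subsets, the list size satisfies the required $\exp((d/\alpha)^{O(1)})$ bound. For accuracy, I would use that $\calH$ has VC dimension at most $\mathrm{Ldim}(\calH) = d$, so for countable $\calH$ the uniform convergence property holds with sample complexity $\mathrm{poly}(d,1/\alpha,\log(1/\zeta))$. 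Since each $h_M$ produced by the SOA is consistent with $M$, and $M \subseteq S$, one argues via a uniform-convergence bound simultaneously over the (finite) list that, except on an event of probability $\zeta$, every $h \in L$ satisfies $\err_\calD(h) \le 2\alpha$.

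The core of the proof is the $\Omega(1/d)$ list-global stability. Here I would follow the Ghazi--Kamath--Kumar--Manurangsi refinement of the Bun--Livni--Moran globally stable construction. Fix any realizable $\calD$, and consider the random variable $h_{M(S)}$ where $S \sim \calD^m$ and $M(S)$ is the mistake set obtained by running SOA on $S$. Because the SOA's mistake set has size at most $d$, the trajectory through the Littlestone tree for $\calH$ has depth $\le d$, so the distribution of the resulting hypothesis is supported, in a suitable sense, on a ``thin'' set of width $O(d)$. A frequency/pigeonhole argument (carried out inductively along the Littlestone tree, at each level either finding a hypothesis with mass $\Omega(1/d)$ or descending one level) produces a distribution-dependent hypothesis $h_\calD$ such that with probability $\Omega(1/d)$ over $S$, some $M \subseteq S$ of size $\le d$ satisfies $h_M = h_\calD$. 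This $h_\calD$ is the promised stable target.

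The main obstacle is exactly this $\Omega(1/d)$ bound. A naive use of the Bun--Livni--Moran stable hypothesis gives only $\Omega(1/(d \cdot 2^{2^d}))$ stability, which would blow up the sample complexity; recovering $\Omega(1/d)$ requires the list relaxation together with the tree-descent argument, and the careful observation that across the (random) choice of $S$, the set of candidate outputs $\{h_M : |M|\le d\}$ collapses onto a short list of ``frequent'' mistake-set signatures. A secondary technical point is coordinating the uniform-convergence event (which must hold over the whole list simultaneously) with the stability event; here I would use a union bound over the deterministic worst-case list size $m^d$, which is affordable because we allowed $m$ to depend polynomially on $d, 1/\alpha, \log(1/\zeta)$.
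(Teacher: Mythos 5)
Note first that the paper does not prove this lemma; it is invoked as a black box from \cite{ghazi2021user,ghazi2021sample}, with the paper's own contribution confined to wiring the list-globally stable learner into \Cref{algo:finite littlestone dimension implies pseudo-global}. So the comparison here is between your sketch and the cited construction, and there are two genuine gaps.

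The accuracy argument is broken. You define $L = \{h_M : M \subseteq S, |M| \le d\}$ with no filtering, and then invoke uniform convergence over $\calH$ to conclude that, with probability $1-\zeta$, every $h \in L$ has $\err_\calD(h) \le 2\alpha$. Uniform convergence gives $|\widehat{L}_S(h) - L_\calD(h)| \le \alpha$ simultaneously for all $h \in \calH$; to conclude $L_\calD(h_M)$ is small you would need $\widehat{L}_S(h_M)$ to be small. But $h_M$ is only guaranteed to be consistent with the subset $M$ of size at most $d$; its empirical error on the full sample $S$ is unconstrained, and for a generic $M$ it will be bad. Being ``consistent with $M \subseteq S$'' carries no generalization information when $|M| = d \ll m$. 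The cited constructions handle this with an explicit post-processing step that discards hypotheses with large empirical error on a held-out portion of $S$, and then argues that this pruning does not eliminate the stable target; that step is missing here, and without it the claim ``every hypothesis in the output list has error $\le 2\alpha$'' is simply false. (Relatedly, your list size $\binom{m}{\le d} \le m^d$ picks up an unwanted $\log\log(1/\zeta)$ factor inside the exponent since $m$ depends on $\zeta$, whereas the lemma's $\exp((d/\alpha)^{O(1)})$ bound is $\zeta$-free.)

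The $\Omega(1/d)$ stability bound is asserted rather than derived. You correctly identify that the na\"ive Bun--Livni--Moran argument only yields $\Omega(1/(d\, 2^{2^d}))$ and that the list relaxation is what recovers $\Omega(1/d)$, but the ``frequency/pigeonhole argument carried out inductively along the Littlestone tree'' and the claim that the output distribution is ``supported on a thin set of width $O(d)$'' do not constitute a proof, and the latter is not true of your construction as written: the support of $\{h_{M(S)} : S \sim \calD^m\}$ can be as large as $m^d$, and pigeonhole over the number of mistakes $k \in \{0,\dots,d\}$ gives a popular \emph{layer}, not a popular \emph{hypothesis}. The actual argument in \cite{ghazi2021sample,ghazi2021user} does not enumerate subsets of a single fixed sample; it works with SOA run on the sample extended by carefully chosen auxiliary sequences (effectively forcing the mistake count up to $d$), and the list is built from the outcomes of that extended process. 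That extra structure is precisely what makes the $\Omega(1/d)$ bound go through, and it is the technical core of the lemma. As written, your proposal identifies the right milestones but does not supply the mechanism that reaches them.
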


\begin{algorithm}[ht]
\caption{List-Global Stability $\implies$ TV Indistinguishability (Essentially Algorithm 1 in \cite{ghazi2021user})}
\label{algo:finite littlestone dimension implies pseudo-global}
\begin{algorithmic}[1]
\State \texttt{Input: Black-box access to list-globally stable learner $A$}
\State \texttt{Parameters: $\alpha, \beta, \rho, \eta, L$}
\State \texttt{Output: Classifier $h: \calX \rightarrow \{0,1\}$}
\State $\tau \gets 0.5 \eta$
\State $\gamma \gets \frac{10^6 \log(L/(\rho \tau))}{\tau}$
\State $k_1 \gets \frac{10^6 \log(L/(\rho \tau))}{\tau^2}$
\State $k_2 \gets \left\lceil \frac{10^6 \gamma^2\log(L/(\rho\tau))}{\rho^2} \right\rceil$
\State $m \gets (d\log(k_1/\beta)/\alpha)^{O(1)}$ \Comment{Number of samples to run list-globally stable learner with parameters $(\alpha,\beta/k_1)$ (\Cref{lem:list-globally-stable result})}
\For{$i \gets 1$ to $k_1$}
        \State Draw $S_i \sim \calD^m$, run $A$ on $S_i$ to get
        a set $H_i$
\EndFor
\State Let $H$ be the set of all $h \in \calH$ that appear in at least
$\tau \cdot k_1$ of the sets $H_1,\ldots,H_{k_1}$
\For{$j \gets 1$ to $k_2$}
        \State Draw $T_j \sim \calD^m$, run $A$ on $T_j$ to get
        a set $G_j$
\EndFor
\For{$h \in H$}
        \State Let $\hat{Q}_{H,G_1,\ldots,G_{k_2}}(h) = \frac{|\left\{ j \in [k_2]| h \in G_j\right\}|}{k_2}$
\EndFor
\State Let $\hat{\calP}_{H,G_1,\ldots,G_{k_2}}$ be the probability
distribution on $\calH$ defined by
\[
    \hat{\calP}_{H,G_1,\ldots,G_{k_2}}(h) = 
    \begin{cases}
  \frac{\exp(\gamma \hat{Q}_{H,G_1,\ldots,G_{k_2}}(h))}{\sum_{h' \in H}\exp(\gamma \hat{Q}_{H,G_1,\ldots,G_{k_2}}(h'))},  & h \in H, \\
  0, & \text{ otherwise.}
\end{cases}
\]
\State Output $h \sim \hat{\calP}_{H,G_1,\ldots,G_{k_2}}$
\end{algorithmic}
\end{algorithm}

\begin{proposition}[Adaptation from \cite{ghazi2021user}]
\label{lem:littlestone implies tv stable2}
Let $\calH \subseteq \{0,1\}^\calX$ {be a countable hypothesis class} with $\mathrm{Ldim}(\calH) = d < \infty$ and $\calX$ be an arbitrary domain. {
Then, for all $\alpha,\beta, \rho \in (0,1)^3$, there exists 
an $n$-sample $\rho$-$\TV$ indistinguishable algorithm (\Cref{algo:finite littlestone dimension implies pseudo-global}) that is
$(\alpha, 
\beta)$-accurate with respect to the data-generating distribution $\calD$,
where 
\[
    n = \poly(d,1/\alpha,1/\rho,\log(1/\beta))\,.
\]
}
\end{proposition}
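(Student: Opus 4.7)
The plan is to analyze the algorithm essentially verbatim from Algorithm~\ref{algo:finite littlestone dimension implies pseudo-global}, which itself adapts the construction of \cite{ghazi2021user}. First, I would invoke \Cref{lem:list-globally-stable result} to obtain the base learner $A$: on input $m = \poly(d,1/\alpha,\log(k_1/\beta))$ samples it outputs a list of size at most $L = \exp((d/\alpha)^{O(1)})$, and with probability at least $1-\beta/k_1$ every hypothesis in the list is $\alpha$-accurate; moreover there exists a distinguished $h^\star$ (depending only on $\calD$) with $\err_\calD(h^\star)\leq\alpha$ satisfying $Q(h^\star)\geq \eta = \Omega(1/d)$, where $Q(h) := \Pr_{S\sim\calD^m}[h \in A(S)]$. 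This is the only place where finite Littlestone dimension enters.

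Next, I would analyze the two sampling phases in turn. For the filtering phase, Chernoff/Hoeffding applied to the $k_1$ independent runs shows that, with probability $1-\rho/4$ (say), the set $H$ contains every $h$ with $Q(h) \geq 2\tau$ and contains no $h$ with $Q(h) \leq \tau/2$; in particular $h^\star \in H$ and $|H|\leq 2/\tau$. Call this high-probability set $H^\star$, which depends only on $\calD$. For the frequency-estimation phase, Hoeffding together with a union bound over $H$ (whose cardinality is at most $L$) shows that, with probability $1-\rho/4$, $|\hat Q(h) - Q(h)| \leq \rho/(4\gamma)$ for every $h \in H$, using $k_2 = \Theta(\gamma^2 \log(L/\rho)/\rho^2)$ samples. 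For accuracy, note that under these events the softmax weight on $h^\star$ is at least $\exp(\gamma \eta/2) / (|H|\exp(\gamma(\eta/2 + \rho/(2\gamma))))$; choosing $\gamma = \Theta(\log(L/\beta)/\tau)$ as in the algorithm makes the total mass on inaccurate hypotheses at most $\beta$, so the sample from $\hat\calP_{H,G_1,\dots,G_{k_2}}$ is $\alpha$-accurate except with probability $\beta$.

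For TV indistinguishability, I would define a data-independent prior $\calP_\calD$ as the softmax of $\gamma Q(\cdot)$ over $H^\star$, and show that with probability at least $1-\rho/2$ over an execution, the output distribution $\hat\calP$ is supported on $H^\star$ and satisfies $\dtv(\hat\calP,\calP_\calD) \leq \rho/4$. The support equality uses the filtering-phase event above; the TV bound uses Lipschitzness of the softmax: if two parameter vectors of length $|H^\star|$ differ pointwise by at most $\rho/(4\gamma)$, their softmaxes with temperature $\gamma$ are within $O(\rho)$ in TV distance (a one-line calculation from $|\log p - \log p'| \leq 2\gamma \cdot \rho/(4\gamma) = \rho/2$ elementwise, hence the densities are within a multiplicative $e^{\rho/2}$ factor). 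Combining, the expected TV distance to the fixed prior is at most $\rho/2 \cdot 1 + (1-\rho/2)\cdot \rho/4 \leq \rho$, which by \Cref{clm:two-sided TV stability iff one-sided TV stability} yields $2\rho$-TV indistinguishability (rescale $\rho$ by a constant).

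The main obstacle, as usual in this template, is balancing the four parameter budgets: (i) $\tau$ must be below $\eta$ but large enough so $|H|\leq 2/\tau = O(d)$; (ii) $\gamma$ must be large so the softmax concentrates on accurate hypotheses, but this forces (iii) $k_2 = \poly(\gamma,1/\rho)$ fresh samples for the Hoeffding bound; and (iv) the estimation error $\rho/\gamma$ must be small enough that softmax Lipschitzness still yields TV distance $O(\rho)$. The choices in Algorithm~\ref{algo:finite littlestone dimension implies pseudo-global} make this work, and the total sample complexity is $(k_1+k_2)\cdot m = \poly(d,1/\alpha,1/\rho,\log(1/\beta))$ as claimed. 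The countability of $\calH$ is used only to guarantee that uniform convergence (needed inside \Cref{lem:list-globally-stable result}) holds on all the derived finite-VC subclasses without extra measurability hypotheses.
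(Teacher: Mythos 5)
Your proposal follows the same high-level route as the paper: invoke list-global stability of Littlestone classes (\Cref{lem:list-globally-stable result}), build a candidate set $H$ of empirically heavy hitters from $k_1$ runs, estimate their hitting frequencies with $k_2$ further runs, and sample from the resulting softmax, comparing to a data-independent softmax prior. However, the TV-indistinguishability step has a genuine gap. You claim that, on the filtering-phase good event, ``the output distribution $\hat\calP$ is supported on $H^\star$'' for a \emph{fixed} set $H^\star$ depending only on $\calD$, and then invoke Lipschitzness of the softmax to compare $\hat\calP$ to $\calP_\calD$ over a common index set. But that is not what the Hoeffding argument gives: the filtering event only sandwiches $H$ between two deterministic level sets, $\{h : Q(h) \geq 1.1\tau\} \subseteq H \subseteq \{h : Q(h) \geq 0.9\tau\}$, so $H$ (hence the support of $\hat\calP$) still fluctuates for hypotheses in the gray zone. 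Once supports differ, the ``parameter vectors of equal length differing pointwise by $\rho/(4\gamma)$'' calculation no longer applies, since excluding a coordinate corresponds to an infinite perturbation in log-space. The paper's proof handles precisely this where you skip it: the adaptation of Lemma 23 of \cite{ghazi2021user} introduces $\calP \propto \exp(\gamma Q(\cdot))\,\mathbbm{1}\{\cdot \in H_{\geq 0.9\tau}\}$ and $\calP_H \propto \exp(\gamma Q(\cdot))\,\mathbbm{1}\{\cdot \in H\}$ and bounds $\dtv(\calP, \calP_H) \leq \rho/2$ on the sandwich event, by showing that the softmax mass placed on $H_{\geq 0.9\tau}\setminus H$ is negligible (such $h$ have $Q(h) < 1.1\tau$ and are exponentially suppressed relative to $h^\star$ with $Q(h^\star) \geq \eta = 2\tau$ once $\gamma$ is chosen as in the algorithm). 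You need an analogue of this lemma before the Lipschitz step is legitimate.

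Two secondary points. The bound $|H| \leq 2/\tau = O(d)$ is incorrect: each call to $A$ returns a \emph{list} of up to $L = \exp((d/\alpha)^{O(1)})$ hypotheses, so $\sum_h Q(h)$ can be as large as $L$, giving only $|H| = O(L/\tau)$; the sample complexity remains $\poly(d,\ldots)$ because only $\log L$ enters $\gamma$ and $k_2$. Also, your accuracy argument via ``softmax concentrates on accurate hypotheses'' is more work than is needed: \Cref{lem:list-globally-stable result} already guarantees that with probability $1-\zeta$ \emph{every} hypothesis in the output list satisfies $\err_\calD(h') \leq 2\alpha$, so a union bound over the $k_1$ calls with $\zeta = \beta/k_1$ immediately gives that every element of $H$ (hence every element in the support of $\hat\calP$) is accurate, without any softmax-concentration analysis.
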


\begin{proof}
First, \Cref{lem:list-globally-stable result} guarantees the existence of a list-globally stable learner $A$ for $\calH$.
We will borrow some notation from \cite{ghazi2021user}.
We remark that the proof is a simple adaptation of the proof of Theorem 20 in \cite{ghazi2021user} but we include it for completeness. We will use \Cref{algo:finite littlestone dimension implies pseudo-global} essentially appearing in \cite{ghazi2021user} (this algorithm is the same as Algorithm 1 in \cite{ghazi2021user}; their algorithm has an additional last step which performs correlated sampling). We will show that \Cref{algo:finite littlestone dimension implies pseudo-global} satisfies the conclusion of \Cref{lem:littlestone implies tv stable2} and is the desired TV indistinguishable learner.

\paragraph{Sample Complexity.} The number of samples used by \Cref{algo:finite littlestone dimension implies pseudo-global} is $m \cdot (k_1 + k_2)$, where $m$ is the number of samples used for the black-box list-globally stable learner $A$. In particular, we have that
\[
n(\alpha, \beta, \rho) = \poly(d, 1/\alpha, 1/\rho, \log(1/\beta))\,.
\]

\paragraph{Accuracy Analysis.} By the guarantees of algorithm $A$, we get that the output of $A$ consists only of hypotheses with distributional error at most $\alpha$ with probability $1-\beta/k_1$, a union bound implies that this holds for all hypotheses in $H$ with probability $1-\beta$. This implies the accuracy guarantee for \Cref{algo:finite littlestone dimension implies pseudo-global}.

\paragraph{TV Indistinguishability Analysis.}
Let us set $Q(h) = \Pr_{S \sim \calD^n}[h \in A(S)]$, let $H_{\geq 0.9 \tau} = \{h \in 2^\calX : Q(h) \geq 0.9 \tau \}$ and
$H_{\geq 1.1\tau} = \{h \in 2^\calX : Q(h) \geq 1.1 \tau \}$.
First, \Cref{algo:finite littlestone dimension implies pseudo-global} creates the set $H$ that contains all $h \in \calH$ that appear in at least
$\tau \cdot k_1$ of the realizations $A(S_1),\ldots,A(S_{k_1})$.

The first lemma controls the probability that $H$
contains hypotheses that are ''heavy hitters'' for $A$ and does not contain hypotheses $h$ whose $Q(h)$ is small. 

\begin{lemma}
[Adaptation of Lemma 22 in \cite{ghazi2021user}]
Let $\calE$ denote the good event that $H_{1.1\tau} \subseteq H \subseteq H_{0.9\tau}$. Then $\Pr[\calE] \geq 1- \rho$, where the randomness is over the datasets $S_1,...,S_{k_1}$ and $A$.
\end{lemma}
\begin{proof}
We will first show that $\Pr[H_{1.1\tau} \subseteq H] \geq 1-\rho/2$. Since $A$ outputs a list of size at most $L$, $H_{1.1\tau} \leq \frac{L}{1.1 \tau} \leq L/\tau$.
For any $f \in H_{1.1\tau}$, we have that $1\{f \in H\}$ is an i.i.d. Bernoulli random variable with success probability $Q(f) \geq 1.1\tau$. Hoeffding's inequality implies that
\[
\Pr[f \notin H] \leq \exp(-0.02 \tau^2 k_1) \leq 0.01\rho\tau/L\,.
\]
A union bound over all hypotheses in $H_{1.1\tau}$, implies the desired inequality. The other direction follows by a similar argument and we refer to \cite{ghazi2021user} for the complete argument.
\end{proof}

The next step is to define the distribution $\calP$ with density $\calP(h) \propto \exp(\gamma Q(h)) 1\{h \in H_{\geq 0.9 \tau}\}$. We can also define $\calP_H(h) \propto \exp(\gamma Q((h)) 1\{h \in H\}$, where $\gamma$ is as in \Cref{algo:finite littlestone dimension implies pseudo-global}. The next lemma relates the two distributions.
\begin{lemma}
[Adaptation of Lemma 23 in \cite{ghazi2021user}]
Under the event $\calE$, it holds that $\dtv(\calP, \calP_H) \leq \rho/2$.
\end{lemma}
\begin{proof}
 The proof is exactly the same as the one of Lemma 23 in \cite{ghazi2021user} with the single modification that we pick $\gamma$ to be of different value, indicated by \Cref{algo:finite littlestone dimension implies pseudo-global}.
\end{proof}

Given a list-globally stable learner $A$ (which exists thanks to \Cref{lem:list-globally-stable result}), we can construct the distribution over hypotheses $\wh{\calP}_{H, G_1,...,G_{k_2}}$ appearing in \Cref{algo:finite littlestone dimension implies pseudo-global}. 
We can then relate the empirical distribution $\wh{\calP}_{H,G_1,...,G_{k2}}$ with its population analogue $\calP_H$. 
\begin{lemma}
[Adaptation of Lemma 24 in \cite{ghazi2021user}]
It holds that
$\E[\dtv(\calP_H, \wh{\calP}_{H,G_1,...,G_{k2}})] \leq \rho/2$, where the expectation is over the sets $T_1,...,T_{k_2}$ and the randomness of $A$.
\end{lemma}
\begin{proof}
 The proof is exactly the same as the one of Lemma 24 in \cite{ghazi2021user} with the single modification that we pick $k_2$ to be of different value, indicated by \Cref{algo:finite littlestone dimension implies pseudo-global}.
\end{proof}

Combining the above lemmas (as in \cite{ghazi2021user}), we immediately get that $\E[\dtv(\calP, \wh{\calP}_{H, G_1,...,G_{k_2}})] \leq \rho$, where the expectation is over all the sets $S_1,...,S_{k_1}$ and $T_1,...,T_{k_2}$ given as input to the learner $A$ and $A$'s internal randomness. Note that $\calP$ is independent of the data and depends only on $A$. Both $\calP$ and $\wh{\calP}_{H, G_1,...,G_{k_2}}$ are supported on a finite domain. 
Note that \Cref{algo:finite littlestone dimension implies pseudo-global} that, given a training set $S$, outputs the distribution over hypotheses $\wh{\calP}_{H, G_1,...,G_{k_2}}$ (obtained by \Cref{algo:finite littlestone dimension implies pseudo-global}) satisfies TV indistinguishiability with parameter $2 \rho$ using triangle inequality. Hence,
two independent runs of \Cref{algo:finite littlestone dimension implies pseudo-global} will be $2\rho$-close in total variation in expectation and the algorithm is TV indistinguishable, as promised.


\end{proof}

\subsection{The Proof of \Cref{prop:stab to dp}}
\label{sec:tv to dp}
We are now ready to show that TV indistinguishability implies approximate DP. We first start by
showing that a non-trivial TV indistinguishable learner for a class $\calH$
gives rise to a non-trivial
DP learner for $\calH.$ The algorithm is described in \Cref{algo:TV stable to dp}.
The result then follows from the fact that classes
which admit non-trivial DP learners have finite Littlestone dimension (\Cref{thm:dp learnable implies finite Littlestone}).

\begin{algorithm}[ht]
\caption{From TV Indistinguishability to Differential Privacy}
\label{algo:TV stable to dp}
\begin{algorithmic}[1]
\State \texttt{Input: Black-box access to $(\alpha,\beta)$-accurate 
$\rho$-TV Indistinguishable Learner $A$, Sample $S$}
\State \texttt{Parameters: $\alpha', \beta', \varepsilon, \delta$}
\State \texttt{Output: Classifier $h:\calX \rightarrow \{0,1\}$}
 \State $k \gets O_{\beta,\rho}\left(\frac{\log(\log(1/\beta')/(\beta'\delta))}{\varepsilon}\right), k' \gets O_{\beta,\rho}(\log(1/\beta'))$
\State Break $S$ into disjoint $\{S^j_i\}_{i\in[k],j\in[k']}$ with $|S_i^j| = n, \forall i \in [k], j \in [k']$
 \State $\calP \gets $ data-independent reference probability measure from \Cref{clm:countable X reference measure} \label{step:choice of refernce measure for DP}
\State $(X_1^j, \ldots, X_k^j) \gets \Pi_{\calR}(A(S_1^j), \ldots, A(S_k^j)), \forall j \in [k']$ using the Poisson point process $\calR$ with intensity $\calP \times \text{Leb} \times \text{Leb}$ 
\Comment{$A(S_i^j)$ is a distribution over classifiers, the coupling $\Pi_\calR$ is described in \Cref{thm:pairwise opt coupling protocol}.}
\State Compute list $L^j \gets \texttt{StableHist}(X_1^j, \ldots, X_k^j), $ with $\eta = O_{\beta,\rho}(1/\log(1/\beta')),$ correctness $\beta'/3,$ privacy $(\varepsilon/2,\delta), \forall j \in [k']$ \Comment{\Cref{lem:stable histograms}}
\State $\widetilde{L}^j \gets$ Remove elements from $L^j$ that appear less than $\eta/2$ times, $\forall j \in [k']$ 
\State Output $\texttt{GenPrivLearner}(\widetilde{L}^1,\ldots,\widetilde{L}^{k'})$ with accuracy $\left(\alpha'/2,\beta'/3\right),$ privacy $(\varepsilon/2,0)$ \Comment{\Cref{lem:agnostic learner}}
\end{algorithmic}
\end{algorithm}

Before we state the result formally, let us first
provide some intuition behind the approach. {On a high level it resembles the approaches
of \cite{bun2020equivalence, ghazi2021user, bun2023stabilityIsStablest} to show that (pseudo-)global stability implies differential privacy.}
We consider $k'$ different batches of $k$ datasets of size $n$.
For each such batch, our goal is to couple the $k$ different executions
of the algorithm on an input of size $n$, so that most of these outputs
are, with high probability, the same.
One first approach would be to use a random variable as a ``pivot'' element in each batch:
we first draw $A(S_1^1)$ according to its distribution and the remaining $\{A(S_i^1)\}_{i \in [k]\setminus\{1\}}$
from their optimal coupling with $A(S_1^1),$ given its realized value. Even though this coupling 
has the property that, in expectation, most of the outputs will be the same, 
it is not robust at all. If the adversary changes a point of $S_1^1,$ then the values of all the outputs 
will change! This is not privacy preserving. For this reason, we use the coupling that is 
described in \Cref{thm:pairwise opt coupling protocol}. We use the fact 
that $\calX$ is countable to design a reference probability measure $\calP$ that
is independent of the data. This is the key step that leads to
privacy-preservation.
Then, we can argue that if we follow this approach
for multiple batches, there will be a classifier whose frequency and performance are non-trivial.
The next step is to feed all these hypotheses into the Stable Histograms algorithm
(cf. \Cref{lem:stable histograms}), which will output a list of frequent hypotheses
that includes the non-trivial one we mentioned above.
Finally, we feed these hypotheses into the Generic Private Learner (cf. \Cref{lem:agnostic learner})
and we get the desired result.

\begin{proposition}
\label{lem:TV stable implies dp-learner}
Let $\calH \subseteq \{0,1\}^\calX$ where $\calX$ is countable.
Assume that $\calH$ is learnable by an $(\alpha, \beta)$-accurate $\rho$-$\TV$ indistinguishable learner $A$ using $n_{\TV}$ samples, where $\rho \in (0,1), \alpha \in (0,1/2), \beta \in (0,(1-\rho)/(1+\rho))$.
Then, for any $(\alpha',\beta',\varepsilon, \delta) \in (0,1)^4,$ it is also learnable by an $\left(\alpha + \alpha', \beta'\right)$-accurate $(\varepsilon, \delta)$-differentially private learner and the sample complexity is
\[
n_{\mathrm{DP}} = O_{\beta,\rho}\left(\frac{\log(1/\beta')\cdot\log(\log(1/\beta')/(\beta'\delta))}{\varepsilon} +  \log(1/\eta\beta') \cdot \max\left\{\frac{1}{\varepsilon\alpha'},\frac{1}{\alpha'^2}\right\}\right) \cdot n_{\TV} \,.
\]
\end{proposition}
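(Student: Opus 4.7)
The proof will follow the blueprint of \Cref{algo:TV stable to dp}. The plan is to (i) use the data-independent Poisson coupling from \Cref{lem:TV stability to replicability}, instantiated with the reference measure $\calP$ built in \Cref{clm:countable X reference measure} (this is where countability of $\calX$ is used), to turn multiple executions of $A$ on disjoint sub-samples into runs that \emph{agree on a single heavy hypothesis} with constant probability; (ii) extract this heavy hypothesis via Stable Histograms (\Cref{lem:stable histograms}); and (iii) privately select an accurate hypothesis from the resulting short list using the Generic Private Learner (\Cref{lem:agnostic learner}).

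First I would partition $S$ into $k\cdot k'$ disjoint blocks $\{S_i^j\}$ of size $n_{\TV}$ and, for each $j\in[k']$, draw one realization $r_j$ of the Poisson point process with intensity $\calP\times\mathrm{Leb}\times\mathrm{Leb}$. Letting $A'$ be the replicable rule from \Cref{lem:TV stability to replicability} and $X_i^j := A'(S_i^j,r_j)$, the construction delivers: (i) marginally, $X_i^j\sim A(S_i^j)$; (ii) $\Pr_{S,S',r_j}[A'(S,r_j)\neq A'(S',r_j)]\leq \rho' := 2\rho/(1+\rho)$; and (iii) since $\calP$ and $r_j$ do not depend on $S$ and the blocks are disjoint, changing one example of $S$ alters at most one $X_i^j$, which is exactly the one-sensitivity needed to make Stable Histograms applied per batch be $(\varepsilon/2,\delta)$-DP by parallel composition. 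Writing $p_h(r) := \Pr_{S\sim\calD^{n_{\TV}}}[A'(S,r)=h]$, a single union bound between replicability and $(\alpha,\beta)$-accuracy would yield
\[
    \E_{r_j}\!\Bigl[\textstyle\sum_{h:\err(h)\leq\alpha} p_h(r_j)^2\Bigr]
    \;=\; \Pr\bigl[A'(S,r_j)=A'(S',r_j)\text{ and }A'(S,r_j)\text{ is accurate}\bigr]
    \;\geq\; 1-\rho'-\beta.
\]
Using $\sum_{h\text{ accurate}} p_h(r)^2 \leq M_{\mathrm{acc}}(r) := \max_{h\text{ accurate}} p_h(r)$ and a reverse Markov argument, I would then extract constants $\eta_0,c_0=\Theta(1-\rho'-\beta)$, both positive \emph{precisely} under the hypothesis $\beta<(1-\rho)/(1+\rho)=1-\rho'$, such that with probability at least $c_0$ over $r_j$ some $\alpha$-accurate $h^*_j$ satisfies $p_{h^*_j}(r_j)\geq \eta_0$.

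Given this structural fact, I would run \texttt{StableHist} on each batch with threshold $\eta\asymp\eta_0$, confidence $\beta'/(3k')$, and privacy $(\varepsilon/2,\delta/k')$; by \Cref{lem:stable histograms}, $k=O_{\beta,\rho}(\log(1/(\eta_0\beta'\delta))/(\eta_0\varepsilon))$ suffices to include $h^*_j$ in $\widetilde L^j$ whenever the $c_0$-event on $r_j$ holds. Choosing $k'=O(\log(1/\beta')/c_0)$ boosts the probability that some $\widetilde L^j$ contains an $\alpha$-accurate hypothesis to at least $1-\beta'/3$. Because the $k'$ invocations act on disjoint parts of $S$, parallel composition keeps the entire histogram stage $(\varepsilon/2,\delta)$-DP. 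Finally, I would feed the concatenated list $\widetilde L^1\cup\cdots\cup\widetilde L^{k'}$ (of size $O(k'/\eta_0)$) into $\texttt{GenPrivLearner}$ with accuracy $\alpha'/2$, confidence $\beta'/3$, and privacy $(\varepsilon/2,0)$: by \Cref{lem:agnostic learner} the output has error $\leq\alpha+\alpha'$ with probability at least $1-\beta'$, and basic composition over the two stages gives the promised $(\varepsilon,\delta)$-DP guarantee. Adding the sample costs of \Cref{lem:stable histograms} and \Cref{lem:agnostic learner} would reproduce the stated $n_{\mathrm{DP}}$.

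The main obstacle will be the second step: verifying that the tight threshold $\beta<(1-\rho)/(1+\rho)$ suffices for an $\alpha$-accurate heavy hitter to exist with constant probability. A careless union bound over ``accuracy of $A'(S,r)$ \emph{and} of $A'(S',r)$ \emph{and} agreement'' loses an extra $\beta$ and yields only $\beta<(1-\rho')/2$; the observation that avoids this loss is that, once $A'(S,r)=A'(S',r)$ holds, a single accuracy event on one side already forces accuracy on the other, which combined with the identity $\Pr[\text{agree and accurate}]=\E_r[\sum_{h\text{ accurate}} p_h(r)^2]$ and the majorization $\sum p_h(r)^2\leq \max p_h(r)$ gives the sharp threshold. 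Everything else (Stable Histograms, $\texttt{GenPrivLearner}$, DP composition) is standard.
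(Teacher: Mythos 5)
Your proposal follows the same blueprint as the paper's proof of \Cref{lem:TV stable implies dp-learner}: partition the sample into $k'$ batches of $k$ blocks, couple the $k$ runs within each batch via the data-independent Poisson point process with intensity $\calP\times\mathrm{Leb}\times\mathrm{Leb}$ (using \Cref{clm:countable X reference measure} for $\calP$), apply Stable Histograms, and then the Generic Private Learner; the privacy argument (data-independent coupling plus disjoint blocks yields unit sensitivity) is also the same as the paper's. Where you genuinely diverge is the argument for the existence of an accurate heavy hitter. You compute $\Pr[\text{agree}\wedge\text{accurate}]=\E_r\bigl[\sum_{h\text{ accurate}}p_h(r)^2\bigr]\geq 1-\rho'-\beta$, majorize $\sum p_h^2\leq \max p_h$, and apply reverse Markov over $r$, obtaining that with probability $c_0=\Theta(1-\rho'-\beta)$ over $r_j$ some accurate $h^*_j$ has $p_{h^*_j}(r_j)\geq\eta_0=\Theta(1-\rho'-\beta)$. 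The paper instead fixes a pivot $X_1^j$, applies Markov to the \emph{realized} count $\sum_{i}\mathbbm{1}[X_i^j\neq X_1^j]$ (whose expectation is at most $\rho' k$), and intersects with the $1-\beta$ accuracy event for the pivot, optimizing $\nu$ to get $q=\frac{1-\beta-\rho'}{1-\beta+\rho'}$ and $p=\frac{1-\beta-\rho'}{2}$. Both routes reach the same threshold $\beta<(1-\rho)/(1+\rho)$, and your observation about avoiding the careless double union bound (which would give the weaker $\beta<(1-\rho')/2$) is exactly the right insight; your version is arguably a cleaner conceptual restatement (it is essentially a refinement of the paper's \Cref{clm:repl implies pseudo-global stability} that keeps track of accuracy simultaneously), while the paper's pivot argument has the advantage of operating directly on the realized sample.

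There is one gap you should fill. Your structural fact gives $p_{h^*_j}(r_j)\geq\eta_0$, which is a bound on the \emph{population} frequency of $h^*_j$ under $A'(\cdot,r_j)$, whereas Stable Histograms is fed the \emph{empirical} multiset $\{X_i^j\}_{i\in[k]}$ and its guarantee refers to $\mathrm{freq}_{(X_1^j,\dots,X_k^j)}(h^*_j)$. You wrote that the $c_0$-event on $r_j$ suffices for $h^*_j\in\widetilde L^j$, but you still need a Chernoff step showing that, conditioned on $r_j$, the empirical frequency of $h^*_j$ concentrates to at least $\eta_0/2$ with failure probability folded into $\beta'/(3k')$; this imposes $k\gtrsim\log(k'/\beta')/\eta_0$, which your chosen $k$ already dominates for $\varepsilon\leq 1$, so the fix is routine, but it is currently missing. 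The paper sidesteps this entirely because its Markov argument is about the realized values $X_i^j$ themselves, not a per-$r$ population frequency. The remaining differences (per-batch versus pooled Stable Histograms, $\delta/k'$ versus $\delta$ under parallel composition) only affect constants inside $O_{\beta,\rho}(\cdot)$ and are immaterial.
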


\begin{proof}
Let $A$ be the TV indistinguishable algorithm.
We need to argue that the output of \Cref{algo:TV stable to dp} is 
$\left(\alpha + \alpha', \beta'\right)$-accurate and $(\varepsilon,\delta)$-DP. We start with the former property.
\paragraph{Performance Guarantee.}
    Let us consider the following experiment. We 
    draw $k$ samples, each one of size $n = n_{\TV}.$ Let $S^1_1,\ldots,S^1_k$ be these samples and $A(S^1_1),\ldots,A(S^1_k)$ be the distributions of the outputs of the algorithm on these samples. We denote by $X^1_i$ the random variable that follows the distribution $A(S^1_i).$ Let us consider a coupling of this collection of variables.
    Then, we have that
    \begin{align*}
        \E_{\text{coupling}}\left[ \min_{j \in [k]} \sum_{i=1}^k \mathbbm{1}_{X^1_i \neq X^1_j}\right] &\leq \E_{\text{coupling}} \left[\sum_{i=1}^k \mathbbm{1}_{X^1_i \neq X^1_1} \right]\\
        &= \sum_{i=1}^k \E_{\text{coupling}} \left[\mathbbm{1}_{X^1_i \neq X^1_1} \right]\\
        &= \sum_{i=1}^k \Pr_{\text{coupling}}[X^1_i \neq X^1_1 ] \,.    
    \end{align*}

    Note that the above hold for any coupling between the random variables $(X^1_i)_{i \in [n]}.$
    Let us  fix the DP parameters $(\eps, \delta)$.
    We will use the coupling protocol of \Cref{thm:pairwise opt coupling protocol}
    with $\Omega = \{0,1\}^\calX$, $\calP$ the probability measure 
    described in \Cref{clm:countable X reference measure}, and $\calR$ the Poisson point process with 
    intensity $\calP \times \mathrm{Leb} \times \mathrm{Leb}$.
    We remark that this choice of $\calP$ satisfies two properties: the collection $A(S^1_i)$ is absolutely continuous with respect to $\calP$  and $\calP$ is data-independent, so it will help us establish the differential privacy guarantees. 
    The guarantees of the coupling of \Cref{thm:pairwise opt coupling protocol} imply that
    \[
        \Pr_{\calR}[X_i^1 \neq X_1^1] \leq \frac{2\dtv(X^1_i,X^1_1)}{1+\dtv(X^1_i,X^1_1)}\,,
    \]
    for all $i \in [k].$ Thus, we have that
    \begin{align*}
        \E_{\calR}\left[ \min_{j \in [k]} \sum_{i=1}^k \mathbbm{1}_{X^1_i \neq X^1_j}\right] \leq \sum_{i=1}^k \frac{2\dtv(X^1_i,X^1_1)}{1+\dtv(X^1_i,X^1_1)}\,.
    \end{align*}
    By taking the expectation over the random draws of the samples $S_1,\ldots,S_k,$ we see that
    \begin{align*}
          \E_{S^1_1,\ldots,S^1_k,\calR}\left[ \sum_{i=1}^k \mathbbm{1}_{X^1_i \neq X^1_1}\right] &\leq \E_{S^1_1,\ldots,S^1_k} \left[ \sum_{i=1}^k \frac{2\dtv(X^1_i,X^1_1)}{1+\dtv(X^1_i,X^1_1)}\right] \\
          &= \sum_{i=1}^k \E_{S^1_1,\ldots,S^1_k} \left[\frac{2\dtv(X^1_i,X^1_1)}{1+\dtv(X^1_i,X^1_1)}\right] \\
          &\leq \sum_{i=1}^k\frac{2\E_{S^1_1,\ldots,S^1_k}[\dtv(X^1_i,X^1_1)]}{1+\E_{S^1_1,\ldots,S^1_k}[\dtv(X^1_i,X^1_1)]} \\
          &\leq \frac{2\rho}{1+\rho}\cdot k,
    \end{align*}
    where the second to last step follows by Jensen's inequality since the function $f(x) = 2x/(1+x)$
    is concave in $(0,1)$ and the last step because $\E_{S^1_1,\ldots,S^1_k}[\dtv(X^1_i,X^1_1)] \leq
\rho$ and $f$ is increasing in $(0,1)$.
    To make the notation cleaner, we let $\rho' = \frac{2\rho}{1+\rho}.$
    Notice that if $\rho < 1$ then $\rho' < 1.$
    Now using Markov's inequality we get that
    \[
        \Pr\left[ \sum_{i=1}^k \mathbbm{1}_{X^1_i \neq X^1_1} \geq \nu k \rho'\right] \leq \frac{1}{\nu} \implies  \Pr\left[ \sum_{i=1}^k \mathbbm{1}_{X^1_i = X^1_1} \geq (1- \nu \rho')k \right] \geq 1 - \frac{1}{\nu} \,,
    \]
    where the probability is with respect to the randomness of the samples
    and the coupling. 

    We denote by $\calE^1_\nu = \left\{\sum_{i=1}^k \mathbbm{1}_{X^1_i = X^1_1} \geq (1-\nu\rho')k  \right\}$ the event that a $(1-\nu\rho')$-fraction 
    of the outputs has the same value. 
    Let us now focus on the number of classifiers in a single experiment that are correct, i.e., their error rate is at most $\alpha < 1/2.$ Let $Y^1_i = \mathbbm{1}_{\err(X^1_i) \geq 1/2}.$ Notice
    that because of the coupling we have used, $\{Y^1_i\}_{i=1}^k$ are not independent, so we cannot simply apply a Chernoff bound to get concentration. Let $\calE_\beta^1$ be the event that the classifier $X^1_1$ is correct. We know that $\Pr[\calE_\beta^1] \geq 1-\beta,$ where
    the probability is taken with respect to the random draws of the input
    and the randomness of the algorithm. Now notice that under the
    event $\calE^1_\nu \cap \calE^1_\beta$ at least $(1-\nu\rho')k$ classifiers
    are correct and have the same output. By a union bound we see
    that 
    \[
        \Pr[\calE^1_\nu \cap \calE_\beta^1] \geq 1 - \beta - \frac{1}{\nu} \,.
    \]
    We now pick $\nu$ so that 
    \[
         1 - \beta - \frac{1}{\nu} = \frac{1-\beta-\rho'}{2} > 0 \implies \nu = \frac{2}{\rho' -\beta+1}\,.
    \]
    Thus, under $\calE^1_\nu \cap \calE^1_\beta$ there are $\frac{1-\beta-\rho'}{1-\beta + \rho'} k$ classifiers that are equal to one another and 
    are correct. We let $q = \frac{1-\beta-\rho'}{1-\beta + \rho'}$. As we discussed, the probability of this event is at least $\frac{1-\beta-\rho'}{2} = p,$ so if we execute it $k'$ times
    we have that with probability at least $1 - e^{-pk'}$ it will occur at least once, i.e., $\Pr\left[\cup_{j \in [k']}\{\calE^j_\nu \cap \calE_\beta^j\}\right] \geq 1 - e^{-pk'}$.
    We 
    pick $k' = 1/p \cdot \log(3/\beta').$ Thus, with probability
    at least $1-\beta'/3$ there is a correct classifier that appears at
    least $qk$ times.  We condition on this event for the rest of proof and we let $S_i^j, X_i^j \sim A(S_i^j)$ be the $i$-th sample,
    classifier of the $j$-th batch, respectively.

    The next step is to feed these classifiers into the Stable 
    Histograms algorithm (cf. \Cref{lem:stable histograms}). We have shown that there 
    exists a good classifier whose frequency is at least $\eta = \frac{q k}{k\cdot k'} = \frac{q}{k'}.$ Thus, our goal
    is to detect hypotheses with frequency at least $\eta/2$. We 
    pick the correctness parameter of the algorithm to be $\beta'/3$
    and the DP parameters to be $(\varepsilon/2,\delta)$. In total, we need
    \[
         n' = O\left(\frac{\log(1/(\eta\beta'\delta))}{\eta\varepsilon}\right) 
          = O\left(\frac{\log(1/\beta')\cdot\log\left(\log(1/\beta')/(qp\beta'\delta)\right)}{qp\varepsilon}\right) \,,
    \]
    hypotheses in our list. Since $n' = k \cdot k'$ it suffices to 
    pick 
    \[
        k = O\left(\frac{\log\left(\log(1/\beta')/(qp\beta'\delta)\right)}{q\varepsilon}\right) \,.
    \]
    Hence, with probability at least $1-\beta'/3$, the output of the 
    algorithm will be a list $L$ that
    contains all the hypotheses with frequency at least $\eta/2$
    along with estimates $a_x$ such that $|a_x - \text{freq}_S(x)| \leq 
    \eta/2.$ Let $x^*$ be the correct and frequent hypothesis whose
    existence we have established. We know that $a_{x^*} \geq \eta/2.$
    Since this algorithm is DP, we can drop from its output all the 
    elements $x \in L$ for which $a_x < \eta/2$ without
    affecting the privacy guarantees. Thus, we end up 
    with a new list $L'$ whose size is $O(1/\eta).$

    The last step of the algorithm is to feed this list into the 
    Generic Private Learner (cf. \Cref{lem:agnostic learner}) with 
    privacy parameters $(\varepsilon/2,0)$ and accuracy parameters $(\alpha'/2,\beta'/3).$ The total number of samples we need for this step is
    \[
        n'' = O\left(\log(1/\eta\beta') \cdot \max\left\{\frac{1}{\varepsilon\alpha'},\frac{1}{\alpha'^2}\right\}\right) \,.
    \]
    Since there is an element in the list whose error is at most $\alpha,$ 
    the guarantees of the algorithm give us that with probability at least $1-\beta'/3$ the output has error at most $\alpha + \alpha'.$

    Thus, by taking a union bound over the correctness of the three steps
    we described, we see that with probability $1-\beta'$ the algorithm outputs a hypothesis whose error is at most $\alpha+\alpha'.$ We now argue 
    that the algorithm is $(\varepsilon,\delta)-$DP.
    \paragraph{Privacy Guarantee.} 

    First we need to show that the coupling step is differentially private. 
    This is a direct consequence of the coupling protocol that we have provided (cf. \Cref{thm:pairwise opt coupling protocol}) and the fact
    that the reference probability measure is data-independent. If the adversary changes an element in $S_i^j, i \in [k], j \in [k'],$ then
    the coupling is robust, in the sense that if we fix the internal
    randomness, then at most one of the elements that the coupling 
    outputs will change. The result for the privacy preservation of this step follows
    by integrating over the internal randomness. 

    For the remaining two steps, i.e., the Stable Histograms 
    and the Exponential Mechanism the privacy guarantee follows
    from their definition. Using the privacy composition, 
    we get that overall our algorithm is $(\varepsilon/2,\delta)+(\varepsilon/2,0)=
    (\varepsilon,\delta)$-differentially private.

\end{proof}

\begin{corollary}\label{cor:TV stable implies finite Littlestone dimension}
Let $\calH \subseteq \{0,1\}^\calX$, where $\calX$ is a countable domain. If $\calH$ is learnable by a $(\alpha, \beta)$-accurate $\rho$-$\TV$ indistinguishable learner using $n_{\TV}$ samples, where $\rho \in (0,1), \alpha \in (0,1/2), \beta \in (0,(1-\rho)/(1+\rho))$, then $\mathrm{Ldim}(\calH) < \infty.$
\end{corollary}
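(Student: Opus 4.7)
The plan is to chain the two results already established in the excerpt: \Cref{lem:TV stable implies dp-learner}, which transforms a TV indistinguishable learner over a countable $\calX$ into an approximate DP learner while preserving the (possibly weak) accuracy up to a small additive slack, and \Cref{thm:dp learnable implies finite Littlestone}, which states that any weakly accurate $(0.1, 1/(n^2\log n))$-DP learner for $\calH$ forces $\mathrm{Ldim}(\calH) < \infty$. The whole proof is just a parameter-matching exercise between these two black boxes.

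Concretely, starting from the hypothesized $(\alpha,\beta)$-accurate $\rho$-TV indistinguishable learner $A$ (with $\alpha < 1/2$ and $\beta < (1-\rho)/(1+\rho)$, so that the hypotheses of \Cref{lem:TV stable implies dp-learner} are met), I would set $\gamma := (1/2 - \alpha)/2 > 0$ and target the weak accuracy parameters $\alpha' = \gamma$ and $\beta' = 1/2 - \gamma$. Fixing the privacy parameter $\varepsilon = 0.1$, an application of \Cref{lem:TV stable implies dp-learner} produces, for every $\delta \in (0,1)$, a $(1/2 - \gamma, 1/2 - \gamma)$-accurate $(0.1, \delta)$-differentially private learner $A'$ for $\calH$ with sample complexity $n_{\mathrm{DP}}(\delta)$ that depends on $\delta$ only through factors of the form $\log(1/\delta)$ and $\log\log(1/\delta)$; that is, $n_{\mathrm{DP}}(\delta) \le C \log(1/\delta) + C'$ for constants $C, C'$ depending on $\alpha, \beta, \rho$ but not on $\delta$.

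The only remaining step is to pick $\delta$ so that $A'$ falls under the hypothesis of \Cref{thm:dp learnable implies finite Littlestone}, namely $\delta \le 1/\bigl(n_{\mathrm{DP}}(\delta)^2 \log n_{\mathrm{DP}}(\delta)\bigr)$. This is the one (mild) obstacle: a priori the constraint is self-referential, since $n_{\mathrm{DP}}$ itself depends on $\delta$. However, because the dependence is only polylogarithmic, the inequality reduces to $\delta \cdot \mathrm{polylog}(1/\delta) \le 1$, which is satisfied for all sufficiently small $\delta > 0$; any such $\delta^\star$ works. Once $\delta^\star$ is fixed, \Cref{thm:dp learnable implies finite Littlestone} applied to $A'$ delivers $\mathrm{Ldim}(\calH) < \infty$, completing the proof. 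Note that countability of $\calX$ enters exactly once, through its use in \Cref{lem:TV stable implies dp-learner} to construct the data-independent reference measure $\calP$.
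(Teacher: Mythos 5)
Your proposal is correct and takes essentially the same route as the paper, which simply cites the combination of \Cref{lem:TV stable implies dp-learner} and \Cref{thm:dp learnable implies finite Littlestone}. You go further than the paper in one welcome respect: you explicitly resolve the self-referential constraint $\delta \le 1/(n_{\mathrm{DP}}(\delta)^2\log n_{\mathrm{DP}}(\delta))$ by observing that $n_{\mathrm{DP}}$ depends on $\delta$ only polylogarithmically, a point the paper's one-line proof glosses over.
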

\begin{proof}
    The proof follows directly by combining \Cref{lem:TV stable implies dp-learner} and \Cref{thm:dp learnable implies finite Littlestone}.
\end{proof}

\subsection{Going Beyond Countable $\calX$}\label{apx:dp beyond countable}
We now propose an approach that we believe can lead to a 
generalization of the algorithm beyond countable domains. 
The only change that we make in the algorithm has to do with \Cref{step:choice of refernce measure for DP}, where for every batch $j$ we pick $\calP_j = \frac{1}{k}\sum_{i=1}^k A(S_i^j).$ Notice that 
for every $j \in [k']$ the $\{A(S_i^j)\}_{i\in[k]}$ are absolutely continuous with respect to 
$\calP_j.$ However, it is not immediate now that the choice of $\{\calP_j\}_{j \in [k']}$
leads to a DP algorithm. We believe that it is indeed the case that the algorithm is approximately differentially private
and we leave it as in interesting open problem.

\section{Amplification and Boosting}\label{apx:amplification and boosting}

\subsection{The Proof of \Cref{lem:stability amplification}}
\label{apx:amplification of stability}
Let us first restate the
theorem along with the sample complexity of the algorithm.

\begin{theorem*}
[Indistinguishability Amplification]
Let $\calP$ be a reference probability measure over $\{0,1\}^\calX$ and $\calD$ be a distribution over inputs.
Consider the source of randomness $\calR$ to be a Poisson point process with intensity $\calP \times \mathrm{Leb} \times \mathrm{Leb},$ where
$\mathrm{Leb}$ is the Lebesgue measure over $\mathbb{R}_+$.
Consider a weak learning rule $A$ that is 
(i) $\rho$-$\TV$ indistinguishable with respect to $\calD$ for some $\rho \in (0,1)$,
(ii) $(\alpha,\beta)$-accurate for $\calD$ for $(\alpha, \beta) \in (0,1)^2, \beta < \frac{2\rho}{\rho+1} - 2\sqrt{\frac{2\rho}{\rho+1}} + 1$,  and, (iii)
 absolutely continuous with respect to $\calP$ on inputs from $\calD$.
Then, for any $\rho', \eps, \beta' \in (0,1)^3$, there exists an algorithm $\texttt{IndistAmpl}(A,\calR, \beta',\eps,\rho')$ (\Cref{algo:stability amplification}) that is $\rho'$-$\TV$ indistinguishable with respect to $\calD$ and $(\alpha+\eps,\beta')$-accurate for $\calD$. 
\smallskip
\\
Let $n_A(\alpha,\beta,\rho)$ denote the sample complexity of the weak learning rule $A$ with input $\beta',\eps,\rho'$. 
Then, the learning rule $\texttt{IndistAmpl}(A,\calR, \beta',\eps,\rho')$
uses
\[
        \wt{O}\left(\frac{\log^3\left(\frac{1}{\beta'}\right)}{\left(\frac{2\rho}{\rho+1} - 2\sqrt{\frac{2\rho}{\rho+1}} + 1 -\beta\right)^2\left(1-\sqrt{\frac{2\rho}{\rho+1}}\right)\eps^2\rho'^2} \cdot n_A(\alpha,\beta,\rho) \right) 
    \]
i.i.d. samples from $\calD$.
\end{theorem*}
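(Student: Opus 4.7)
The plan is to reduce TV indistinguishability to replicability, locate an accurate pseudo-heavy-hitter of the weak learner across multiple independent runs, and then replicably select the best candidate. First, invoke \Cref{lem:TV stability to replicability} (using the Poisson point process $\calR$) to transform $A$ into an equivalent $\rho''$-replicable learner $A'$, where $\rho'':=\tfrac{2\rho}{1+\rho}$; since the two rules induce the same output distribution on every input, $A'$ remains $(\alpha,\beta)$-accurate.

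The core structural step is to show that, under the hypothesis $\beta<(1-\sqrt{\rho''})^2$, some internal seed $r$ produces an \emph{accurate} heavy hitter with strictly positive probability. Applying \Cref{clm:repl implies pseudo-global stability} with $\nu=\sqrt{\rho''}$, for at least a $1-\sqrt{\rho''}$ fraction of $r$ there exists $h_r$ with $\Pr_{S\sim\calD^n}[A'(S,r)=h_r]\geq 1-\sqrt{\rho''}$. Calling such $r$ \emph{good}, the joint event $\{A'(S,r)=h_r \text{ and } r \text{ good}\}$ has probability at least $(1-\sqrt{\rho''})^2$ over $(S,r)$. Combining this with $\Pr_{S,r}[A'(S,r)\text{ accurate}]\geq 1-\beta$ via inclusion--exclusion gives
\[
\Pr_{S,r}\bigl[A'(S,r)=h_r,\; r\text{ good},\; A'(S,r)\text{ accurate}\bigr]\;\geq\; (1-\sqrt{\rho''})^2-\beta\;>\;0,
\]
which forces, for a strictly positive fraction $p^\star\geq\tfrac{(1-\sqrt{\rho''})^2-\beta}{1-\sqrt{\rho''}}$ of $r$, the heavy hitter $h_r$ itself to be $\alpha$-accurate.

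The algorithm $\texttt{IndistAmpl}(A,\calR,\beta',\eps,\rho')$ then has two phases. In the outer phase, sample $T=\Theta(\log(1/\beta')/p^\star)$ independent shared seeds $r_1,\ldots,r_T$ for $A'$; for each $r_i$, obtain i.i.d.\ samples from the distribution of $A'(\,\cdot\,,r_i)$ by drawing fresh batches $S\sim\calD^n$, and run the replicable heavy-hitters routine of \Cref{lem:replicable heavy hitters} with threshold just below $1-\sqrt{\rho''}$, confidence $\beta'/(4T)$, and replicability $\rho'/(2T)$, yielding a list $L_i$. Set $L=\bigcup_i L_i$. By the structural step and a union bound, with probability at least $1-\beta'/4$ over $\{r_i\}$ some $L_i$ contains an $\alpha$-accurate hypothesis. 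In the selection phase, run the replicable agnostic learner of \Cref{clm:replicable agnostic learner} on the finite class $L$ with accuracy $\eps$, confidence $\beta'/4$, and replicability $\rho'/2$, and output its hypothesis. Two executions with shared randomness see the same $\{r_i\}$, the same lists by replicability of heavy hitters, and hence the same final hypothesis by replicability of the agnostic learner; summing contributions yields total replicability $\leq\rho'$ and total failure probability $\leq\beta'$. Converting back via \Cref{fact:replicability implies tv stability} gives the required $\rho'$-TV indistinguishability.

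The main obstacle is the structural lemma that explains the sharp parameter window $\beta<(1-\sqrt{\rho''})^2$. The quadratic $(1-\sqrt{\rho''})^2$ is not avoidable: one factor of $1-\sqrt{\rho''}$ bounds the probability that $r$ is good, and a second factor is the conditional probability that the random sample $S$ actually realises $h_r$; the joint event must still fit inside the $1-\beta$ accuracy budget, which is precisely why the regime appears as it does. The sample-complexity bookkeeping is then routine: $T$ outer repetitions, each heavy-hitters call consumes $\widetilde{O}(\log(1/\beta')/\rho'^2)$ meta-samples (each costing $n_A(\alpha,\beta,\rho)$ data points), and the replicable agnostic learner over $L$ contributes an additional $\mathrm{poly}(1/\eps,1/\rho',\log(1/\beta'))$ factor, combining to the stated bound.
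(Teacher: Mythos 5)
Your proposal follows essentially the same route as the paper: reduce to replicability via the Poisson-process coupling of \Cref{lem:TV stability to replicability}, invoke \Cref{clm:repl implies pseudo-global stability} with $\nu=\eta=\sqrt{\rho''}$ to locate accurate pseudo-heavy-hitters over random seeds, run replicable heavy hitters per seed, and finish with the replicable agnostic learner (your unioning of the $L_i$ versus the paper's per-list filter-and-select is a cosmetic difference). The one step that does not quite hold as written is the jump from the joint-event bound to the fraction $p^\star$ of good $r$. By inclusion--exclusion you correctly get $\Pr_{S,r}[A'(S,r)=h_r,\, r \text{ good},\, A'(S,r) \text{ accurate}] \geq (1-\sqrt{\rho''})^2-\beta$; since this joint event implies the $r$-measurable event $\{r \text{ good},\, h_r \text{ accurate}\}$, you only obtain $p^\star \geq (1-\sqrt{\rho''})^2-\beta$, not the stronger $p^\star \geq \tfrac{(1-\sqrt{\rho''})^2-\beta}{1-\sqrt{\rho''}}$ you assert. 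The tighter bound is in fact correct, but it requires the paper's different argument: among the $\geq 1-\sqrt{\rho''}$ fraction of good $r$, at most a $\tfrac{\beta}{1-\sqrt{\rho''}}$ fraction can have $\mathrm{err}(h_r)>\alpha$, because any such $r$ contributes $\Pr_S[\mathrm{err}(A'(S,r))>\alpha]\geq 1-\sqrt{\rho''}$ to the overall accuracy-failure budget $\beta$, and subtracting gives $p^\star \geq (1-\sqrt{\rho''}) - \tfrac{\beta}{1-\sqrt{\rho''}} = \tfrac{(1-\sqrt{\rho''})^2-\beta}{1-\sqrt{\rho''}}$. Your looser bound is still strictly positive in the stated parameter regime, so the construction goes through with $T$ inflated by a $\tfrac{1}{1-\sqrt{\rho''}}$ factor, which is absorbed by the $\widetilde{O}$ — but you should replace the inclusion--exclusion step with the accuracy-budget argument if you want the constant you wrote down to be justified.
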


\begin{algorithm}[ht!]
\caption{Amplification of Indistinguishability Guarantees}
\label{algo:stability amplification}
\begin{algorithmic}[1]
\State \texttt{Input: Black-box access to $(\alpha,\beta)$-accurate 
$\rho$-TV Indistinguishable Learner $A$, Sample access to $\calD$, Access to Poisson point process $\calR$ with
    intensity $\calP \times \mathrm{Leb} \times \mathrm{Leb}$} \Comment{$\calP$ is the reference probability measure from \Cref{clm:countable X reference measure}.}
\State \texttt{Parameters: $\beta', \eps, \rho'$}
\State \texttt{Output: Classifier $h: \calX \rightarrow \{0,1\}$ }
 \State $\eta, \nu \gets \sqrt{\frac{2\rho}{1+\rho}}, \sqrt{\frac{2\rho}{1+\rho}}$
 \State $\calP \gets $ data-independent reference probability measure from \Cref{clm:countable X reference measure}
 \State $k \gets \frac{\log(3/\beta')}{1-\nu-\beta/(1-\eta)}$
\State $r_i \gets $ an infinite sequence of the Poisson Point Process $\calR$, $\forall i \in [k]$ \Comment{cf. \Cref{thm:pairwise opt coupling protocol}.} 
\State $\calD_{r_i} \gets $ the distribution of hypotheses that is induced by $A(S,r_i)$ when $S \sim \calD^n, \forall i \in [k]$
\State $L_i \gets \mathrm{HeavyHitters}\left(\calD_{r_i}, \frac{3}{4}(1-\eta), \frac{1}{4}(1-\eta), \rho'/(2k),\beta'/(3k)\right), \forall i \in [k]$ \Comment{\Cref{algo:replicable heavy hitters}.}
\State $\left(\hat{h}_i, \widehat{\mathrm{err}}(\hat{h}_i)\right) \gets \mathrm{AgnosticLearner}(L_i, \eps/2, \rho'/(2k), \beta'/(3k)), \forall i \in [k]$ \Comment{\Cref{algo:replicable agnostic learner}.}
\For{$i \gets 1$ to $k$}
        \If{$\widehat{\mathrm{err}}(\hat{h}_i) \leq \alpha + \eps/2$}
            \State Output $\hat{h}_i$
        \EndIf
      \EndFor
\State Output the all $1$ classifier
\end{algorithmic}
\end{algorithm}

\begin{proof}
    Since $A$ is $\rho$-TV indistinguishable there is an equivalent learning rule
    $A'$ that is
    $\frac{2\rho}{1+\rho}$-replicable (cf. \Cref{lem:TV stability to replicability}) 
    and uses randomness $\calR$, where $\calR$ is a Poisson point process with
    intensity $\calP \times \mathrm{Leb} \times \mathrm{Leb}$, with $\mathrm{Leb}$
    being the Lebesgue measure over $\reals_+.$
    Let 
    \[
          \calR_\eta = \left\{r \in \calR: \exists h \in \calH \text{ s.t. } \Pr_{S \sim \calD^n}[A'(S,r) = h] \geq 1-\eta \right\} \,,
    \]
  We have that $\Pr_{r \sim \calR}[r \in \calR_\eta] \geq 1 - \nu,$ for
 $\eta = \frac{\frac{2\rho}{1+\rho}}{\nu}, \nu \in \left[\frac{2\rho}{1+\rho},1\right)$ (cf. \Cref{clm:repl implies pseudo-global stability}). For each $r \in \calR_\eta$ let $h_r \in \calH$
    be an element that witnesses its inclusion in $\calR_\eta$\footnote{If there are multiple such elements then we pick an arbitrary one using a consistent rule.}.
    Notice that since $A'$ is $(\alpha,\beta)$-accurate
    there is at most a $\frac{\beta}{1-\eta}$-fraction of $r \in \calR$
    such that $r \in \calR_\eta, \mathrm{err}(h_r) > \alpha.$ 
    Let $\calR_\eta^* = \left\{r \in \calR_\eta: \mathrm{err}(h_r) \leq \alpha \right\}.$ Now notice that $\Pr_{r \sim \calR}[r \in \calR_\eta^*] \geq 1-\nu - \frac{\beta}{1-\eta}.$ Thus, by picking $k = \frac{\log(3/\beta')}{1-\nu-\beta/(1-\eta)}$ i.i.d. samples from $\calR$
    we have that with probability at least $1 - \beta'/3$ there will be some $r_{i^*} \in \calR_\eta^*.$
    We denote this event by $\calE_1$ and we condition on it for the rest of the proof.

    Let us now focus on the call to the replicable heavy hitters subroutine. We have that,
    with probability at least $1-\beta'/(3k)$, every call will return a list that contains
    all the $(1-\eta)$-heavy-hitters and no elements whose mass is less than $(1-\eta)/2$.
    By a union bound, this happens with probability at least $1-\beta'/3$ for all the calls.
    Let us call this event $\calE_2$ and condition on it for the rest of the proof.
    Notice that under these two events, the list $L_{i^*}$ that corresponds to $r_{i^*}$ will
    be non-empty and will contain a classifier whose error is at most $\alpha.$

    We now consider the calls to the replicable agnostic learner. Notice that every list
    that this algorithm takes as input has size at most $\frac{2}{1-\eta}.$ Moreover, 
    with probability at least $1-\beta/3'$, the estimated 
    error of every classifier will be at most $\eps/2$ 
    away from its true error. We call this event $\calE_3$ and condition on it. Hence, for any $\hat{h}_j, j\in [k],$
    that passes the test in the ``if'' statement, 
    we have that $\mathrm{err}(\hat{h}_j) \leq \alpha + \eps.$
    In particular, the call to $L_{i^*}$ will return $\hat{h}_{i^*}$, with estimated error $\widehat{\mathrm{err}}(\hat{h}_i^*) \leq \alpha + \eps/2$, which means that $\mathrm{err}(\hat{h}_i^*) \leq \alpha + \eps.$ Hence, 
    the algorithm will such a classifier and, by a union bound, the total probability that this 
    event happens is at least $1-\beta'.$

    The replicability of the algorithm follows from a union bound over the replicability of
    the calls to the heavy hitters and the agnostic learner (cf. \Cref{lem:replicable heavy hitters}, \Cref{clm:replicable agnostic learner}). {In particular, since we call the replicable heavy hitters algorithm $k$ times with replicability parameter $\rho'/(2k)$ and the replicable agnostic leaner $k$ times with replicability parameter $\rho'/(2k)$, we know that with probability at least $1-\rho'$ all these calls will return the same output across two executions of the algorithm.}

    For the sample complexity notice that each call 
    to the replicable heavy hitters algorithm requires $O\left( \frac{k^2\log(k/\beta'(1-\eta))}{(1-\eta)^3\rho'^2}\right)$ (cf. \Cref{lem:replicable heavy hitters}.)
    Under the events we have conditioned on, we see that $|L_i| = O(1/(1-\eta)), \forall i \in [k]$, hence each call to the agnostic learner requires
    $O\left(\frac{k^2}{(1-\eta)^3\eps^2\rho'^2} \log\left(\frac{k(1-\eta)}{\beta'}\right)\right)$ (cf. \Cref{clm:replicable agnostic learner}).
    Substituting the value of $k$ gives us that
    the sample complexity is at most
    \[
        O\left(\frac{\log^3\left(\frac{\log(1/\beta')}{\beta'\left((1-\eta)(1-\nu)-\beta\right)}\right)}{\left((1-\eta)(1-\nu)-\beta\right)^2(1-\eta)\eps^2\rho'^2} \right) \,.
    \]
    Plugging in the values of $\eta, \nu$ we get the stated bound.
\end{proof}

\subsection{The Proof of \Cref{thm:boosting algorithm}}
Let us first recall the result we need to prove
along with its sample complexity.

\begin{theorem*}
[Accuracy Boosting]
Let $\calP$ be a reference probability measure over $\{0,1\}^\calX$ and $\calD$ be a distribution over inputs.
Consider the source of randomness $\calR$ to be a Poisson point process with intensity $\calP \times \mathrm{Leb} \times \mathrm{Leb},$ where
$\mathrm{Leb}$ is the Lebesgue measure over $\reals_+$.
Consider a weak learning rule $A$ that is 
(i) $\rho$-$\TV$ indistinguishable with respect to $\calD$ for some $\rho \in (0,1)$,
(ii) $(1/2-\gamma,\beta)$-accurate for $\calD$ for some $\gamma \in (0,1/2), \beta \in \left(0,\frac{2\rho}{\rho+1} - 2\sqrt{\frac{2\rho}{\rho+1}} + 1\right)$, and,
(iii) absolutely continuous with respect to $\calP$ on inputs from $\calD$.
Then, for any $\rho', \eps, \beta' \in (0,1)^3$, there exists an algorithm $\texttt{IndistBoost}(A,\calR, \eps)$ (\Cref{algo:boosting}) that is $\rho'$-$\TV$ indistinguishable with respect to $\calD$ 
and $(\eps,\beta')$-accurate for $\calD$.
\smallskip
\\
If $n_A(\gamma,\beta,\rho)$ is the sample complexity of the weak learning rule $A$ with input $\gamma,\beta,\rho$, then $\texttt{IndistBoost}(A,\calR, \eps)$
uses
\[
\wt{O}\left( \frac{n_A(\gamma, \beta' \eps \gamma^2/6,  \rho \eps \gamma^2/(3(1+\rho))) \log(1/\beta')}{\eps^2 \gamma^2} + \frac{\log(1/\beta')}{(2\rho/(1+\rho))^2 \eps^3 \gamma^2} \right)
\]
i.i.d. samples from $\calD$.
\end{theorem*}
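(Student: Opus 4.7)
The high-level plan is to reduce the problem to boosting a \emph{replicable} weak learner, and then appeal to \Cref{thm:replicability-implies-TV-ind} to conclude TV indistinguishability of the resulting strong learner.

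First, I would invoke \Cref{lem:TV stability to replicability} on the given weak learner $A$ (using the reference measure $\calP$ and the Poisson point process $\calR$ with intensity $\calP\times\mathrm{Leb}\times\mathrm{Leb}$) to obtain an equivalent learner $A'$ which is $\tfrac{2\rho}{1+\rho}$-replicable; equivalence preserves $(1/2-\gamma,\beta)$-accuracy. This puts us in exactly the setting of replicable weak-to-strong boosting as studied in \cite{impagliazzo2022reproducibility}. Note that $\beta$ is in the regime where \Cref{lem:stability amplification} applies, so the indistinguishability parameter of $A'$ can if needed be amplified to any desired target without losing the weak accuracy guarantee.

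Next, I would run a variant of the replicable boosting scheme of \cite{impagliazzo2022reproducibility} using $A'$ as the weak learner. Concretely, the procedure maintains a distribution $\calD_t$ over the labeled examples which upweights points misclassified by the current mixture. At each of $T = \Theta(\log(1/\eps)/\gamma^2)$ rounds it (i) calls $A'$ on a sample drawn from $\calD_t$ with replicability parameter $\rho_0 = \Theta(\rho\eps\gamma^2/(1+\rho))$ and confidence $\beta_0 = \Theta(\beta'\eps\gamma^2)$, (ii) uses the replicable SQ oracle of \Cref{thm:replicable sq learner} with accuracy $\Theta(\eps\gamma)$ and replicability $\Theta(\rho_0)$ to estimate the weighted error of the returned hypothesis (which dictates its voting weight and the reweighting of $\calD_{t+1}$), and (iii) at the end outputs the smooth threshold majority vote of the weak hypotheses, where the threshold itself is drawn uniformly from a small interval around $1/2$. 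The two terms in the claimed sample complexity correspond, respectively, to the per-round cost of $A'$ (multiplied by $T = \wt{O}(1/(\eps\gamma^2))$ rounds) and to the cost of the replicable SQ estimates used across all rounds. The improvements over \cite{impagliazzo2022reproducibility} stem from replacing the Hoeffding-based estimators with the Bretagnolle–Huber–Carol style concentration used in \Cref{lem:replicable heavy hitters} and from calibrating the confidence parameters via Markov-to-high-probability conversion (\Cref{clm:one-sided TV stable iff high-probability TV stable}), which yields polylogarithmic rather than polynomial dependence on $1/\beta'$.

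Standard analysis of the boosting potential function shows that the empirical error of the mixture decays as $\exp(-\Omega(\gamma^2 T))$ so that $T$ rounds suffice for empirical error $\eps/2$; a generalization bound for the finite class of mixture hypotheses (whose VC dimension grows like $T$) then upgrades this to $\eps$ on the population with confidence $1-\beta'$. Replicability of the overall procedure follows from a union bound over the $T$ replicable calls to $A'$ and the $O(T)$ replicable SQ estimates, together with the randomized smooth threshold, which absorbs the small per-round drift between two coupled executions; tuning all the per-round replicability parameters to $\Theta(\rho'/T)$ yields total replicability $\rho'$. Finally, by \Cref{thm:replicability-implies-TV-ind}, the resulting strong $\rho'$-replicable $(\eps,\beta')$-accurate learner is in particular $\rho'$-TV indistinguishable, which is what is claimed.

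The main obstacle is the careful bookkeeping of parameters so that (a) the per-round replicability degradation composes to exactly $\rho'$, (b) the per-round confidence composes to $\beta'$ while only appearing polylogarithmically in the final sample complexity, and (c) the smooth threshold trick is analyzed so that small changes in the per-round weak hypothesis probabilities (controlled by the per-round replicability) translate into at most $\rho'$ total disagreement probability between two coupled executions despite the $T$-round composition. The stronger concentration inequality needed to get the improved $\log(1/\beta')$ dependence, rather than $\mathrm{poly}(1/\beta')$, is the other place where some care is required, but this is exactly the improvement already exploited in \Cref{lem:replicable heavy hitters}.
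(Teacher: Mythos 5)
Your overall strategy matches the paper's proof: convert the $\rho$-TV-indistinguishable weak learner into an equivalent $\tfrac{2\rho}{1+\rho}$-replicable learner $A'$ via \Cref{lem:TV stability to replicability} (implemented through the Poisson-point-process coupling), run a Servedio-style smooth boosting scheme using $A'$ as the replicable weak learner with per-round replicability and confidence parameters scaled down by the number of rounds $T$, and then conclude TV indistinguishability of the strong learner from its replicability via \Cref{thm:replicability-implies-TV-ind}. The claimed sample complexity is then obtained by bookkeeping, exactly as you sketch.

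Two of your details are off, though neither is fatal. First, the round count: you write $T = \Theta(\log(1/\eps)/\gamma^2)$ in the description and then $T = \wt{O}(1/(\eps\gamma^2))$ in the sample-complexity accounting; only the latter is correct (the paper sets $T = 100/(\eps\gamma^2)$, as required by the smooth boosting of \cite{servedio2003smooth}, and this is what produces the $\eps^2$ in the denominator of the first term). Your AdaBoost-flavored picture (per-round weighted votes, decay $\exp(-\Omega(\gamma^2 T))$) is the wrong template here; the paper's \Cref{algo:boosting} uses measure-based smooth boosting with an unweighted $\mathrm{sgn}(\sum_i h_i)$ output, rejection sampling against the current measure $\mu_t$, and a replicable SQ test of whether $\mu_t$ has small expected value as the stopping criterion — there is no per-round voting weight nor a randomized output threshold. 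Second, your appeal to the Bretagnolle--Huber--Carol inequality and the heavy-hitters subroutine is misplaced: those improvements are for the amplification algorithm (\Cref{lem:stability amplification} via \Cref{algo:replicable heavy hitters}); the boosting proof obtains the polylogarithmic $\log(1/\beta')$ dependence simply by drawing $n_w/\eps\cdot\log(T/\beta')$ samples for rejection sampling and setting the per-round confidence and replicability to $\beta'/(3T)$ and $\rho'/(6T)$ before union-bounding over the $T$ rounds.
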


\begin{algorithm}[ht!]
\caption{Boosting of Accuracy Guarantee}
\label{algo:boosting}
\begin{algorithmic}[1]
\State \texttt{Input: Black-box access to weak $(\frac{1}{2}-\gamma,\beta)$-accurate 
$\rho$-TV Indistinguishable Learner $A$, Sample $S \sim \calD^n$, Access to Poisson point process $\calR$ with intensity $\calP \times \text{Leb} \times \text{Leb}$} \Comment{$\calP$ is the reference probability measure from \Cref{clm:countable X reference measure}.}
\State \texttt{Target : $\eps, \beta'$}
\State \texttt{Output: Classifier $h: \calX \rightarrow \{0,1\}$
 }
 
\State \texttt{IndistBoost()} \Comment{This algorithm appears in \cite{impagliazzo2022reproducibility}}
\State $\rho' = 2\rho/(1+\rho)$
\State $T = 100/(\eps \gamma^2)$
\State $\mu_1(x) = 1$
\State $n_{w} = n_A\left(\gamma,\frac{\beta'}{3T},\frac{\rho'}{6T}\right)$
\For{$t = 1..T$}
    \State~~~~~$\calD_{\mu_t}(x) = \frac{\mu_t(x) \calD_X(x)}{d(\mu_t)}$
    \State~~~~~$S_t \gets n_w/\eps \cdot \log(T/\beta')$
    \State~~~~~$S_t' \gets \texttt{RejectionSampling}\left(S_t, n_w, \mu_t, \calR_t^{(1)}\right)$
    \State~~~~~$h_t \sim A\left(S_t', \calR_t^{(2)}\right)$
    \State~~~~~Update $\mu_{t+1}(x)$ using smooth boosting trick of \cite{servedio2003smooth}.
    \State~~~~~Draw $S_t'' = O(1/(\rho'^2 \eps^3 \gamma^2))$ i.i.d. samples from $\calD$
    \State~~~~~If $\texttt{IndistingTestMeasure}\left(\mu_{t+1}, S_t'', \calR_t^{(3)}, \rho'/(3T), \beta'/(3T)\right) \leq 2\eps/3$ then \textbf{output} $\sgn\left(\sum_i h_i\right)$
\EndFor

\State \texttt{RejectionSampling}$\left(S_{\mathrm{in}}, \mathrm{size\_out}, \mu, \calR\right)$ 
\State $S_{\mathrm{out}} = \emptyset$
\For{$(x,y) \in S_{\mathrm{in}}$}
\State~~~~~Pick $b \in [0,1]$ using $\calR$
\State~~~~~If $\mu(x) \geq b$ then $S_{\mathrm{out}} \gets \mathrm{append}(S_{\mathrm{out}},(x,y))$
\State~~~~~If $|S_{\mathrm{out}}| > \mathrm{size\_out}$ then \textbf{output} $S_{\mathrm{out}}$
\EndFor
\State \texttt{IndistingTestMeasure($\mu, S, \calR, \rho', \beta$)}
\State Call Algorithm 1 in \cite{impagliazzo2022reproducibility} (see \Cref{thm:replicable sq learner}) with source of randomness $\calR$ and dataset $S$, error $\eps/3$, confidence $\beta$, replicability $\rho$ and query function $\mu$
\end{algorithmic}
\end{algorithm}

\begin{proof}[Proof of \Cref{thm:boosting algorithm}]
In the sample complexity bound of \Cref{thm:boosting algorithm}, we remark that the first term is the number of samples used by the $\texttt{RejectionSampling}$ mechanism (appearing in \cite{impagliazzo2022reproducibility}) in the $T$ rounds and the second term controls the number of samples used for the $\texttt{IndistingTestMeasure}$ procedure (appearing in \cite{impagliazzo2022reproducibility}) for the $T$ rounds (see \Cref{algo:boosting}).
Let $[T] = \{1,...,T\}$.
As in \cite{impagliazzo2022reproducibility}[Theorem 6.1], we consider that the shared randomness between the two executions
consists of a collection of $3T$ tapes with uniformly random bits. We denote the $j$-th tape in round $t$ by $\calR_t^{(j)}$ for $j \in [3]$ and $t \in [T]$. Since $A$ is $n$-sample $\rho$-TV indistinguishable there is an equivalent learning rule $A'$ that is $n$-sample $\frac{2\rho}{1+\rho}$-replicable (cf. \Cref{lem:TV stability to replicability}) and uses randomness $\calR$, where $\calR$ is a Poisson point process with intensity $\calP \times \mathrm{Leb} \times \mathrm{Leb}$, with $\mathrm{Leb}$     being the Lebesgue measure over $\reals_+$. Let us set $\rho' = 2\rho/(1+\rho)$. The boosting algorithm that we provide below interprets the random strings as follows: for any $t \in [T]$, we set $\calR_t^{(2)} = \calR$ (these will be the tapes used by the equivalent learning algorithm $A'$) and the remaining tapes $\calR_t^{(j)}$ corresponds to random samples from the uniform distribution in $[0,1]$ for $j \in \{1,3\}$ (these will be the tapes used by our sub-routines $\texttt{RejectionSampling}$ and $\texttt{IndistingTestMeasure}$.


The boosting algorithm works as follows:
\begin{enumerate}
    \item As in \cite{servedio2003smooth}, it uses a measure $\mu_t$ to assign different scores to points of $\calX$. First, $\mu_1(x) = 1$ for any point.
    We will not delve into the details on how this step works. For details we refer to \cite{servedio2003smooth} (as in \cite{impagliazzo2022reproducibility} since this step is not crucial for the proof).
    \item At every round $t$, the algorithm performs rejection sampling on a fresh dataset $S_t$ using the routine $\texttt{RejectionSampling}$. This algorithm is TV indistinguishable since it uses the source of randomness $\calR_t^{(1)}$ that provides uniform samples in $[0,1]$ (it is actually replicable). 
    \item The part of the dataset that was accepted from this rejection sampling process is given to replicable learner $A'$, which is equivalent to the TV indistinguishable weak learner $A$. This algorithm uses the shared Poisson point process $\calR_t^{(2)}$ with intensity $\calP \times \text{Leb} \times \text{Leb},$ where $\calP$ is the reference probability measure
    from \Cref{clm:countable X reference measure}, and outputs the same hypothesis with probability $1-\rho'/(6T)$.
    \item Then we use the smooth update rule of \cite{servedio2003smooth} to design the new measure $\mu_{t+1}$ for the upcoming iteration. This step is deterministic. 
    \item Last we check whether the boosting procedure is completed. To this end, we check whether $\mu_t$ is in expectation small. This step again uses a uniformly random threshold in $[0,1]$ and so makes use of the source $\calR_t^{(3)}$.
\end{enumerate}

The algorithm runs for $T = \frac{C}{\eps \gamma^2}$ rounds for some numerical constant $C > 0$. Hence, we will assume access to $3T$ tapes of randomness, $T$ with points from the Poisson point process and $2T$ with uniform draws from $[0,1]$. The correctness of the algorithm follows from \cite{servedio2003smooth} and \cite{impagliazzo2022reproducibility}[Theorem 6.1]. As for the TV indistinguishability,
this is implied by the replicability of the whole procedure.
We have that the weak learner $A'$ is called $T$ times with TV indistinguishability parameter $\rho'/(6T)$, the rejection sampler is called $T$ times so that it outputs $\perp$ with probability $\rho'/(6T)$ and the indistinguishable measure tester is $\rho'/(3T)$-$\TV$ indistinguishable and called $T$ times. A union bound gives the desired result. For further details, we refer to \cite{impagliazzo2022reproducibility} since the analysis is essentially the same.

For the failure probability $\beta'$, the algorithm can fail if the rejection sampling algorithm outputs $\perp$, if the weak learner fails, and if the replicable SQ oracle (\Cref{thm:replicable sq learner}) fails. 
We have that the probability that the rejection sampling gives $\perp$ using $n_w/\eps \cdot \log(T/\beta')$ is at most $\beta'/T$ (which can be considered much smaller than $\rho'
/(6T)$).
Since each one of the three probabilities are upper bounded by $\beta'/(3T)$, the indistinguishable boosting algorithm succeeds with probability $1-\beta'$.
\end{proof}

\subsection{Tight Bound Between $\beta, \rho$}\label{apx:boosting tight bound}
As we alluded before, \Cref{lem:TV stable implies dp-learner}
shows that if we have a $\rho$-TV indistinguishable $(\alpha,\beta)$-accurate learner with $\rho \in (0,1), \alpha \in (0,1/2), \beta \in \left(0,\frac{1-\rho}{1+\rho}\right),$ then the class $\calH$ has finite Littlestone dimension. The
reason we need $\beta \in \left(0,\frac{1-\rho}{1+\rho}\right)$ is because, in expectation over the random draws
of the samples and the randomness of the coupling, this is the fraction of the executions of the algorithm that
will give the same output. The results of \cite{angel2019pairwise} show that under certain conditions, if we want
to couple $k$ random variables whose pairwise TV distance is at most $\rho$, then under the pairwise
optimal coupling the probability that the realization of a pair of them differs is $\frac{2\rho}{1+\rho}.$
However, it is unclear what the implication of this result is in the setting we are interested in.

\subsection{Beyond Countable $\calX$}\label{apx:boosting beyond countable domains}
The barrier to push our approach beyond countable $\calX$ is very closely related
to the one we explained in the DP section. To be more precise, it is not clear how one can design
a data-independent reference probability measure $\calP$ when $\calX$ is uncountable. Hence,
one idea would be to use some \emph{data-dependent} probability measure $\calP$. This would affect our algorithm
in the following way: instead of first sampling the random Poisson point process sequence independently of the data,
we first sample $S_1,\ldots,S_k$ and let the reference probability measure be $\calP = \frac{1}{k}\sum_{i=1}^k A(S_i).$
The difficult step is to show that this algorithm is TV indistinguishable. When we consider a 
different execution of the algorithm we let $S_1',...,S_k'$ be the new samples and
$\calP' = \frac{1}{k}\sum_{i=1}^k A(S'_i)$ be the new reference probability measure. A natural approach
to establish the TV indistinguishability property of the algorithm is to try to couple $\calP, \calP'$
and show that under this coupling, the expected TV distance of two executions of the new algorithm is small.
We leave this question open for future work.

\section{TV Indistinguishability and Generalization}
\label{app:gen}
 Recall that in \Cref{prop:ind implies generalization} we claimed that the generalization bound can shave the dependence on the VC dimension by paying an overhead in the confidence parameter. A similar result appears in \cite{impagliazzo2022reproducibility} relating replicability to generalization.  
 We now present its proof.
 
\begin{proof}
[Proof of \Cref{prop:ind implies generalization}]
Let $S$ be a sample from $\calD^n$. 
Since $A$ is $\rho$-TV indistinguishable, it is also 
$\rho$-fixed prior TV indistinguishable and let $\calP_\calD$ be the sample-independent prior. Consider two samples $h_1 \sim A(S)$ and $h_2 \sim \calP_\calD$. We consider the following quantities:
\begin{itemize}
    \item $\wh{L}(h_1) = \frac{1}{n} \sum_{(x,y) \in S}\mathbbm{1}\{h_1(x) \neq y\}$ is the empirical loss of $h_1$ in $S$.
    \item $\wh{L}(h_2) = \frac{1}{n} \sum_{(x,y) \in S}\mathbbm{1}  \{h_2(x) \neq y\}$ is the empirical loss of $h_2$ in $S$.
    \item $L(h_1) = \Pr_{(x,y) \sim \calD}[h_1(x) \neq y]$ is the population loss of $h_1$ with respect to $\calD$.
\end{itemize}
We will show that all these three quantities are close to each other.
First, 
let us consider the space of measurable functions $\calF = \{f : \|f\|_\infty \leq 1\}$.
We have that
\[
\dtv(P,Q) = \sup_{f \in \calF} \left| \E_{x \sim P}[f(x)] - \E_{x \sim Q}[f(x)] \right|\,.
\]
This means that the total variation distance between two distributions is essentially the worst case bounded distinguisher $f$. Since $\wh{L} :\{0,1\}^\calX \to [0,1]$, we have that
\[
\left |\E_{h_1 \sim A(S)} \left[\wh{L}(h_1)\right] - \E_{h_2 \sim \calP_\calD} \left[\wh{L}(h_2)\right] \right| \leq \dtv(A(S), \calP_\calD)\,.
\]
Similarly, we get that
\[
\left |\E_{h_1 \sim A(S)} \left[L(h_1)\right] - \E_{h_2 \sim \calP_\calD} \left[L(h_2)\right] \right| \leq \dtv(A(S), \calP_\calD)\,.
\]
Now, since $A$ is $\rho$-fixed prior TV indistinguishable, using Markov's inequality, 
we have that $\forall \eps_1 > 0,$
\[
\E_{S \sim \calD^n} \left[\left |\E_{h_1 \sim A(S)} \left[\wh{L}(h_1)\right] - \E_{h_2 \sim \calP_\calD} \left[\wh{L}(h_2)\right] \right|\right] \leq \rho \Rightarrow 
\Pr_{S \sim \calD^n}
\left[
\left|\E_{h_1 \sim A(S)} \left[\wh{L}(h_1)\right] - \E_{h_2 \sim \calP_\calD} \left[\wh{L}(h_2)\right] \right| > \eps_1
\right] 
\leq \frac{\rho}{\eps_1}\,.
\]
In a similar manner, we get
\[
\Pr_{S \sim \calD^n}
\left[
\left|\E_{h_1 \sim A(S)} \left[L(h_1)\right] - \E_{h_2 \sim \calP_\calD} \left[L(h_2)\right] \right| > \eps_1 
\right] 
\leq \frac{\rho}{\eps_1}
\]
We note that, since $\calP_\calD$ is sample-independent, we have that the statistic 
\[\E_{h_2 \sim \calP_\calD}[\wh{L}(h_2)] = \frac{1}{n}\sum_{(x,y) \in S}\Pr_{h_2 \sim \calP_\calD}[h_2(x) \neq y]\]
is a sum of independent random variables with expectation
$\E_{h_2 \sim \calP_\calD}[L(h_2)]$.
We can use standard concentration of independent random variables and get
\[
\Pr_{S \sim \calD^n}
\left[\left|
\E_{h_2 \sim \calP_\calD}\left[\wh{L}_S(h_2)\right]
-
\E_{h_2 \sim \calP_\calD}\left[L_\calD(h_2)\right] \right|
\geq \eps_2
\right] 
\leq 2e^{-2n \eps_2^2}\,,
\]
for any $\eps_2 > 0.$ This means that
\[
\Pr_{S \sim \calD^n}
\left[
\left|\E_{h_1 \sim A(S)}\left[\wh{L}_S(h_1)\right]
-
\E_{h_1 \sim A(S)}\left[L_\calD(h_1)\right] \right|
\geq 2 \varepsilon_1 + \varepsilon_2
\right] 
\leq 2\rho/\eps_1 + 2e^{-2n \eps_2^2} \,,
\]
so we have that, with probability at least $1 - 4\rho/\eps - \delta$,
\[
    \left|\E_{h_1 \sim A(S)}\left[\wh{L}_S(h_1)\right]
-
\E_{h_1 \sim A(S)}\left[L_\calD(h_1)\right] \right| \leq \eps + \sqrt{\frac{\ln(2/\delta)}{2n}} \,.
\]
We note that we obtain the result of \Cref{prop:ind implies generalization} by taking $\eps = \sqrt{\rho}$.
\end{proof}

\end{document}